\theoremstyle{plain}
\newtheorem{theorem}{Theorem}[section]
\newtheorem{lemma}[theorem]{Lemma}
\theoremstyle{definition}
\newtheorem{definition}[theorem]{Definition}
\theoremstyle{remark}
\def\eqref#1{equation~\ref{#1}}
\def\1{\bm{1}}
\DeclareMathAlphabet{\mathsfit}{\encodingdefault}{\sfdefault}{m}{sl}
\SetMathAlphabet{\mathsfit}{bold}{\encodingdefault}{\sfdefault}{bx}{n}
\def\gA{{\mathcal{A}}}
\def\gD{{\mathcal{D}}}
\def\gE{{\mathcal{E}}}
\def\gF{{\mathcal{F}}}
\def\gL{{\mathcal{L}}}
\def\gN{{\mathcal{N}}}
\def\gS{{\mathcal{S}}}
\def\gT{{\mathcal{T}}}
\def\sR{{\mathbb{R}}}
\def\sV{{\mathbb{V}}}
\newcommand{\E}{\mathbb{E}}
\newcommand{\KL}{D_{\mathrm{KL}}}
\DeclareMathOperator*{\argmax}{arg\,max}
\DeclareMathOperator*{\argmin}{arg\,min}
\begin{document}
\normalem
\twocolumn[
\icmltitle{
Energy-Guided Diffusion Sampling for Offline-to-Online Reinforcement Learning
}



\icmlsetsymbol{equal}{*}

\begin{icmlauthorlist}
\icmlauthor{Xu-Hui Liu}{equal,nju}
\icmlauthor{Tian-Shuo Liu}{equal,nju,polixir}
\icmlauthor{Shengyi Jiang}{hku}
\icmlauthor{Ruifeng Chen}{nju,polixir}
\icmlauthor{Zhilong Zhang}{nju,polixir}
\icmlauthor{Xinwei Chen}{polixir}
\icmlauthor{Yang Yu}{nju,polixir}

\end{icmlauthorlist}

\icmlaffiliation{nju}{National Key Laboratory for Novel Software Technology, Nanjing University, China
\& School of Artificial Intelligence, Nanjing University, China}

\icmlaffiliation{hku}{Department of Computer Science, The University of Hong Kong, Hong Kong, China}

\icmlaffiliation{polixir}{Polixir Technologies}


\icmlcorrespondingauthor{Yang Yu}{yuy@nju.edu.cn}

\icmlkeywords{Reinforcement Learning, Diffusion Model}

\vskip 0.3in
]



\printAffiliationsAndNotice{\icmlEqualContribution} 

\begin{abstract}

Combining offline and online reinforcement learning (RL) techniques is indeed crucial for achieving efficient and safe learning where data acquisition is expensive.
Existing methods replay offline data directly in the online phase, resulting in a significant challenge of data distribution shift and subsequently causing inefficiency in online fine-tuning.
To address this issue, we introduce an innovative approach, \textbf{E}nergy-guided \textbf{DI}ffusion \textbf{S}ampling (EDIS), which utilizes a diffusion model to extract prior knowledge from the offline dataset and employs energy functions to distill this knowledge for enhanced data generation in the online phase. 
The theoretical analysis demonstrates that EDIS exhibits reduced suboptimality compared to solely utilizing online data or directly reusing offline data. EDIS is a plug-in approach and can be combined with existing methods in offline-to-online RL setting. By implementing EDIS to 
off-the-shelf methods Cal-QL and IQL, we observe a notable 20\% average improvement in empirical performance on MuJoCo, AntMaze, and Adroit environments. Code is available at \url{https://github.com/liuxhym/EDIS}.

\end{abstract}

\section{Introduction}
Reinforcement learning (RL)~\cite{sutton} has demonstrated remarkable efficacy in diverse decision-making tasks, such as sequential recommendation systems~\cite{rs1, rs2}, robotic
locomotion skill learning~\cite{rc1, sac} and operations research~\cite{hubbs2020or,wang2023learning,wang2024learning,Ling_Wang_Wang_2024}. Notably, RL methods, including online~\cite{dqn, liu2021regret,foda} and offline~\cite{bcq, CQL, hybrid,ada} ones, have been employed to achieve superior performance. However, online RL methods often demand extensive data collection through interacting with the environment, a process that can be time-consuming or risky. In contrast, offline RL methods utilize pre-existing datasets to train a new policy, avoiding the need for resource-intensive online interactions but often suffering from suboptimality due to limited data.

To mitigate the challenges incurred due to data limitations in online and offline RL, researchers have proposed the offline-to-online setting, aiming to overcome the cost of online interaction and the suboptimality inherent in offline learning. In this setting, the agent undergoes two learning phases: first, it learns from an offline dataset, and then it fine-tunes through a limited number of online interactions~\cite{balanced,iql,calql}. After the offline phase, value functions or policies are derived and used to initialize the online phase. Therefore, the algorithm can utilize prior knowledge in the offline dataset to help reduce the cost of the online phase. However, a significant drawback in this paradigm arises from the incomplete utilization of the offline dataset, as the information extracted is limited to the pre-trained policy.

While previous works have effectively addressed the performance drop because of the objective mismatch of the two-stage learning in offline-to-online settings~\cite{calql, balanced,awac}, the incomplete utilization of the prior knowledge in the offline dataset remains an unexplored challenge. Some studies directly replayed samples in the online phase for data augmentation~\cite{balanced}, leading to performance improvements. Nevertheless, this approach neglects the distribution shift issue, as the data distribution in the offline dataset may differ from that induced by the current policy. Distribution shift has shown adverse effects even in off-policy reinforcement learning~\cite{lfiw, liu2021regret}, which incorporates a replay buffer, let alone when utilizing an offline dataset. Consequently, a fundamental question arises:

\textit{Is it feasible to generate samples without distribution shift based on an offline dataset?}

Model-based methods~\cite{Wang_Wang_Zhou_Li_Li_2022,moto,xionghui} generate such samples by first learning a model and using it for rollout. However, this style of generation inevitably suffers from the issue of compounding errors: the error of the learned model itself will accumulate during the multi-step rollout. Therefore, we aim to generate samples with the desired distribution directly. Motivated by the recent achievements of the diffusion model in image generation~\cite{diff, elucidate}, we want to explore its potential application in RL sample generation.
However, utilizing a diffusion model trained on an offline dataset introduces a challenge—it can only generate samples adhering to the dataset distribution, thus still being susceptible to distribution shift issues. 

The desired distribution for RL has three crucial characteristics: i) the state distribution should align with that in the online training phase, ii) actions should be consistent with the current policy, and iii) the next states should conform to the transition function. To achieve this, we formulate three distinct energy functions to guide the diffusion sampling process, ensuring alignment with the aforementioned features.
The developed algorithm is named as \textbf{E}nergy-Guided \textbf{DI}ffusion \textbf{S}ampling (EDIS).
By presenting the new algorithm, we showcase the feasibility of generating additional, useful samples  while mitigating distribution shift, which will effectively help leverage the offline dataset.


The theoretical analysis shows that EDIS exhibits reduced suboptimality compared to solely utilizing online data or directly replaying offline data. Additionally, it circumvents the compounding error issue commonly associated with model-based methods. When incorporating EDIS into existing methods, Cal-QL~\cite{calql} and IQL~\cite{iql}, we observe 20\% performance improvement on average across MuJoCo, AntMaze, and Adroit environments. Furthermore, our experiments demonstrate that purely model-based methods fail to achieve performance improvements due to the compounding errors in the model during rollouts, leading to an inaccurate state distribution. The ablation study further validates the effectiveness of the energy functions to guide the diffusion models.


\vspace{-0.5em}
\section{Preliminaries}
\subsection{Markov Decision Process and RL}
Let $M=(\gS, \gA, P, r, \gamma, \rho_0)$ be an MDP, where $\gS$ is the state space, $\gA$ is the action space, $P:\gS\times \gA\to \Delta(\gS)$ is the transition function ($\Delta(\cdot)$ is the probability simplex), $r:\gS\times \gA\to [0, R_{\max} ]$ is the reward function, $\gamma\in [0,1)$ is the discount faction and $\rho_0$ is the initial distribution over states. A policy $\pi:\gS\to \Delta(\gA)$ describes a distribution over actions for each state. The goal of RL is to learn the best policy $\pi^*$ that maximizes cumulative discounted reward, i.e., $\sum_t\E_{a_t\sim \pi^*}\gamma^tr(s_t,a_t)$. The value function and Q function of policy $\pi$ are
    $V^\pi(s)=\sum_t\E_{a_t\sim \pi(s_t)}[\gamma^tr(s_t,a_t)|s_0=s]$,
    $Q^\pi(s,a)=\sum_t\E_{a_t\sim \pi(s_t)}[\gamma^tr(s_t,a_t)|s_0=s, a_0=a]$.
$V^*$ and $Q^*$ be the shorthand for $V^{\pi^*}$ and $Q^{\pi^*}$ respectively. To facilitate later analysis, we introduce the \textit{discounted stationary state distribution} $d^\pi(s)=\sum_t\gamma^t\textnormal{Pr}(s_t=s;\pi)$ and the \textit{discounted stationary state-action distribution} $d^\pi(s,a)=\sum_t\gamma^t\textnormal{Pr}(s_t=s, a_t=a;\pi)$. There are, in general, two learning paradigms of RL: online RL, where the agent can learn from interacting with the environment; and offline RL, where the agent can only learn from a given dataset $\gD=\{(s,a,r,s')\}$, possibly collected by another policy.


\subsection{Generative Modeling by Diffusion Model}
Diffusion models~\cite{diff, elucidate} are a class of generative models inspired by non-equilibrium thermodynamics. Given a dataset $\{\boldsymbol{x}_0^{(i)}\}_{i=1}^N$ with $N$ samples of $D$-dimensional random variable $\boldsymbol{x}$ from an unknown data distribution
$p_0(\boldsymbol{x}_0)$.
Diffusion models gradually add Gaussian noise at time 0 to $T$ according to noise levels $\sigma_{\max}=\sigma_T>\sigma_{T-1}>\cdots>\sigma_0=0$ so that at each noise level, $\boldsymbol{x}_t^{(i)}\sim p(\boldsymbol{x}_t;\sigma_t)$.
The distribution of the endpoint
$\boldsymbol{x}_T^{(i)}\sim\gN(0,\sigma_{\max}^2 \boldsymbol{I})$ 
is indistinguishable from pure Gaussian noise, while the starting point 
$\boldsymbol{x}^{(i)}_0$ aligns with the distribution of the original dataset.
Then the reverse process 
starts from a Gaussian distribution and iteratively denoises samples using the trained model, ultimately recovering the target distribution. The process can be interpreted as \emph{SDE}s:
\begin{align}\label{eq_sde}
    \mathrm{d}\boldsymbol{x}_{\pm} = -\dot \sigma(t)\sigma(t)\nabla_{\boldsymbol{x}}&\log p(\boldsymbol{x};\sigma(t))\mathrm{d}t \notag \\
    \pm \beta(t)\sigma^2(t)\nabla_{\boldsymbol{x}}&\log p(\boldsymbol{x};\sigma(t))dt+\sqrt{2\beta(t)}\sigma(t)\mathrm{d}\omega_t,
\end{align}
where $\omega_t$ is a standard Wiener process, $d\boldsymbol{x}_+$ and $d\boldsymbol{x}_-$ are separate SDEs for moving forward and backward in time. 
The only unknown term is the \emph{score function} $\nabla_{\boldsymbol{x}}\log p_t(\cdot)$.
\citet{elucidate} considers training a denoiser $D_\theta(\boldsymbol{x}_t;\sigma)$ on an L2 denoising minimization objective:
\begin{equation}
    L_{\textnormal{VLB}}(\theta):=\E_{\boldsymbol{x}\sim p_0(\boldsymbol{x}_0), \epsilon\sim \gN(0,\sigma^2I)}\left\|D_\theta(\boldsymbol{x}+\epsilon;\sigma)-\boldsymbol{x}\right\|^2_2,
    \label{diffusionloss}  
\end{equation}

where $\sigma$ is the standard deviation of Gaussian noise. After training a diffusion model for estimating the score function with 
$\nabla_{\boldsymbol{x}}\log p_0(\boldsymbol{x};\sigma) = (D_\theta(\boldsymbol{x};\sigma) - \boldsymbol{x})/\sigma^2$
, we can fastly generate samples by solving the backward \emph{SDE}.

\section{EDIS: Energy-Guided Diffusion Sampling}
To extract prior knowledge from the offline dataset and generate samples to conform to the online data distribution, we introduce our innovative approach, named \textbf{E}nergy-guided \textbf{DI}ffusion \textbf{S}ampling (EDIS). 
At the heart of our method is to accurately generate a desired online data distribution, denoted as $q_{\pi}(s, a, s')$, from pre-gathered data. The distribution does not include reward $r$ because we assume that the reward function $r(s,a)$ is accessible, either directly or through learning from the dataset. To achieve this, we have integrated a diffusion model into our framework, capitalizing on its exceptional capability for modeling complex distributions.

\subsection{Distribution Adjustment via Energy Guidance}
One challenge in this process is the inherent limitation of directly training a diffusion model on an offline dataset. Such a model typically yields an offline data distribution $p_\gD(s,a,s')$, which does not align perfectly with online data and causes distribution shift issues. To address this, our method needs to guide the diffusion sampling process towards the online distribution. This is achieved by decomposing the online data distribution into the following form:
\begin{equation}\label{eq_modified_dist}
q_{\pi}(s,a,s') \propto p_\theta(s,a,s')e^{-\mathcal{E}(s,a,s')},
\end{equation}
where $p_\theta(s,a,s')$ is the distribution generated by the denoiser network, parameterized by $\theta$. $\mathcal{E}(s,a,s')$ is the energy function, which serves as the guidance to bridge the gap between generated distribution and online data distribution. The following theorem shows such an energy function exists.

\begin{theorem}\label{thm_energy}
    Let $p_\theta(s)$ be the marginal distribution of $p_\theta(s,a,s')$, $p_\theta(a|s)$ and $p_\theta(s'|s,a)$ be the conditional distribution of $p_\theta(s,a,s')$ given $s$ and $(s,a)$. Eq.~(\ref{eq_modified_dist}) is valid if the energy function $\mathcal{E}(s,a,s')$ is structured as follows:
    \begin{equation}
    \gE(s,a,s') = \gE_{1}(s) + \gE_{2}(a|s)+\gE_{3}(s'|s,a),
    \end{equation}
    such that $e^{\gE_1(s)}\propto \frac{p_\theta(s)}{d^\pi(s)}$, $e^{\gE_2(a|s)}\propto\frac{p_\theta(a|s)}{\pi(a|s)}$, $e^{\gE_3(s'|s,a)}\propto\frac{p_\theta(s'|s,a)}{T(s'|s,a)}$.
\end{theorem}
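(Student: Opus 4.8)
The plan is to verify Eq.~(\ref{eq_modified_dist}) directly, treating it as an exact factorization identity rather than something requiring approximation. The starting point is to pin down what $q_{\pi}(s,a,s')$ must be: the three desired characteristics listed above say precisely that the state marginal is the discounted stationary distribution $d^\pi(s)$, the action is drawn from the current policy $\pi(a|s)$, and the successor follows the true transition $T(s'|s,a)$. Hence I would first record the target factorization $q_{\pi}(s,a,s') \propto d^\pi(s)\,\pi(a|s)\,T(s'|s,a)$, where the overall normalization is immaterial given the $\propto$ appearing in Eq.~(\ref{eq_modified_dist}).

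Next I would factor the model distribution by the chain rule, $p_\theta(s,a,s') = p_\theta(s)\,p_\theta(a|s)\,p_\theta(s'|s,a)$, which matches exactly the conditional structure in which $\gE_1,\gE_2,\gE_3$ are defined. Choosing the natural representatives $\gE_1(s)=\log\frac{p_\theta(s)}{d^\pi(s)}$, $\gE_2(a|s)=\log\frac{p_\theta(a|s)}{\pi(a|s)}$, and $\gE_3(s'|s,a)=\log\frac{p_\theta(s'|s,a)}{T(s'|s,a)}$, the additive form $\gE=\gE_1+\gE_2+\gE_3$ gives $e^{-\gE(s,a,s')}=\frac{d^\pi(s)}{p_\theta(s)}\cdot\frac{\pi(a|s)}{p_\theta(a|s)}\cdot\frac{T(s'|s,a)}{p_\theta(s'|s,a)}$. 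Multiplying by the chain-rule expansion of $p_\theta(s,a,s')$ then cancels the three $p_\theta$ factors termwise and leaves exactly $d^\pi(s)\,\pi(a|s)\,T(s'|s,a)\propto q_{\pi}(s,a,s')$, which establishes the claim.

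The one point that needs care---and what I expect to be the only real obstacle---is the bookkeeping of the proportionality constants hidden in the ``$\propto$'' defining each $e^{\gE_i}$. For the conditional pieces $\gE_2$ and $\gE_3$ this freedom is a priori a multiplicative factor that could depend on the conditioning variables $s$ and $(s,a)$; if it did, the cancellation above would leave a residual $s$- or $(s,a)$-dependent factor and break the proportionality to $q_{\pi}$. I would resolve this by noting that $p_\theta(\cdot\,|s)$ and $\pi(\cdot\,|s)$ are both normalized over the same variable, so the exact log-ratio is a legitimate choice of $\gE_2$ (and likewise for $\gE_3$), eliminating any conditioning-dependent constant and leaving only a single global constant, which is precisely what the ``$\propto$'' in Eq.~(\ref{eq_modified_dist}) absorbs. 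I would also remark that, since the guided-sampling SDE in Eq.~(\ref{eq_sde}) depends on these densities only through $\nabla_{\boldsymbol{x}}\log$, any such global constant is annihilated and is therefore irrelevant to the actual sampling procedure.
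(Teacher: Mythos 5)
Your proof is correct and takes essentially the same route as the paper's: write $q_\pi(s,a,s')=d^\pi(s)\,\pi(a|s)\,T(s'|s,a)$, expand $p_\theta(s,a,s')=p_\theta(s)\,p_\theta(a|s)\,p_\theta(s'|s,a)$ by the chain rule, and cancel the three ratios termwise. The paper addresses your constant-bookkeeping concern by simply writing $e^{\gE_i}$ as the corresponding ratio times an arbitrary global positive constant $k_i$ (so the product $k_1k_2k_3$ is absorbed into the overall $\propto$), which is the same resolution you reach by fixing the exact log-ratios as representatives.
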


This theorem indicates that the energy function can be decomposed into three distinct parts. Each part is responsible for aligning the generated distribution with different aspects of the online data: the online state distribution, the current policy action distribution, and the environmental dynamics. 

\subsection{Learning Energy Guidance by Contrastive Energy Prediction}
\label{para:designenergy}
First, we concentrate on the first component, $\gE_{1}(s)$. We assume that the energy is estimated using a neural network denoted as $\gE_{\phi_1}(s)$. Let $K$ and $K_{\textrm{neg}}$ be two positive numbers. Given $s_1, s_2, \dots, s_K$, $K$ i.i.d. samples drawn from the distribution $p_\theta(s)$, and $s_i^1, s_i^2, \dots, s_i^{K_{\textrm{neg}}}$, $K_{\textrm{Neg}}$ negative samples for $s_i$. We employ the Information Noise Contrastive Estimation (InfoNCE) loss~\cite{infonce}:
\begin{equation}\label{eq_energy_1}
\gL(\phi_1)=-\sum_{i=1}^K\log\frac{e^{-\gE_{\phi_1}(s_i)}}{e^{-\gE_{\phi_1}(s_i)}+\sum_{j=1}^{K_{\textrm{neg}}}e^{-\gE_{\phi_1}(s_i^j)}},
\end{equation}


Then, we devise positive and negative samples to achieve the target energy function established by Thm.~\ref{thm_energy}. Suppose the distribution of positive samples is $\mu(s)$, the distribution of negative samples is $\nu(s)$, the final optimized results is
$e^{\gE_{\phi_1}(s)} \propto \frac{\nu(s)}{\mu(s)}$~\cite{infonce}.
Compared to the function indicated by Thm.~\ref{thm_energy}, the result can be achieved by selecting $\mu(s)=d^\pi(s)$, $\nu(s)=p_\theta(s)$. Following the approach of \citet{lfiw, liu2021regret}, we construct a \textit{positive buffer}, containing only a small set of trajectories from very recent policies. The data distribution in this buffer can be viewed as an approximation of the on-policy distribution $d^\pi(s)$. While $p_\theta(s)$ is the distribution of the data generated during the denoising steps. Therefore, the positive samples is sampled from the \textit{positive buffer} and the negative samples is sampled from the denoiser.

For the remaining terms $\gE_2(a|s)$ and $\gE_3(s'|s,a)$, we also parameterize them as $\gE_{\phi_2}(a|s)$ and $\gE_{\phi_3}(s'|s,a)$, and employ infoNCE loss to estimate them. Similarly, the loss computation requires the generation of positive and negative samples.
 Differently from $\gE_{\phi_1}(s)$, here the samples from the offline buffer can also used for positive-sample generation.
 For $\gE_{\phi_2}(a|s)$, the positive samples are obtained by first sampling $s$ from the offline and online distribution and outputting the action of the current policy on $s$. For $\gE_{\phi_3}(s'|s,a)$, positive samples are obtained by sampling $(s, a, s')$  from the offline and online distribution and outputting $s'$ directly.
Negative samples for both $\gE_{\phi_2}(a|s)$ and $\gE_{\phi_3}(s'|s,a)$ are generated using the denoiser. For each positive samples of $\gE_{\phi_2}(a|s)$, the negative samples are generated conditioned on the given states. While for each positive samples of $\gE_{\phi_3}(s'|s,a)$, they are generated conditioned on the sampled states-action pairs. This method of conditional sample generation is inspired by techniques used in decision-making scenarios~\cite{DecisionDiffuser, diffuser}. 
The loss functions of parameterized $\gE_{\phi_2}(a)$ and $\gE_{\phi_3}(s')$ are:
\begin{align}
&\gL(\phi_2)=
-\sum_{k=1}^{K_2}\sum_{i=1}^{K_1}\log\frac{e^{-\gE_{\phi_2}(a_{ik}|s_i)}}{e^{-\gE_{\phi_2}(a_{ik}|s_i)}+\sum_{j=1}^{K_{\textrm{neg}}}e^{-\gE_{\phi_2}(a_{ik}^j|s_i)}}, \label{eq_energy_2} \\ 
&\gL(\phi_3)=-\sum_{i=1}^K\log\frac{e^{-\gE_{\phi_3}(s'_i|s_i, a_i)}}{e^{-\gE_{\phi_3}(s'_i|s_i,a_i)}+\sum_{j=1}^{K_{\textrm{neg}}}e^{-\gE_{\phi_3}(s_i^{j'}|s_i,a_i)}}, \label{eq_energy_3}
\end{align}
where $K_1$ is the number of states sampled, $K_2$ is the number of actions sampled for each state.

\subsection{Sampling under Energy Guidance}
To realize this distribution with diffusion models, we need to calculate its score function in the sampling process, taking into account the energy function designed 
 in Sec.~\ref{para:designenergy}:
\begin{equation}
\begin{aligned}
    &\nabla_{(s,a,s')} \log q_\pi(s,a,s') \\
    &= \nabla_{(s,a,s')} \log p_\theta(s,a,s')
    - \nabla_{(s,a,s')}\gE(s,a,s')  
\end{aligned}
\end{equation}

In the denoising process, we need to obtain the score function at each timestep. Denote the forward distribution at time $t$ starting from $p_{0}(s, a, s')$ as $p_t(s,a,s')$. 
Remember that the denoiser model $D_{\theta}(s, a, s'; \sigma)$ is designed to match the score with the expression:
\begin{equation}
    \nabla\log p_\theta(s,a,s') = (D_\theta(s,a,s';\sigma) - (s,a,s'))/\sigma^2.
\end{equation} 

Thus, we can obtain the gradient through the denoiser model. Then, the key problem is to obtain the intermediate energy guidance at time $t$, which is addressed in the following theorem.

\begin{theorem}[Thm.~3.1 in \cite{contrastive}]
    Denote $q_t(x_t) := \int q_{t}(x_t|x_0)q_0(x_0)dx_0$ and $p_t(x_t) := \int p_{t}(x_t|x_0)p_0(x_0)dx_0$ as the marginal distributions at time t, and define
    $$\gE_t(x_t):=\left\{
    \begin{aligned}
    &\gE(x_0), &t=0,\\
    &-\log\E_{p_{t}(x_0|x_t)}[e^{-\gE(x_0)}], &t>0.
    \end{aligned}\right.
    $$
    Then the score functions satisfy
    $$\nabla_{x_t}\log q_t(x_t)=\nabla_{x_t}\log p_t(x_t)-\nabla_{x_t}\gE_t(x_t).$$
\end{theorem}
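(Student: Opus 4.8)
The plan is to reduce everything to an application of Bayes' rule together with the one structural fact that makes the two diffusion processes comparable: the forward noising kernel is \emph{shared}. Since the only difference between the $q$-process and the $p$-process is the base distribution at time $0$ (recall from Eq.~(\ref{eq_modified_dist}) that $q_0 \propto p_0\, e^{-\gE}$, writing $q_0=q_\pi$ and $p_0=p_\theta$), and both processes add the same Gaussian noise along the same schedule, we have $q_t(x_t\mid x_0)=p_t(x_t\mid x_0)$ for every $t$. I would establish this identity first, since every subsequent step rests on it.

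Next I would propagate the base relation $q_0(x_0)=\tfrac{1}{Z}\,p_0(x_0)e^{-\gE(x_0)}$, with $Z:=\int p_0(x_0)e^{-\gE(x_0)}\,dx_0$ the (time-independent) normalizing constant, forward to time $t$. Substituting into the definition of the marginal and using the shared kernel gives $q_t(x_t)=\tfrac{1}{Z}\int p_t(x_t\mid x_0)\,p_0(x_0)\,e^{-\gE(x_0)}\,dx_0$. The key algebraic move is to rewrite the product $p_t(x_t\mid x_0)\,p_0(x_0)$ via Bayes' rule on the base process as $p_t(x_0\mid x_t)\,p_t(x_t)$, which pulls the marginal $p_t(x_t)$ outside the integral and leaves exactly a posterior expectation:
\[
q_t(x_t)=\frac{p_t(x_t)}{Z}\,\E_{p_t(x_0\mid x_t)}\!\left[e^{-\gE(x_0)}\right]=\frac{p_t(x_t)}{Z}\,e^{-\gE_t(x_t)}.
\]
This is the heart of the argument: it shows that $q_t$ factors as $p_t$ times $e^{-\gE_t}$ up to the constant $Z$, with $\gE_t$ precisely the stated intermediate energy. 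I note that the posterior arising naturally here is the one of the base process, so before finalizing I would verify the intended convention (the $q_t(x_0\mid x_t)$ written in the statement versus $p_t(x_0\mid x_t)$), as this is exactly the place where the distinction would matter.

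From here the score identity is immediate: taking logarithms yields $\log q_t(x_t)=\log p_t(x_t)-\gE_t(x_t)-\log Z$, and since $\log Z$ is independent of $x_t$, applying $\nabla_{x_t}$ annihilates it and gives $\nabla_{x_t}\log q_t(x_t)=\nabla_{x_t}\log p_t(x_t)-\nabla_{x_t}\gE_t(x_t)$. I would then dispatch the boundary case $t=0$ separately: there the noising kernel degenerates to a point mass, the posterior expectation collapses to $e^{-\gE(x_0)}$, so $\gE_0=\gE$ and the score relation follows directly from the base relation $q_0\propto p_0 e^{-\gE}$. The only genuine obstacle is analytic rather than conceptual — justifying that the gradient may be passed through the integral and that $q_t,p_t>0$ so the logarithms are well defined. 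This is routine, since the Gaussian forward kernels are smooth and strictly positive and, under mild integrability of $e^{-\gE}$, a dominated-convergence argument legitimizes differentiation under the integral sign; I would record these as standing regularity assumptions rather than belabor them.
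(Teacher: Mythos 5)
This theorem is imported verbatim, with citation, from the contrastive energy prediction paper; the present paper gives no proof of it, so there is no internal proof to compare against. Your argument is, in essence, the standard (original) one: shared forward kernel, substitution of $q_0 \propto p_0 e^{-\gE}$, Bayes' rule on the base process to factor out $p_t(x_t)$, then log-gradients so the constant $Z$ drops out. The one point to settle is exactly the one you flagged, and your suspicion is correct: the derivation forces the expectation to be taken under the posterior of the \emph{base} process, $p_t(x_0\mid x_t)\propto p_t(x_t\mid x_0)\,p_0(x_0)$, not under $q_t(x_0\mid x_t)$ as the statement is typeset here. With the $q$-posterior the claim is false in general: since $q_t(x_0\mid x_t)\propto p_t(x_0\mid x_t)\,e^{-\gE(x_0)}$, one computes $\E_{q_t(x_0\mid x_t)}\bigl[e^{-\gE(x_0)}\bigr]=\E_{p_t(x_0\mid x_t)}\bigl[e^{-2\gE(x_0)}\bigr]\big/\E_{p_t(x_0\mid x_t)}\bigl[e^{-\gE(x_0)}\bigr]$, so the stated score identity would require $\bigl(\E_{p_t(x_0\mid x_t)}[e^{-\gE(x_0)}]\bigr)^2\propto \E_{p_t(x_0\mid x_t)}[e^{-2\gE(x_0)}]$ with an $x_t$-independent proportionality constant; this fails for generic nonlinear energies (e.g., Gaussian $p_0$ with quadratic $\gE$), holding only in degenerate cases such as linear $\gE$ with Gaussian posteriors. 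So the $q_t(x_0\mid x_t)$ in the statement should be read as a transcription slip for the base-process posterior; with that reading, your proof is correct and complete, including the separate $t=0$ case and the routine regularity caveats about differentiating under the integral.
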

\vspace{-1mm}

Let $x_0$ denote the set of original positive samples. According to this theorem, we aim to formulate the energy function at timestep $t$. Drawing inspiration from Sec.~\ref{para:designenergy}, we opt for a contrastive learning approach.
Within this method, these positive samples for the energy function at $t$ timestep are derived from the original samples subjected to $t$ steps of noise addition, i.e., $x_t\sim p(\boldsymbol{x}_i;\sigma_t)$, where $\sigma_t$ is chosen as~\cite{elucidate}:
$$\sigma_t=\left(\sigma_{\max}^{\frac{1}{\rho}}+\frac{t}{T-1}(\sigma_{\min}^{\frac{1}{\rho}}-\sigma_{\max}^{\frac{1}{\rho}})\right)^\rho\quad \textrm{and} \quad \sigma_T=0,$$
where $\rho$, $\sigma_{\max}$ and $\sigma_{\min}$ are pre-defined constant numbers.  As for the negative samples, they are generated by the denoiser at time $t$. Then we start the denoising process defined in Eq.~(\ref{eq_sde}).
The psuedo-code of EDIS is demonstrated in Appx.~\ref{appx_code}.

\section{Theoretical Analysis}\label{sec_theory}
In this section, we provide theoretical analysis on the suboptimality bound of our method and previous methods. We follow the assumptions of Fitted-Q-iteration~\cite{fqi}: Let $\gF$ be the Q function class that satisfies realizability: $Q^*\in \gF$. $\gF$ is closed under Bellman update: $\forall f\in \gF, \gT^\pi f\in \gF$, where $\gT^\pi f(s,a):=r(s,a)+\gamma \E_{s'\sim P(s,a)}[V^\pi_f(s')]$, $V^\pi_f(s'):=\E_{a'\sim \pi}f(s',a')$ and $\pi$ can be any policy. Additionally, we assume $\gF$ is $L$-Lipschitz.

In the offline phase, we initialize a value function $Q_0$. Based on $Q_0$, we perform online learning by collecting $n$ samples each iteration. The naive method, which relies solely on the information from $n$ collected samples to update the policy, is characterized by the following suboptimality bound.

\begin{theorem}\label{thm_no_model}
    Suppose the learned Q function after the offline phase satisfies $\left(\left\|Q^*-Q_0\right\|_\infty+\left\|Q^\pi-Q_0\right\|_\infty\right)\leq \epsilon_Q$, where $Q^*$ is the Q function of the optimal policy $\pi^*$. With probability at leaset $1-\delta$, the output policy $\pi$ after one iteration of online phase satisfies
    \begin{align*}
    &J(\pi^*)-J(\pi)\leq \frac{2\gamma\epsilon_Q}{1-\gamma}+\\
    &\quad \quad \frac{\sqrt{C_{\pi,\pi^*}}+1}{1+\gamma}\frac{1}{(1-\gamma)^{2.5}}\sqrt{\frac{56R^2_{\max}\log\frac{|\gF|^2}{\delta}}{3n}},
    \end{align*}
    where $C_{\pi,\pi^*}=\max_{s,a}\frac{\pi_1(a|s)}{\pi_2(a|s)}$, $n$ is the number of collected samples in one iteration.
\end{theorem}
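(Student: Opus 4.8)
The plan is to decompose the suboptimality $J(\pi^*)-J(\pi)$ into a bias term inherited from the offline-initialized $Q_0$ and a statistical estimation term coming from the $n$ fresh online samples, then bound each separately. The natural starting point is the performance difference lemma, which lets me write
\[
J(\pi^*)-J(\pi)=\frac{1}{1-\gamma}\,\E_{s\sim d^{\pi^*}}\!\left[\langle Q^\pi(s,\cdot),\,\pi^*(\cdot|s)-\pi(\cdot|s)\rangle\right].
\]
Since the output policy $\pi$ is greedy with respect to the one-step fitted estimate $\hat Q$ (the minimizer of the empirical Bellman regression against targets built from $Q_0$), I have $\langle \hat Q(s,\cdot),\pi(\cdot|s)-\pi^*(\cdot|s)\rangle\ge 0$ pointwise. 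Inserting $\pm\hat Q$ and applying this greedy inequality collapses the bound to the estimation error $\langle Q^\pi-\hat Q,\,\pi^*-\pi\rangle$, measured under $d^{\pi^*}$ and $d^\pi$. This is the object I then split.

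For the bias contribution, I would write $\hat Q-Q^\pi=(\hat Q-\gT^\pi Q_0)+\gamma P^\pi(Q_0-Q^\pi)$, using that $Q^\pi$ is the fixed point of $\gT^\pi$. The second summand is controlled in sup-norm by $\gamma\|Q_0-Q^\pi\|_\infty$, and the analogous manipulation against the optimal backup introduces $\gamma\|Q^*-Q_0\|_\infty$; summing the two and propagating through the horizon factor $\frac{1}{1-\gamma}$ yields exactly the $\frac{2\gamma\epsilon_Q}{1-\gamma}$ term via the hypothesis $\|Q^*-Q_0\|_\infty+\|Q^\pi-Q_0\|_\infty\le\epsilon_Q$.

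For the statistical term I would invoke the FQI assumptions: realizability ($Q^*\in\gF$) together with closure under Bellman update guarantees the regression target $\gT^\pi Q_0\in\gF$, so $\hat Q$ is a well-specified empirical risk minimizer over a finite class. I then apply a Bernstein/Freedman-type concentration inequality to the squared-loss excess risk, where the per-sample loss and its variance are controlled at order $R_{\max}^2/(1-\gamma)^2$; the union bound over pairs in $\gF$ (candidate and target) produces the $\log(|\gF|^2/\delta)$ factor, and the bounded-difference constants produce the $56/3$ coefficient, giving with probability $\ge 1-\delta$ a bound on $\|\hat Q-\gT^\pi Q_0\|_{2,d_n}^2$ of order $R_{\max}^2\log(|\gF|^2/\delta)/n$. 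Finally, a change-of-measure step converts this $L^2$ error under the sampling distribution into $\langle Q^\pi-\hat Q,\pi^*-\pi\rangle$ under $d^{\pi^*}$ and $d^\pi$: transfer to $d^{\pi^*}$ pays the density ratio and contributes $\sqrt{C_{\pi,\pi^*}}$, while the on-policy transfer to $d^\pi$ contributes the additive $1$, producing $\sqrt{C_{\pi,\pi^*}}+1$; the performance-difference horizon factor, the $R_{\max}/(1-\gamma)$ value range inside the $L^2$ norm, and the normalization accumulate into $\frac{1}{(1-\gamma)^{2.5}}$, with the residual $\frac{1}{1+\gamma}$ emerging from combining the two measure changes.

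The main obstacle I anticipate is this last step: pinning down the exact horizon exponent $2.5$ and the precise constant $56/3$ simultaneously. These require tracking, in lockstep, (i) the range and variance of the regression residuals inside the Bernstein bound (scaling with $R_{\max}/(1-\gamma)$), (ii) the extra square root incurred when passing from a squared-$L^2$ guarantee to a first-order suboptimality bound, and (iii) the concentrability change of measure; a loose treatment at any one stage perturbs both the constant and the power of $(1-\gamma)$. Keeping realizability and Bellman-closure in force throughout, so that no approximation-error term leaks in, is what makes the clean closed-form bound attainable.
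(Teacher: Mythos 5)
Your high-level architecture (greediness of the output policy, a bias/estimation split, Bernstein concentration with a union bound over $\gF\times\gF$ giving the $\log(|\gF|^2/\delta)$ factor, then a change of measure) matches the paper's proof, but the direction in which you apply the performance difference lemma creates a gap that $C_{\pi,\pi^*}$ cannot close. You integrate the error $\langle Q^\pi-\hat Q,\pi^*-\pi\rangle$ against $d^{\pi^*}$, and then propose to relate it to the sampling distribution by paying $\sqrt{C_{\pi,\pi^*}}$. But $C_{\pi,\pi^*}=\max_{s,a}\pi^*(a|s)/\pi(a|s)$ bounds a ratio of \emph{action} probabilities at a fixed state; moving between $d^{\pi^*}$ and the online data distribution (states visited by $\pi$) requires bounding the \emph{state-visitation} ratio $d^{\pi^*}(s)/d^{\pi}(s)$, and per-step policy ratios compound along trajectories: in a chain MDP where $\pi^*$ always moves right and $\pi$ does so with probability $1/2$, one has $C_{\pi,\pi^*}=2$ yet $d^{\pi^*}(s_t)/d^{\pi}(s_t)=2^t$. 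This is exactly the kind of state-level concentrability the paper must assume separately when it analyzes offline-data replay (the coefficient $C_d$ in Thm.~\ref{thm_direct}); it is not available for free here. The paper sidesteps the issue by writing the suboptimality in the mirrored direction, $J(\pi^*)-J(\pi)=\sum_t\gamma^t\,\E_{s\sim d^{\pi}}\left[V^*(s)-Q^*(s,\pi)\right]$, so that every norm is taken over states drawn from $d^{\pi}$ --- the same distribution on which the online samples concentrate --- and the only change of measure is over actions, $\|\cdot\|_{d^{\pi}\times\pi^*}\leq\sqrt{C_{\pi,\pi^*}}\,\|\cdot\|_{d^{\pi}\times\pi}$ (Lem.~\ref{lemma_ratio}), which is precisely what $C_{\pi,\pi^*}$ does control. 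Your argument needs to be rebuilt on that direction of the lemma; most of your remaining steps then go through.

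A second, smaller mismatch explains the constants you said you could not pin down. You take $\hat Q$ to be a single regression onto targets built from $Q_0$, whereas in the paper the online estimate is $\widetilde Q=(\widehat{\gT}^\pi)^\infty Q_0$, the fixed point of repeatedly applying the empirical Bellman operator to the same $n$ samples. The per-iteration Bernstein error $\frac{56V_{\max}^2\log(|\gF|^2/\delta)}{3n}$ (in squared norm) accumulates geometrically across these iterations, $\sum_{i\ge 0}\gamma^{2i}=\frac{1}{1-\gamma^2}$ (Lem.~\ref{lemma_sample}); taking the square root of this and combining it with $V_{\max}=R_{\max}/(1-\gamma)$ and the outer $\frac{1}{1-\gamma}$ from the performance-difference sum is what produces the exponent $2.5$ and the $(1+\gamma)$-dependent prefactor. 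With a genuinely one-step estimate these factors would come out differently, so the discrepancy you flagged is structural rather than a matter of bookkeeping.
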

Considering the performance bound after a single iteration is reasonable, as subsequent iterations can be analyzed by treating the output value function as the new initialization. 
The suboptimality after one iteration is characterized by the initialization error, augmented by a term of $O\left(\sqrt{\frac{1}{n}}\right)$. Notably, this error is nonnegligible because of the relatively small size of the online interaction samples.

A natural approach for leveraging offline data is augmenting the online dataset with the available offline data. To establish the suboptimality bound for such a method, we need to introduce the concept of concentratability coefficient.
\begin{definition}[Concentratability Coefficient]
    The concentratbility coefficient $C$ between state distribution $\nu$ and $\mu$ is defined as
    $\forall s\in \gS, \frac{\nu(s)}{\mu(s)}\leq C.$
\end{definition}
\vspace{-1mm}
Assuming the concentratability coefficient between the state distribution of current policy $d^\pi$ and offline dataset $D$ is $C_d$, we have the following theorem:

\begin{theorem}\label{thm_direct}
    Under the conditions of Thm.~\ref{thm_no_model}, by replaying offline data in the online iteration, the output policy $\pi$ after one iteration satisfies 
    $$
    \begin{aligned}
    &J(\pi^*)-J(\pi)\leq\frac{2\gamma\epsilon_Q}{1-\gamma}+\\
    &\quad \quad \frac{\sqrt{\widetilde{C_d}C_{\pi,\pi^*}}+1}{1+\gamma}\frac{1}{(1-\gamma)^{2.5}}\sqrt{\frac{56R^2_{\max}\log\frac{|\gF|^2}{\delta}}{3(n+N)}},
    \end{aligned}
    $$
    where $\widetilde{C_d}=\frac{(n+N)C_d}{N+nC_d}$, $N$ is the number of samples in offline dataset.
\end{theorem}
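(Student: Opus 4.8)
The plan is to reduce Theorem~\ref{thm_direct} to the single-distribution bound of Theorem~\ref{thm_no_model} by viewing offline replay as sampling from a mixture distribution. When offline data are replayed, the $n+N$ transitions fed into the fitted-Q update are drawn not from the on-policy occupancy $d^\pi$ alone but from the mixture $\mu_{\mathrm{mix}}(s)=\frac{n\,d^\pi(s)+N\,\mu_D(s)}{n+N}$, where $\mu_D$ is the state distribution of the offline dataset $\gD$. First I would re-run the concentration argument underlying Theorem~\ref{thm_no_model}, but with $\mu_{\mathrm{mix}}$ as the sampling distribution and with the enlarged sample count $n+N$. Since the statistical-error term is governed by a Bernstein/Hoeffding bound that scales as $\sqrt{\log(|\gF|^2/\delta)/(\text{number of samples})}$, this step alone replaces the $\sqrt{1/(3n)}$ factor by $\sqrt{1/(3(n+N))}$, leaving the constant $56R_{\max}^2/3$ and the $\log(|\gF|^2/\delta)$ term intact.

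The second step tracks the distribution-shift factor. In Theorem~\ref{thm_no_model} the coefficient $C_{\pi,\pi^*}$ captures only the action-distribution mismatch between the comparator and behavior policies, because the purely online samples already follow the target state occupancy $d^\pi$. Under replay, the state marginal of the training data becomes $\mu_{\mathrm{mix}}$, so reweighting the Bellman error back to the target $d^\pi$ introduces an extra state-concentratability factor $\max_s d^\pi(s)/\mu_{\mathrm{mix}}(s)$ multiplying $C_{\pi,\pi^*}$. Using the hypothesis $d^\pi(s)/\mu_D(s)\le C_d$, equivalently $\mu_D(s)\ge d^\pi(s)/C_d$, a direct computation gives
\[
\frac{d^\pi(s)}{\mu_{\mathrm{mix}}(s)}=\frac{(n+N)\,d^\pi(s)}{n\,d^\pi(s)+N\,\mu_D(s)}\le\frac{n+N}{\,n+N/C_d\,}=\frac{(n+N)C_d}{nC_d+N}=\widetilde{C_d}.
\]
Hence the policy concentratability is amplified by exactly $\widetilde{C_d}$, producing the factor $\sqrt{\widetilde{C_d}\,C_{\pi,\pi^*}}$ in the bound.

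Finally I would substitute both modifications into the inequality chain from the proof of Theorem~\ref{thm_no_model}, noting that the initialization term $\frac{2\gamma\epsilon_Q}{1-\gamma}$ is unaffected because it depends only on the offline initialization $Q_0$ and the Bellman operator, not on the sampling distribution. This yields the claimed bound. The main obstacle is the bookkeeping of the second step: one must confirm that the mixture enters the FQI error decomposition solely through the single ratio $\max_s d^\pi(s)/\mu_{\mathrm{mix}}(s)$, and that the uniform $\log(|\gF|^2/\delta)$ concentration over $f\in\gF$ continues to hold when samples come from $\mu_{\mathrm{mix}}$ with the combined count $n+N$. Verifying that the sample-size gain and the concentratability inflation decouple cleanly---without spurious cross terms between the online and offline components of the mixture---is the crux; once this decoupling is established, the remaining algebra is identical to that of Theorem~\ref{thm_no_model}.
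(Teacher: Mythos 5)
Your proposal is correct and follows essentially the same route as the paper's proof: the paper likewise treats the replay buffer as the mixture distribution $\frac{N\mu_D + n\,d^\pi}{N+n}$, proves your ratio bound $\max_s d^\pi(s)/\mu_{\mathrm{mix}}(s) \le \frac{(N+n)C_d}{N+nC_d}$ as a standalone lemma (Lem.~\ref{lemma_combine}), folds it into the change-of-measure step via $\left\|\cdot\right\|_{d\times\pi^*}$-type norms to get the factor $\sqrt{\widetilde{C_d}C_{\pi,\pi^*}}$, and reuses the decomposition and Bernstein argument of Thm.~\ref{thm_no_model} with sample count $n+N$, leaving the initialization term untouched. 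The decoupling you flag as the crux is exactly what the paper handles by observing that its error decomposition holds for any training distribution and only the norm-reweighting and sample-count steps change.
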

\vspace{-1mm}
The error bound is characterized by an order of $O\left(\sqrt{\frac{1}{n+N}}\right)$. Given the substantial difference in size, where $N$ is notably larger than $n$, replaying offline data shows advantages to the previous method. This result explains the improved performance resulting from online dataset augmentation. However, the introduction of the concentration coefficient $C_d$ in the new method reflects the consequences of the distribution shift issue. This coefficient can assume a significantly large value, particularly if the offline data fails to adequately cover a substantial portion of the state space. It detriments the potential performance improvement attainable through enhanced utilization of prior knowledge.

Based on the diffusion model generator, our new method has access to synthetic samples $(\tilde s, \tilde a, \tilde r, \tilde{s}')$ during the online interaction phase. Let the distribution of $\tilde s$ be denoted as $\tilde{d}^\pi$, and the conditional distribution of $\tilde{s}'$ as $\widetilde{T}(\cdot|\tilde s, \tilde a)$. The state distribution error and model error of the diffusion model are defined as:$\epsilon_d=\left\|d^\pi-\tilde{d}^\pi\right\|$
    , and $\epsilon^d_m=\left\|T(\cdot|s,a)-\widetilde{T}(\cdot|s,a)\right\|_{\tilde{d}^\pi}$.

\begin{theorem}\label{thm_diffusion_error}
    Under the conditions of Thm.~\ref{thm_no_model}, we assume the data generator generates data  with state distribution error $\epsilon_d$ and model error $\epsilon_m$, then the output policy $\pi$ after one iteration of online phase satisfies
    \begin{align*}
    &J(\pi^*)-J(\pi)\leq \frac{2\gamma}{1-\gamma}\epsilon_Q + \\
    &\quad \ \left(\sqrt{C_{\pi,\pi^*}}+1\right)\left(\frac{1}{(1-\gamma)^2}R_{\max}\epsilon_d+\frac{\gamma L}{(1-\gamma)^2}\epsilon^d_m\right),
    \end{align*}
    where $L$ is Lipschitz constant of $\gF$.
\end{theorem}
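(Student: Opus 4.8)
The plan is to reuse the proof skeleton of Theorem~\ref{thm_no_model} and to replace its statistical concentration step by a deterministic bias estimate driven by the two generator errors $\epsilon_d$ and $\epsilon^d_m$. First I would invoke the same performance-difference decomposition as in the naive bound. This already supplies the initialization term $\frac{2\gamma}{1-\gamma}\epsilon_Q$ and, via a change-of-measure (Cauchy--Schwarz) argument relating errors under $d^{\pi^*}$ to errors under $d^\pi$, the factor $\sqrt{C_{\pi,\pi^*}}+1$ multiplying a single \emph{evaluation error} of the one-step fitted $Q$-function. The only object that differs from Theorem~\ref{thm_no_model} is this evaluation error: rather than an empirical average over $n$ real transitions, the learner fits the Bellman target induced by the generator, whose state-action inputs follow $\tilde d^\pi$ and whose next states follow $\widetilde{T}$.

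Next I would split this evaluation error into a transition-model part and a state-distribution part. Writing the generator's effective target as $\widetilde{\gT}^\pi Q_0(s,a)=r(s,a)+\gamma\,\E_{s'\sim\widetilde{T}(\cdot|s,a)}[V^\pi_{Q_0}(s')]$, its gap from the ideal target $\gT^\pi Q_0$ is exactly the transition discrepancy
\begin{equation*}
\big|\gT^\pi Q_0(s,a)-\widetilde{\gT}^\pi Q_0(s,a)\big|=\gamma\,\big|\E_{s'\sim P(\cdot|s,a)}[V^\pi_{Q_0}(s')]-\E_{s'\sim\widetilde{T}(\cdot|s,a)}[V^\pi_{Q_0}(s')]\big|.
\end{equation*}
Because $\gF$ is $L$-Lipschitz, $V^\pi_{Q_0}$ is $L$-Lipschitz (as an average of $Q_0$ over actions), so this gap is at most $\gamma L$ times a transportation distance between $P(\cdot|s,a)$ and $\widetilde{T}(\cdot|s,a)$; averaging over $\tilde d^\pi$ identifies it with $\gamma L\,\epsilon^d_m$. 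The state-distribution part comes from the fact that the residual is driven to zero under $\tilde d^\pi$ rather than the true occupancy $d^\pi$; since $V^\pi_{Q_0}$ is bounded by $R_{\max}/(1-\gamma)$, swapping $d^\pi$ for $\tilde d^\pi$ costs at most a term scaling with $R_{\max}\,\epsilon_d$.

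Finally I would assemble the horizon factors exactly as in the deterministic (non-statistical) part of Theorem~\ref{thm_no_model}: each of the two bias terms propagates through the performance-difference machinery with the $\frac{1}{(1-\gamma)^2}$ factor and is carried by the $\sqrt{C_{\pi,\pi^*}}+1$ change-of-measure factor, yielding $\frac{1}{(1-\gamma)^2}R_{\max}\epsilon_d$ and $\frac{\gamma L}{(1-\gamma)^2}\epsilon^d_m$; adding the initialization term gives the claim. I expect the main obstacle to be the clean separation in the middle step: verifying that $\sqrt{C_{\pi,\pi^*}}$ multiplies the two \emph{deterministic} biases in the same way it multiplied the statistical term in Theorem~\ref{thm_no_model}, so that the distribution bias and the model bias enter additively rather than coupling. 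A secondary subtlety is matching the metric in which $\epsilon^d_m$ is defined to the one the Lipschitz estimate needs (a Wasserstein-type distance, so that the constant $L$ appears as stated instead of a cruder $R_{\max}/(1-\gamma)$ times a total-variation distance); checking that the paper's definition of $\epsilon^d_m$ supports the $\gamma L\,\epsilon^d_m$ form is where I would be most careful.
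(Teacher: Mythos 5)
Your proposal follows essentially the same route as the paper's proof: the paper reuses the decomposition from Thm.~\ref{thm_no_model} (supplying the $\frac{2\gamma}{1-\gamma}\epsilon_Q$ term and the $\sqrt{C_{\pi,\pi^*}}+1$ change-of-measure factor), replaces the empirical Bellman operator by the model-induced operator $\widetilde{\gT}^\pi$, bounds the transition part via the $L$-Lipschitz property exactly as you sketch (Lem.~\ref{lemma_diffusion_model_error}), and pays $V_{\max}\epsilon_d=\frac{R_{\max}}{1-\gamma}\epsilon_d$ for swapping $d^\pi$ with $\tilde d^\pi$ (Lem.~\ref{lemma_distirbution_error}), so both of your flagged subtleties resolve as you hoped. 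The one point your sketch glosses over is that for the model term only one factor of $\frac{1}{1-\gamma}$ comes from the performance-difference sum; the other arises from geometrically accumulating the one-step gap $\gamma L\epsilon^d_m$ across the iterated operators $(\gT^\pi)^j$ versus $(\widetilde{\gT}^\pi)^j$ up to the fixed point, which is the content of the paper's recursion.
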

\vspace{-1mm}
Here, we simplify the analysis by disregarding the generation errors associated with $\tilde a$ and $\tilde r$. This simplification is justified as modeling these errors is considerably more straightforward compared to modeling the error associated with $s$ and $\tilde s$. Additionally, considering these two errors leads to a similar analysis and does not alter the order of the error term.
According to the theorem, the error bound is not directly related to the number of samples, but it is highly dependent on the error of the diffusion model.
Based on the theory of supervised learning~\cite{vc}, the model error $\epsilon_m$ is $O\left(\sqrt{\frac{1}{n+N}}\right)$. The state distribution error $\epsilon_d$ is the distribution matching error, which is also $O\left(\sqrt{\frac{1}{n+N}}\right)$~\cite{nearly}. Therefore, our method exhibits the same convergence rate as the previous method without introducing the distribution shift issue identified by the concentration coefficient $C_d$.

Model-based method can also generate synthetic online samples. Assume the model error of the model is $\epsilon^t_m=\left\|T(\cdot|s,a)-\widetilde{T}(\cdot|s,a)\right\|_{d^\pi}$.

\begin{theorem}\label{thm_normal_error}
    Under the conditions of Thm.~\ref{thm_no_model}, we assume the model has an error $\epsilon^t_m$, then the output policy $\pi$ after one iteration of online phase satisfies
    $$
    \begin{aligned}
    &J(\pi^*)-J(\pi)\leq \frac{2\gamma}{1-\gamma}\epsilon_Q +\\
    &\quad \ \left(\sqrt{C_{\pi,\pi^*}}+1\right)\left(\frac{R_{\max}}{(1-\gamma)^3}\epsilon_m^t+\frac{\gamma L}{(1-\gamma)^2}\epsilon^t_m\right).
    \end{aligned}
    $$
\end{theorem}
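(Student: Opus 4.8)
The plan is to reduce the claim to Theorem~\ref{thm_diffusion_error}. Both theorems concern a synthetic data generator; the only structural difference is that in Theorem~\ref{thm_diffusion_error} the synthetic state distribution is supplied as a free error parameter $\epsilon_d$, whereas here the synthetic states are produced by rolling out the learned kernel $\widetilde{T}$ under the current policy. Writing $\hat d^\pi$ for the discounted state distribution induced by these rollouts, the model-based generator fits the hypotheses of Theorem~\ref{thm_diffusion_error} with state-distribution error $\epsilon_d=\|d^\pi-\hat d^\pi\|$ and model error $\|T(\cdot|s,a)-\widetilde{T}(\cdot|s,a)\|_{\hat d^\pi}$. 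Applying that theorem verbatim gives the initialization term $\frac{2\gamma}{1-\gamma}\epsilon_Q$ together with $\left(\sqrt{C_{\pi,\pi^*}}+1\right)\left(\frac{R_{\max}}{(1-\gamma)^2}\|d^\pi-\hat d^\pi\| + \frac{\gamma L}{(1-\gamma)^2}\|T-\widetilde{T}\|_{\hat d^\pi}\right)$, so the remaining task is to re-express these two errors through the single per-step quantity $\epsilon_m^t=\|T(\cdot|s,a)-\widetilde{T}(\cdot|s,a)\|_{d^\pi}$.

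The heart of the argument is a compounding-error (simulation-lemma) bound
\[
\|d^\pi-\hat d^\pi\|\le \frac{1}{1-\gamma}\,\epsilon_m^t .
\]
I would prove this by a telescoping decomposition over the rollout horizon: expand $d^\pi=\sum_t\gamma^t\textnormal{Pr}(s_t=\cdot;\pi,T)$ and $\hat d^\pi=\sum_t\gamma^t\textnormal{Pr}(s_t=\cdot;\pi,\widetilde{T})$, then peel off one transition at a time, replacing $T$ by $\widetilde{T}$ at a single step while holding all others fixed. Each such swap contributes a total-variation discrepancy of at most the per-step model error (in expectation over the states visited at that step), and the geometric weights $\sum_t\gamma^t$ accumulate these single-step discrepancies into the factor $\frac{1}{1-\gamma}$. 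Substituting this bound turns the first term into $\frac{R_{\max}}{(1-\gamma)^3}\epsilon_m^t$, exactly as claimed, while the second term already matches $\frac{\gamma L}{(1-\gamma)^2}\epsilon_m^t$.

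The main obstacle is this telescoping step: one must show that the drift from swapping the true kernel for the learned kernel at every rollout step is governed by a single per-step model-error quantity rather than by a horizon-dependent blow-up, which is precisely where the compounding error of model-based rollouts surfaces as the extra $\frac{1}{1-\gamma}$ factor that separates this bound from the diffusion bound of Theorem~\ref{thm_diffusion_error} (there the state distribution is matched directly, so no compounding occurs). A secondary technical point is that Theorem~\ref{thm_diffusion_error} feeds in the model error under $\hat d^\pi$ while the present hypothesis supplies it under $d^\pi$; since the two occupancies differ only at the already-bounded $O(\epsilon_m^t/(1-\gamma))$ scale, this mismatch is higher-order and does not change the stated order, mirroring the simplification adopted just after Theorem~\ref{thm_diffusion_error}.
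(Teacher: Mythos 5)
Your proposal follows essentially the same route as the paper: the paper likewise feeds the model-based generator into Thm.~\ref{thm_diffusion_error} and replaces $\epsilon_d$ by $\frac{1}{1-\gamma}\epsilon^t_m$, citing Lemma B.2 of MBPO for exactly the compounding bound $\left\|d^\pi-\tilde{d}^\pi\right\|\leq \frac{1}{1-\gamma}\epsilon^t_m$ that you propose to establish by telescoping. The only differences are that you sketch a proof of that simulation lemma instead of citing it, and that you explicitly flag the $d^\pi$-versus-$\tilde{d}^\pi$ mismatch in the model-error norm (which the paper silently glosses over); both points are consistent with the paper's argument.
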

\vspace{-1mm}
The error of model-based approach is irrelevant to the $C_d$ term, illustrating that incorporating prior knowledge about transitions from the offline dataset can address the issue of distribution shift. Nevertheless, this method is subject to a compounding error, with an error bound expressed as $O\left(\frac{1}{(1-\gamma)^3}\right)$. This limitation implies that model-based methods may not surpass the performance of model-free methods~\cite{moto}. In contrast, the error bound for EDIS is lower, at $O\left(\frac{1}{(1-\gamma)^2}\right)$, which effectively reduces the impact of compounding error.
In summary, EDIS overcomes all the issues previous methods have in theory. 

\begin{table*}[t]
\caption{Enhanced performance achieved by EDIS after 0.2M online fine-tuning on base algorithms Cal-QL and IQL. Each result is the average score over five random seeds $\pm$ standard deviation.}
\label{sample-table}
\vskip 0.15in
\begin{center}
\resizebox{.9\textwidth}{!}{
\begin{tabular}{@{}lcc|cc|cccc@{}}
\toprule
\multirow{2}{*}[-0.5ex]{\textbf{Dataset}} & \multicolumn{2}{c}{Cal-QL} & \multicolumn{2}{c}{IQL} & \multicolumn{2}{c}{Avg.} \\
\cmidrule(r){2-3} \cmidrule(l){4-5} \cmidrule(l){6-7}
& Base & Ours & Base & Ours & Base & Ours \\
\midrule
hopper-random-v2 & 17.6$\pm$3.1 & \textbf{98.1$\pm$12.3} & 10.0$\pm$1.7 & \textbf{12.1$\pm$4.0} & 13.8$\pm$2.4 & \textbf{55.1$\pm$8.2} \\
hopper-medium-replay-v2 & 102.2$\pm$4.6 & \textbf{109.9$\pm$0.8} & 99.0$\pm$4.7 & \textbf{101.1$\pm$1.6} & 100.6$\pm$4.7 & \textbf{105.5$\pm$1.2} \\
hopper-medium-v2 & 97.6$\pm$1.4 & \textbf{105.0$\pm$4.1} & 59.2$\pm$7.9 & \textbf{73.1$\pm$4.4} & 78.4$\pm$4.7 & \textbf{89.1$\pm$4.3} \\
hopper-medium-expert-v2 & 107.9$\pm$9.6 & \textbf{109.7$\pm$1.4} & 90.0$\pm$22.1 & \textbf{105.7$\pm$6.7} & 99.0$\pm$15.9 & \textbf{107.7$\pm$4.1} \\
halfcheetah-random-v2 & 74.8$\pm$3.2 & \textbf{86.3$\pm$1.8} & 36.7$\pm$3.0 & \textbf{38.9$\pm$1.9 }& 55.8$\pm$3.1 & \textbf{62.6$\pm$1.9} \\
halfcheetah-medium-replay-v2 & 76.6$\pm$1.2 & \textbf{86.7$\pm$1.4} & 45.6$\pm$0.4 & \textbf{47.1$\pm$0.3} & 61.1$\pm$0.8 & \textbf{66.9$\pm$0.9} \\
halfcheetah-medium-v2 & 72.3$\pm$2.1 & \textbf{83.9$\pm$1.0} & 48.6$\pm$0.2 & \textbf{49.8$\pm$0.2} & 60.5$\pm$1.2 & \textbf{66.9$\pm$0.6} \\
halfcheetah-medium-expert-v2 & 91.0$\pm$0.6 & \textbf{98.6$\pm$0.5} & \textbf{87.9$\pm$4.3} & 85.4$\pm$2.7 & 89.5$\pm$2.5 & \textbf{92.0$\pm$1.6} \\
walker2d-random-v2 & 15.1$\pm$3.5 & \textbf{61.6$\pm$12.6} & 6.5$\pm$0.7 & \textbf{16.2$\pm$2.9} & 10.8$\pm$2.1 & \textbf{38.9$\pm$7.8} \\
walker2d-medium-replay-v2 & 87.3$\pm$8.5 & \textbf{112.9$\pm$6.4} & 83.6$\pm$2.0 & \textbf{95.3$\pm$1.4} & 85.5$\pm$5.3 & \textbf{104.1$\pm$3.9} \\
walker2d-medium-v2 & 84.2$\pm$0.3 & \textbf{103.5$\pm$1.8} & 83.6$\pm$2.1 & \textbf{85.2$\pm$1.3} & 83.9$\pm$1.2 & \textbf{94.4$\pm$1.6} \\
walker2d-medium-expert-v2 & 111.1$\pm$0.6 & \textbf{118.5$\pm$4.0} & \textbf{108.9$\pm$2.9} & 107.5$\pm$4.5 & 110.0$\pm$1.8 & \textbf{113.0$\pm$4.3} \\
\addlinespace
\midrule
locomotion total & 937.7 & \textbf{1174.7} & 759.6 & \textbf{856.3} & 848.7 & \textbf{1015.5} \\
\midrule
\addlinespace
antmaze-umaze-v2 & 96.3$\pm$1.4 & \textbf{98.9$\pm$1.3} & 79.2$\pm$4.1 & \textbf{81.1$\pm$3.4} & 87.8$\pm$2.8 & \textbf{90.0$\pm$2.4} \\
antmaze-umaze-diverse-v2 & 93.4$\pm$4.6 & \textbf{95.9$\pm$2.8} & 51.3$\pm$4.5 & \textbf{66.7$\pm$5.0} & 72.4$\pm$4.6 & \textbf{81.3$\pm$3.9} \\
antmaze-medium-diverse-v2 & 81.4$\pm$3.9 & \textbf{89.3$\pm$4.8} & 75.6$\pm$1.9 & \textbf{81.8$\pm$4.8} & 78.5$\pm$2.9 & \textbf{85.6$\pm$4.8} \\
antmaze-medium-play-v2 & 86.8$\pm$1.6 & \textbf{93.9$\pm$2.7} & 81.0$\pm$2.2 & \textbf{86.2$\pm$1.3} & 83.9$\pm$1.9 & \textbf{90.1$\pm$2.0} \\
antmaze-large-play-v2 & 42.5$\pm$5.2 & \textbf{66.1$\pm$8.2} & 39.2$\pm$7.2 & \textbf{40.0$\pm$5.3} & 40.9$\pm$6.2 & \textbf{53.1$\pm$6.8} \\
antmaze-large-diverse-v2 & 42.3$\pm$2.2 & \textbf{57.1$\pm$2.8} & 45.0$\pm$8.7 & \textbf{52.1$\pm$2.6} & 43.7$\pm$5.5 & \textbf{54.6$\pm$2.7} \\
\addlinespace
\midrule
antmaze total & 442.7 & \textbf{501.2} & 371.3 & \textbf{407.9} & 407.0 & \textbf{454.6} \\
\midrule
relocate-human-v1 & -0.4$\pm$0.2 & \textbf{0.2$\pm$0.2} & 1.4$\pm$0.3 & \textbf{1.7$\pm$0.4} & 0.5$\pm$0.3 & \textbf{1.0$\pm$0.3} \\
pen-human-v1 & 68.4$\pm$8.7 & \textbf{95.6$\pm$6.2} & 91.2$\pm$5.2 & \textbf{103.6$\pm$4.2} & 79.8$\pm$7.0 & \textbf{99.6$\pm$5.2} \\
door-human-v1 & 0.1$\pm$0.2 & \textbf{58.4$\pm$17.6} & 20.7$\pm$4.6 & \textbf{25.6$\pm$3.3} & 10.4$\pm$2.4 & \textbf{42.0$\pm$10.5} \\
\midrule
adroit total & 68.1 & \textbf{154.2} & 113.3 & \textbf{130.9} & 90.7  & \textbf{142.3} \\
\midrule
\midrule
total & 1448.5 & \textbf{1830.1} & 1244.2 & \textbf{1395.1} & 1346.4 & \textbf{1612.4} \\
\bottomrule
\end{tabular}}
\label{tab:performancemain}
\end{center}
\vskip -0.1in
\end{table*}

\section{Experiments}

In this section, we empirically validate the effectiveness of EDIS. Sec.~\ref{performance} showcases the considerable performance enhancement achieved by EDIS when integrated with off-the-shelf offline-to-online algorithms. Sec.~\ref{sec:reason} investigates the reasons for this improvement, highlighting two key factors: i) the robust distribution modeling capability of diffusion models, and ii) the focus on modeling data distribution instead of the transition function, which effectively mitigates compounding errors. Sec.~\ref{sec:ablation} includes an ablation study on three energy functions to underscore their critical role in our algorithm. In this section, every experiment result is averaged over five random seeds.

\begin{figure*}
	\centering
	\makebox[\textwidth][c]{
	\includegraphics[width=0.85\paperwidth]{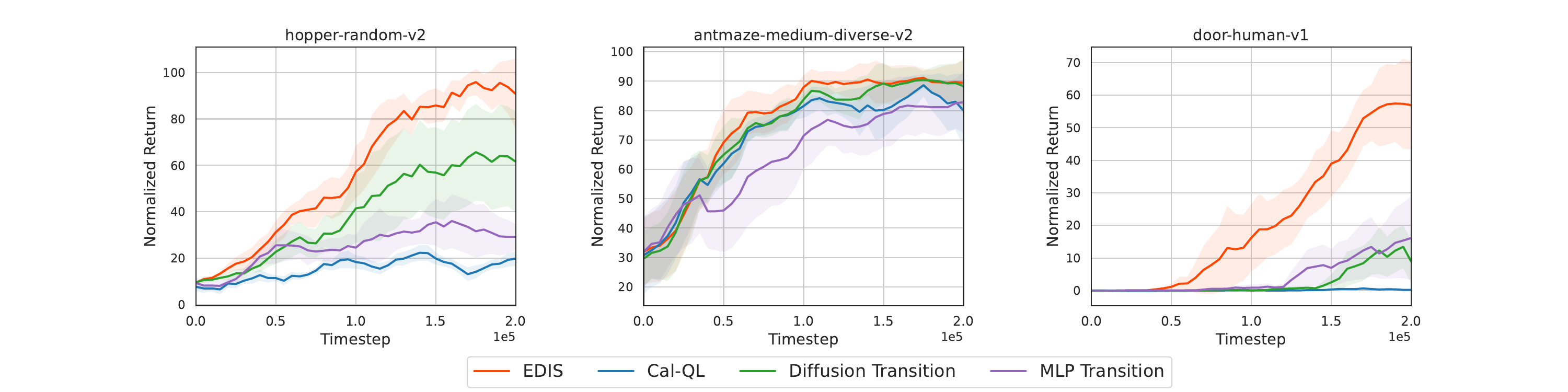}
	}
    \vspace{-1.5em}
 \caption{Comparison of EDIS and traditional model-based method. Diffusion transition and MLP transition mean the transition function is modeled by diffusion model and MLP respectively.}
	\label{fig:model}
    \vspace{-0.5em}
\end{figure*}
\subsection{Enhanced Performance Achieved by EDIS}
\label{performance}
We evaluate the performance of EDIS on three benchmark tasks from D4RL~\cite{d4rl}: MuJoCo Locomotion, AntMaze Navigation, and Adroit Manipulation. 
We implement EDIS on top of base algorithms Cal-QL~\cite{calql}, a state-of-the-art offline-to-online method that effectively calibrates over-conservatism of CQL~\cite{CQL}, and IQL~\cite{iql}, which adopts AWR-style policy learning in both phases. The 0.2M fine-tuning results after 1M pre-training are presented in Tab.~\ref{sample-table}. 

Notably, the integration of EDIS with Cal-QL yields a substantial 26.3\% improvement in overall performance, demonstrating significant enhancements in the Adroit domain and MuJoCo random datasets. The latter are characterized by a scarcity of successful demonstrations, creating a substantial gap between offline and online distributions that poses a challenge to previous methods. EDIS addresses this issue by directly generating online samples, overcoming the distribution shift issue. Even combined with IQL, where the conservatism cannot be entirely dropped during online fine-tuning due to its reliance on in-sample data, EDIS still achieves an average performance improvement of 12.1\%.  For more information on implementation details, please refer to Appx.~\ref{appendix:experiment}.

\vspace{-0.5em}

\subsection{Comparisons between EDIS and Basic Model-based Methods}
\label{sec:reason}
While EDIS demonstrates superior performance on the D4RL benchmarks, a natural question arises: what contributes to this improvement? We hypothesize two main contributors to EDIS's success: firstly, the superior representational capabilities of diffusion models, and secondly, the innovative modeling approach that focuses on the distribution of $(s,a,s')$ tuples instead of modeling the transition function as is common in traditional model-based methods.  To substantiate our hypothesis, we have undertaken further experiments involving two models tasked with representing the transition function of the environments. The models have equivalent capacity as EDIS (i.e., having the same number of parameters), allowing for a direct comparison. These models are then used to rollout current policies within the learned model, thereby augmenting online data. One model employs a multi-layer perceptron (MLP) structure, while the other utilizes a diffusion model. The base algorithm is Cal-QL and the results are shown in Fig.~\ref{fig:model}. It can be seen that the MLP model does not exhibit enhanced performance compared to the original algorithm. According to the theory outlined in Sec.~\ref{sec_theory}, this suggests that the performance drop induced by distribution shift and MLP model error is similar.

In contrast, the diffusion transition model shows some improvement, indicating that diffusion model suffers smaller model error than MLP model because of its superior distribution modeling ability. Despite this, simply employing a powerful diffusion model in conjunction with traditional data augmentation methods could not match the performance achieved by EDIS. This is attributed to the compounding error introduced by rolling out policies in a misspecified transition model. The unique data generation approach of EDIS, which directly generates online samples, proves crucial in achieving its remarkable performance. To visualize the effect of compounding error reduction, we compare the state distribution generated by these models. The result is defered to Appx.~\ref{appx_visualize} because of space limitation.
In summary, the performance enhancement observed with EDIS stems not only from the capability of the diffusion model but also from its distinctive data generation methodology.

\vspace{-0.5em}

\subsection{Ablation for Energy Functions in EDIS}

\label{sec:ablation}

\begin{table*}[t]
\caption{Divergence comparisons for energy function ablation study (lower is better). Each result is the average score over five random seeds $\pm$ standard deviation.}
\label{tab:ablationenergy}
\vskip 0.15in
\begin{center}
\footnotesize
\begin{tabular}{@{}lcc|cc|cccc@{}}
\toprule
\multirow{2}{*}[-0.5ex]{\textbf{Dataset}} & \multicolumn{2}{c}{State Divergence} & \multicolumn{2}{c}{Action Divergence} & \multicolumn{2}{c}{Transition Divergence} \\
\cmidrule(r){2-3} \cmidrule(l){4-5} \cmidrule(l){6-7}
& w/o energy & w/ energy & w/o energy & w/ energy & w/o energy & w/ energy \\
\midrule
hopper-radnom-v2 & 0.85$\pm$0.02 & \textbf{0.73$\pm$0.03} & 0.51$\pm$0.04 & \textbf{0.39$\pm$0.02} & 0.69$\pm$0.03 & \textbf{0.66$\pm$0.04} \\

antmaze-medium-diverse-v2 & 0.98$\pm$0.00 & \textbf{0.91$\pm$0.02} & 0.38$\pm$0.08 & \textbf{0.27$\pm$0.08} & 0.75$\pm$0.14 & \textbf{0.64$\pm$0.14} \\
door-human-v1 & 0.40$\pm$0.04 & \textbf{0.24$\pm$0.02} & 0.54$\pm$0.07 & \textbf{0.41$\pm$0.08} & 2.57$\pm$0.06 & \textbf{2.52$\pm$0.05} \\ 
\bottomrule
\end{tabular}
\label{tab:performance}
\end{center}
\vskip -0.1in
\end{table*}

\begin{figure*}
	\centering
	\makebox[\textwidth][c]{
	\includegraphics[width=0.85\paperwidth]{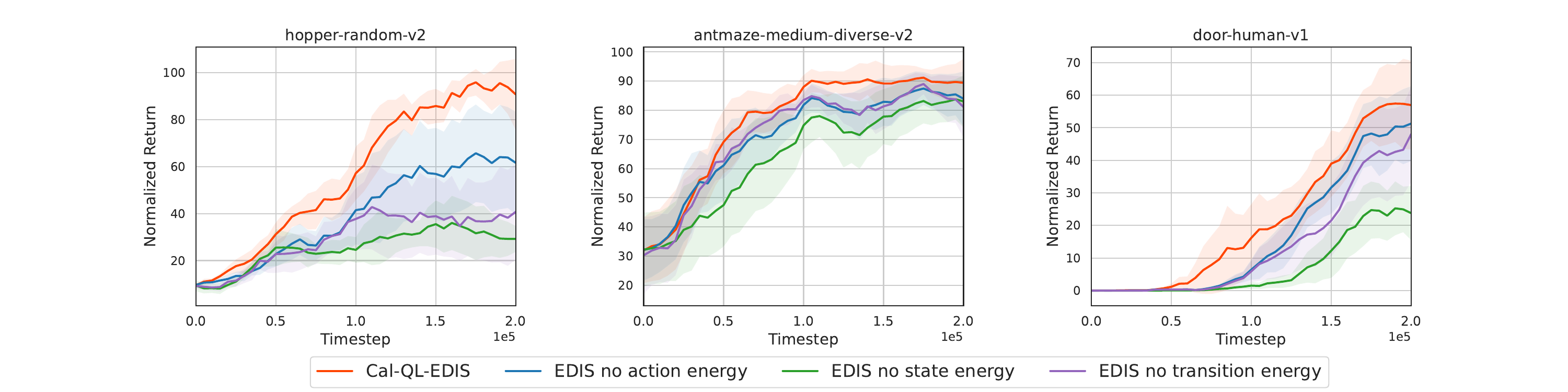}
	}
     \vspace{-2em}
	\caption{Energy Module Ablation Study of EDIS}
	\label{fig:ablation}
\end{figure*}

The energy functions, as described in Sec.~\ref{thm_energy}, align each component in the generated distribution with their online counterparts, including the state distribution, the action distribution under the current policy, and the transition fidelity.
We conduct ablation studies on each energy function by omitting its guidance during the reverse-time \emph{SDE} in hopper-random-v2, antmaze-medium-diverse-v2, and door-human-v1. We compare the divergence between the distribution generated by diffusion models and the real environments. For the state divergence, we calculate the JS divergence between the two distributions. The details of JS divergence and its calculation state distribution are shown in Appx.~\ref{appx_ablation}. For action divergence and state divergence, we calculate the mean square loss of the action or next state generated by the diffusion model and the current policy or the real dynamics.
As shown in Tab.~\ref{tab:ablationenergy}, the guidance of the energy function effectively decreases the divergence from the online sample distribution, real-world transitions, and actions induced by the policy interacting with the environment.

Furthermore, the results in Fig.~\ref{fig:ablation} demonstrate that a larger divergence of the three parts detriments the final performance.
Particularly, in environments like hopper-random-v2, EDIS struggles to excel without the state energy function, demonstrating the ineffectiveness of merging unrelated offline data directly into the online fine-tuning phase.

\section{Related Work}
In this section, we provide a brief summary of related work, the complete version is in Appx.~\ref{appx_related_work}.

\textbf{Offline-to-online Reinforcement Learning.}
Offline-to-online reinforcement learning methods are developed to mitigate the dichotomy between the costly exploration in online RL and the typically suboptimal performance of offline RL.
The learning process is typically divided into two phases: Warming-up the policy and value functions in the offline phase and using them as initialization in the online phase~\cite{ awac, oncefamily, calql, balanced}. These approaches often employ offline RL methods based on policy constraints or pessimism in the offline phase~\cite{bcq, td3bc, CQL}. 
However, the conservatism conflicts with the online phase and may induce performance degradation.
Various strategies have been implemented to tackle this issue,
such as policy expansion \cite{pex}, value function calibration \cite{calql}, Q-ensemble techniques \cite{balanced}, and constraint methods \cite{awac, iql, proto}. 
Despite the advancements, there has been less focus on the crucial aspect of integrating useful data during the fine-tuning phase to enhance training efficiency. Standard practices include enriching the replay buffer with offline data \cite{calql, pex}, adopting balanced sampling methods for managing both online and offline data sources \cite{balanced, efficientonline}, or building models to performance branch rollout~\cite{moto}. However, directly replaying the offline data causes a distribution shift, and adopting balanced sampling methods introduces large variance~\cite{dualdice} while rolling in the built model suffers from compounding error~\cite{MBPO}. 
In contrast, our work breaks new ground by proposing a diffusion-based generator specifically designed to generate useful samples by thoroughly leveraging prior knowledge in offline data.

\textbf{Diffusion Model in Reinforcement Learning.}
Diffusion models have demonstrated exceptional capabilities in modeling distribution~\citep{SahariaCSLWDGLA22, NicholDRSMMSC22, NicholD21}. Within the reinforcement learning research, Diffuser~\citep{diffuser} uses a diffusion model as a trajectory generator and learns a separate return model to guide the reverse diffusion process toward samples of high-return trajectories. Decision Diffuser~\citep{DecisionDiffuser} introduces conditional diffusion with reward or constraint guidance for decision-making tasks, further boosting Diffuser’s performance.
Expanding the application of diffusion models, SYNTHER~\cite{synther} focuses on leveraging the diffusion model for upsampling data in both online and offline reinforcement learning scenarios. Recently, several concurrent work also investigates generating samples with certain data distribution by diffusion models.
PolyGRAD~\cite{policyguidedtrajdiff} and PGD~\cite{policyguideddiffusion} embed the policy for classifier-guided trajectory generation, aiming at on-policy world modeling. However, they still use the diffusion model to model the transition function rather than the data distribution directly, which does not eliminate the issue of compounding error. Finally, in the context of offline-to-online reinforcement learning, our research pioneers the utilization of diffusion-based models to actively generate valuable samples. This distinction from the passive reuse of offline data marks EDIS as a novel approach, emphasizing the active role of diffusion models in sample generation for offline-to-online RL.

\section{Conclusion}
This study underscores the crucial role of harnessing offline data effectively to augment online fine-tuning. We introduce Energy-guided Diffusion Sampling (EDIS), a powerful approach leveraging a state-of-the-art generative model with energy guidance. In contrast to existing strategies that primarily rely on initializing with offline-derived policies and Q functions or replaying offline data, EDIS takes a proactive stance by generating synthetic data within the online data distribution, drawing information from all available data sources.
Theoretical analysis verifies that EDIS outperforms prior methods in addressing distribution shift challenges and effectively mitigates compounding errors. As a versatile solution, EDIS seamlessly integrates with prevalent offline-to-online frameworks. When combined with methods like Cal-QL and IQL, EDIS showcases substantial performance enhancements.
This superior generative modeling capabilities of EDIS mark it as a scalable approach, providing promising potential for designing data-efficient learning strategies in more complex environments with high-dimensional state or action spaces.

\section*{Acknowledgement}
This work is supported by the National Science Foundation of China (61921006). The authors thank anonymous reviewers for their helpful discussions and suggestions for
improving the article.

\section*{Impact Statement}
This paper presents work whose goal is to advance the field of Machine Learning. There are many potential societal consequences of our work, none of which we feel must be specifically highlighted here.

\bibliography{example_paper}

\begin{thebibliography}{62}
\providecommand{\natexlab}[1]{#1}
\providecommand{\url}[1]{\texttt{#1}}
\expandafter\ifx\csname urlstyle\endcsname\relax
  \providecommand{\doi}[1]{doi: #1}\else
  \providecommand{\doi}{doi: \begingroup \urlstyle{rm}\Url}\fi

\bibitem[Ajay et~al.(2021)Ajay, Kumar, Agrawal, Levine, and Nachum]{opal}
Ajay, A., Kumar, A., Agrawal, P., Levine, S., and Nachum, O.
\newblock {OPAL:} offline primitive discovery for accelerating offline reinforcement learning.
\newblock In \emph{Proceedings of the 9th International Conference on Learning Representations (ICLR'21)}, Virtual Event, 2021.

\bibitem[Ajay et~al.(2023)Ajay, Du, Gupta, Tenenbaum, Jaakkola, and Agrawal]{DecisionDiffuser}
Ajay, A., Du, Y., Gupta, A., Tenenbaum, J.~B., Jaakkola, T.~S., and Agrawal, P.
\newblock Is conditional generative modeling all you need for decision making?
\newblock In \emph{The Eleventh International Conference on Learning Representations (ICLR'23)}, Kigali, Rwanda, 2023.

\bibitem[Ball et~al.(2023)Ball, Smith, Kostrikov, and Levine]{efficientonline}
Ball, P.~J., Smith, L.~M., Kostrikov, I., and Levine, S.
\newblock Efficient online reinforcement learning with offline data.
\newblock In \emph{International Conference on Machine Learning (ICML'23)}, Honolulu, USA, 2023.

\bibitem[Chen et~al.(2023{\natexlab{a}})Chen, Lu, Ying, Su, and Zhu]{sfbc}
Chen, H., Lu, C., Ying, C., Su, H., and Zhu, J.
\newblock Offline reinforcement learning via high-fidelity generative behavior modeling.
\newblock In \emph{The 11th International Conference on Learning Representations (ICLR'23)}, Kigali, Rwanda, 2023{\natexlab{a}}.

\bibitem[Chen \& Jiang(2019)Chen and Jiang]{fqi}
Chen, J. and Jiang, N.
\newblock Information-theoretic considerations in batch reinforcement learning.
\newblock In \emph{Proceedings of the 36th International Conference on Machine Learning (ICML'19)}, Long Beach, USA, 2019.

\bibitem[Chen et~al.(2024)Chen, Liu, Liu, Jiang, Xu, and Yu]{foda}
Chen, R., Liu, X., Liu, T., Jiang, S., Xu, F., and Yu, Y.
\newblock Foresight distribution adjustment for off-policy reinforcement learning.
\newblock In \emph{Proceedings of the 23rd International Conference on Autonomous Agents and Multiagent Systems (AAMAS'24)}, pp.\  317--325, Auckland, New Zealand, 2024.

\bibitem[Chen et~al.(2023{\natexlab{b}})Chen, Yu, Zhu, Yu, Chen, Wang, Wu, Qin, Wu, Ding, and Huang]{xionghui}
Chen, X., Yu, Y., Zhu, Z., Yu, Z., Chen, Z., Wang, C., Wu, Y., Qin, R., Wu, H., Ding, R., and Huang, F.
\newblock Adversarial counterfactual environment model learning.
\newblock In \emph{Advances in Neural Information Processing Systems 36 (NeurIPS'23)}, New Orleans, LA, 2023{\natexlab{b}}.

\bibitem[Fu et~al.(2020)Fu, Kumar, Nachum, Tucker, and Levine]{d4rl}
Fu, J., Kumar, A., Nachum, O., Tucker, G., and Levine, S.
\newblock {D4RL:} datasets for deep data-driven reinforcement learning.
\newblock \emph{CoRR}, abs/2004.07219, 2020.

\bibitem[Fujimoto \& Gu(2021)Fujimoto and Gu]{td3bc}
Fujimoto, S. and Gu, S.~S.
\newblock A minimalist approach to offline reinforcement learning.
\newblock In \emph{Advances in Neural Information Processing Systems 34 (NeurIPS'21)}, Virtual Event, 2021.

\bibitem[Fujimoto et~al.(2019)Fujimoto, Meger, and Precup]{bcq}
Fujimoto, S., Meger, D., and Precup, D.
\newblock Off-policy deep reinforcement learning without exploration.
\newblock In \emph{Proceedings of the 36th International Conference on Machine Learning (ICML'19)}, Long Beach, USA, 2019.

\bibitem[Goodfellow et~al.(2014)Goodfellow, Pouget{-}Abadie, Mirza, Xu, Warde{-}Farley, Ozair, Courville, and Bengio]{gan}
Goodfellow, I.~J., Pouget{-}Abadie, J., Mirza, M., Xu, B., Warde{-}Farley, D., Ozair, S., Courville, A.~C., and Bengio, Y.
\newblock Generative adversarial nets.
\newblock In \emph{Advances in Neural Information Processing Systems 27 (NeurIPS'24)}, pp.\  2672--2680, Montreal, Canada, 2014.

\bibitem[Gupta et~al.(2019)Gupta, Kumar, Lynch, Levine, and Hausman]{relay}
Gupta, A., Kumar, V., Lynch, C., Levine, S., and Hausman, K.
\newblock Relay policy learning: Solving long-horizon tasks via imitation and reinforcement learning.
\newblock In \emph{Proceedings of the 3rd Annual Conference on Robot Learning (CoRL'19)}, Osaka, Japan, 2019.

\bibitem[Haarnoja et~al.(2018)Haarnoja, Zhou, Abbeel, and Levine]{sac}
Haarnoja, T., Zhou, A., Abbeel, P., and Levine, S.
\newblock Soft actor-critic: Off-policy maximum entropy deep reinforcement learning with a stochastic actor.
\newblock In \emph{Proceedings of the 35th International Conference on Machine Learning (ICML'18)}, Stockholmsm{\"{a}}ssan, Sweden, 2018.

\bibitem[Hester et~al.(2018)Hester, Vecerik, Pietquin, Lanctot, Schaul, Piot, Horgan, Quan, Sendonaris, Osband, et~al.]{deepqdemo}
Hester, T., Vecerik, M., Pietquin, O., Lanctot, M., Schaul, T., Piot, B., Horgan, D., Quan, J., Sendonaris, A., Osband, I., et~al.
\newblock Deep q-learning from demonstrations.
\newblock In \emph{Proceedings of the 32nd AAAI conference on artificial intelligence (AAAI'18)}, New Orleans, USA, 2018.

\bibitem[Ho et~al.(2020)Ho, Jain, and Abbeel]{diff}
Ho, J., Jain, A., and Abbeel, P.
\newblock Denoising diffusion probabilistic models.
\newblock In \emph{Proceedings of the 33rd Neural Information Processing Systems (NeurIPS'20)}, Virtual Event, 2020.

\bibitem[Hubbs et~al.(2020)Hubbs, Perez, Sarwar, Sahinidis, Grossmann, and Wassick]{hubbs2020or}
Hubbs, C.~D., Perez, H.~D., Sarwar, O., Sahinidis, N.~V., Grossmann, I.~E., and Wassick, J.~M.
\newblock Or-gym: A reinforcement learning library for operations research problems.
\newblock \emph{CoRR}, abs/2008.06319, 2020.

\bibitem[Jackson et~al.(2024)Jackson, Matthews, Lu, Ellis, Whiteson, and Foerster]{policyguideddiffusion}
Jackson, M.~T., Matthews, M.~T., Lu, C., Ellis, B., Whiteson, S., and Foerster, J.~N.
\newblock Policy-guided diffusion.
\newblock \emph{CoRR}, abs/2404.06356, 2024.

\bibitem[Janner et~al.(2019)Janner, Fu, Zhang, and Levine]{MBPO}
Janner, M., Fu, J., Zhang, M., and Levine, S.
\newblock When to trust your model: {M}odel-based policy optimization.
\newblock In \emph{Proceedings of the 32nd Neural Information Processing Systems (NeurIPS'19)}, Vancouver, Canada, 2019.

\bibitem[Janner et~al.(2022)Janner, Du, Tenenbaum, and Levine]{diffuser}
Janner, M., Du, Y., Tenenbaum, J., and Levine, S.
\newblock Planning with diffusion for flexible behavior synthesis.
\newblock In \emph{Proceedings of the 39th International Conference on Machine Learning (ICML'22)}, Baltimore, USA, 2022.

\bibitem[Jia et~al.(2024)Jia, Zhang, Li, Gao, Liu, Yuan, Zhang, and Yu]{ada}
Jia, C., Zhang, F., Li, Y., Gao, C., Liu, X., Yuan, L., Zhang, Z., and Yu, Y.
\newblock Disentangling policy from offline task representation learning via adversarial data augmentation.
\newblock In \emph{Proceedings of the 23rd International Conference on Autonomous Agents and Multiagent Systems (AAMAS'24)}, pp.\  944--953, Auckland, New Zealand, 2024.

\bibitem[Jin et~al.(2022)Jin, Liu, Jiang, and Yu]{hybrid}
Jin, X., Liu, X., Jiang, S., and Yu, Y.
\newblock Hybrid value estimation for off-policy evaluation and offline reinforcement learning.
\newblock \emph{CoRR}, abs/2206.02000, 2022.

\bibitem[Karras et~al.(2022)Karras, Aittala, Aila, and Laine]{elucidate}
Karras, T., Aittala, M., Aila, T., and Laine, S.
\newblock Elucidating the design space of diffusion-based generative models.
\newblock In \emph{Proceedings of the 36th Neural Information Processing Systems (NeurIPS'22)}, LA, USA, 2022.

\bibitem[Kostrikov et~al.(2022)Kostrikov, Nair, and Levine]{iql}
Kostrikov, I., Nair, A., and Levine, S.
\newblock Offline reinforcement learning with implicit q-learning.
\newblock In \emph{Proceedings of the 10-th International Conference on Learning Representations (ICLR'22)}, Virtual Event, 2022.

\bibitem[Kumar et~al.(2020)Kumar, Zhou, Tucker, and Levine]{CQL}
Kumar, A., Zhou, A., Tucker, G., and Levine, S.
\newblock Conservative q-learning for offline reinforcement learning.
\newblock In \emph{Advances in Neural Information Processing Systems 33 (NeurIPS'20)}, Virtual Event, 2020.

\bibitem[Lee et~al.(2021)Lee, Seo, Lee, Abbeel, and Shin]{balanced}
Lee, S., Seo, Y., Lee, K., Abbeel, P., and Shin, J.
\newblock Offline-to-online reinforcement learning via balanced replay and pessimistic q-ensemble.
\newblock In \emph{Conference on Robot Learning (CoRL'21)}, London, UK, 2021.

\bibitem[Li et~al.(2023)Li, Hu, Xu, Liu, Zhan, and Zhang]{proto}
Li, J., Hu, X., Xu, H., Liu, J., Zhan, X., and Zhang, Y.
\newblock {PROTO:} iterative policy regularized offline-to-online reinforcement learning.
\newblock \emph{CoRR}, abs/2305.15669, 2023.

\bibitem[Ling et~al.(2024)Ling, Wang, and Wang]{Ling_Wang_Wang_2024}
Ling, H., Wang, Z., and Wang, J.
\newblock Learning to stop cut generation for efficient mixed-integer linear programming.
\newblock \emph{Proceedings of the AAAI Conference on Artificial Intelligence (AAAI'24)}, 38\penalty0 (18):\penalty0 20759--20767, Mar. 2024.

\bibitem[Liu et~al.(2021)Liu, Xue, Pang, Jiang, Xu, and Yu]{liu2021regret}
Liu, X.-H., Xue, Z., Pang, J., Jiang, S., Xu, F., and Yu, Y.
\newblock Regret minimization experience replay in off-policy reinforcement learning.
\newblock In \emph{Proceedings of 34th conference on Neural Information Processing Systems (NeurIPS'21)}, Virtual Event, 2021.

\bibitem[Lu et~al.(2023{\natexlab{a}})Lu, Ball, and Parker{-}Holder]{synther}
Lu, C., Ball, P.~J., and Parker{-}Holder, J.
\newblock Synthetic experience replay.
\newblock \emph{CoRR}, abs/2303.06614, 2023{\natexlab{a}}.

\bibitem[Lu et~al.(2023{\natexlab{b}})Lu, Chen, Chen, Su, Li, and Zhu]{contrastive}
Lu, C., Chen, H., Chen, J., Su, H., Li, C., and Zhu, J.
\newblock Contrastive energy prediction for exact energy-guided diffusion sampling in offline reinforcement learning.
\newblock In \emph{Proceedings of the 40th International Conference on Machine Learning(ICML'23)}, volume 202, pp.\  22825--22855, Honolulu, HI, 2023{\natexlab{b}}.

\bibitem[Mazoure et~al.(2023)Mazoure, Talbott, Bautista, Hjelm, Toshev, and Susskind]{conditiondiffusioncontrol}
Mazoure, B., Talbott, W., Bautista, M.~{\'{A}}., Hjelm, R.~D., Toshev, A., and Susskind, J.~M.
\newblock Value function estimation using conditional diffusion models for control.
\newblock \emph{CoRR}, abs/2306.07290, 2023.

\bibitem[Mnih et~al.(2015)Mnih, Kavukcuoglu, Silver, Rusu, Veness, Bellemare, Graves, Riedmiller, Fidjeland, Ostrovski, Petersen, Beattie, Sadik, Antonoglou, King, Kumaran, Wierstra, Legg, and Hassabis]{dqn}
Mnih, V., Kavukcuoglu, K., Silver, D., Rusu, A.~A., Veness, J., Bellemare, M.~G., Graves, A., Riedmiller, M.~A., Fidjeland, A., Ostrovski, G., Petersen, S., Beattie, C., Sadik, A., Antonoglou, I., King, H., Kumaran, D., Wierstra, D., Legg, S., and Hassabis, D.
\newblock Human-level control through deep reinforcement learning.
\newblock \emph{Nature}, 518\penalty0 (7540):\penalty0 529--533, 2015.

\bibitem[Nachum et~al.(2019)Nachum, Chow, Dai, and Li]{dualdice}
Nachum, O., Chow, Y., Dai, B., and Li, L.
\newblock Dualdice: Behavior-agnostic estimation of discounted stationary distribution corrections.
\newblock In \emph{Proceedings of the 32nd Neural Information Processing Systems (NeurIPS'19)}, pp.\  2315--2325, Vancouver, Canada, 2019.

\bibitem[Nair et~al.(2020)Nair, Dalal, Gupta, and Levine]{awac}
Nair, A., Dalal, M., Gupta, A., and Levine, S.
\newblock Accelerating online reinforcement learning with offline datasets.
\newblock \emph{CoRR}, abs/2006.09359, 2020.

\bibitem[Nakamoto et~al.(2023)Nakamoto, Zhai, Singh, Mark, Ma, Finn, Kumar, and Levine]{calql}
Nakamoto, M., Zhai, Y., Singh, A., Mark, M.~S., Ma, Y., Finn, C., Kumar, A., and Levine, S.
\newblock Cal-ql: Calibrated offline {RL} pre-training for efficient online fine-tuning.
\newblock \emph{CoRR}, abs/2303.05479, 2023.

\bibitem[Nichol \& Dhariwal(2021)Nichol and Dhariwal]{NicholD21}
Nichol, A.~Q. and Dhariwal, P.
\newblock Improved denoising diffusion probabilistic models.
\newblock In Meila, M. and Zhang, T. (eds.), \emph{Proceedings of the 38th International Conference on Machine Learning (ICML'21)}, Virtual Event, 2021.

\bibitem[Nichol et~al.(2022)Nichol, Dhariwal, Ramesh, Shyam, Mishkin, McGrew, Sutskever, and Chen]{NicholDRSMMSC22}
Nichol, A.~Q., Dhariwal, P., Ramesh, A., Shyam, P., Mishkin, P., McGrew, B., Sutskever, I., and Chen, M.
\newblock {GLIDE:} towards photorealistic image generation and editing with text-guided diffusion models.
\newblock In \emph{Proceedings of the 39th International Conference on Machine Learning (ICML'22)}, Baltimore, USA, 2022.

\bibitem[Peng et~al.(2020)Peng, Coumans, Zhang, Lee, Tan, and Levine]{rc1}
Peng, X.~B., Coumans, E., Zhang, T., Lee, T.~E., Tan, J., and Levine, S.
\newblock Learning agile robotic locomotion skills by imitating animals.
\newblock In \emph{Proceedings of the 14th Robotics: Science and Systems (RSS'20)}, Virtual Event, 2020.

\bibitem[Rafailov et~al.(2023)Rafailov, Hatch, Kolev, Martin, Phielipp, and Finn]{moto}
Rafailov, R., Hatch, K.~B., Kolev, V., Martin, J.~D., Phielipp, M., and Finn, C.
\newblock Moto: Offline pre-training to online fine-tuning for model-based robot learning.
\newblock In \emph{Conference on Robot Learning (CoRL'23)}, pp.\  3654--3671, 2023.

\bibitem[Rajeswaran et~al.(2018)Rajeswaran, Kumar, Gupta, Vezzani, Schulman, Todorov, and Levine]{rob2}
Rajeswaran, A., Kumar, V., Gupta, A., Vezzani, G., Schulman, J., Todorov, E., and Levine, S.
\newblock Learning complex dexterous manipulation with deep reinforcement learning and demonstrations.
\newblock In \emph{Proceedings of the 12th Robotics: Science and Systems (RSS'18)}, PA, USA, 2018.

\bibitem[Rigter et~al.(2023)Rigter, Yamada, and Posner]{policyguidedtrajdiff}
Rigter, M., Yamada, J., and Posner, I.
\newblock World models via policy-guided trajectory diffusion.
\newblock \emph{CoRR}, abs/2312.08533, 2023.

\bibitem[Saharia et~al.(2022)Saharia, Chan, Saxena, Li, Whang, Denton, Ghasemipour, Lopes, Ayan, Salimans, Ho, Fleet, and Norouzi]{SahariaCSLWDGLA22}
Saharia, C., Chan, W., Saxena, S., Li, L., Whang, J., Denton, E.~L., Ghasemipour, S. K.~S., Lopes, R.~G., Ayan, B.~K., Salimans, T., Ho, J., Fleet, D.~J., and Norouzi, M.
\newblock Photorealistic text-to-image diffusion models with deep language understanding.
\newblock In \emph{Proceedings of the 35th Neural Information Processing Systems (NeurIPS'22)}, New Orleans, USA, 2022.

\bibitem[Schaal(1996)]{learnfromdemo}
Schaal, S.
\newblock Learning from demonstration.
\newblock \emph{Advances in neural information processing systems}, 9, 1996.

\bibitem[Sinha et~al.(2022)Sinha, Song, Garg, and Ermon]{lfiw}
Sinha, S., Song, J., Garg, A., and Ermon, S.
\newblock Experience replay with likelihood-free importance weights.
\newblock In \emph{Learning for Dynamics and Control Conference (L4RC'22)}, Stanford, USA, 2022.

\bibitem[Song et~al.(2022)Song, Zhou, Sekhari, Bagnell, Krishnamurthy, and Sun]{hybridrl}
Song, Y., Zhou, Y., Sekhari, A., Bagnell, J.~A., Krishnamurthy, A., and Sun, W.
\newblock Hybrid rl: Using both offline and online data can make rl efficient.
\newblock In \emph{The 11th International Conference on Learning Representations (ICLR'22)}, Virtual Event, 2022.

\bibitem[Sutton \& Barto(2018)Sutton and Barto]{sutton}
Sutton, R.~S. and Barto, A.~G.
\newblock \emph{Reinforcement learning: An introduction}.
\newblock MIT press, 2018.

\bibitem[Tarasov et~al.(2022)Tarasov, Nikulin, Akimov, Kurenkov, and Kolesnikov]{corl}
Tarasov, D., Nikulin, A., Akimov, D., Kurenkov, V., and Kolesnikov, S.
\newblock {CORL}: Research-oriented deep offline reinforcement learning library.
\newblock In \emph{3rd Offline RL Workshop: Offline RL as a ''Launchpad''}, 2022.

\bibitem[van~den Oord et~al.(2018)van~den Oord, Li, and Vinyals]{infonce}
van~den Oord, A., Li, Y., and Vinyals, O.
\newblock Representation learning with contrastive predictive coding.
\newblock \emph{CoRR}, abs/1807.03748, 2018.

\bibitem[Vecerik et~al.(2017)Vecerik, Hester, Scholz, Wang, Pietquin, Piot, Heess, Roth{\"o}rl, Lampe, and Riedmiller]{leveragingdemo}
Vecerik, M., Hester, T., Scholz, J., Wang, F., Pietquin, O., Piot, B., Heess, N., Roth{\"o}rl, T., Lampe, T., and Riedmiller, M.
\newblock Leveraging demonstrations for deep reinforcement learning on robotics problems with sparse rewards.
\newblock \emph{CoRR}, abs/1707.08817, 2017.

\bibitem[von Luxburg \& Sch{\"{o}}lkopf(2011)von Luxburg and Sch{\"{o}}lkopf]{vc}
von Luxburg, U. and Sch{\"{o}}lkopf, B.
\newblock Statistical learning theory: Models, concepts, and results.
\newblock In \emph{Inductive Logic}, volume~10, pp.\  651--706. Elsevier, 2011.

\bibitem[Wang et~al.(2024)Wang, Wang, Li, Kuang, Shi, Zhu, Yuan, Zeng, Zhang, and Wu]{wang2024learning}
Wang, J., Wang, Z., Li, X., Kuang, Y., Shi, Z., Zhu, F., Yuan, M., Zeng, J., Zhang, Y., and Wu, F.
\newblock Learning to cut via hierarchical sequence/set model for efficient mixed-integer programming.
\newblock \emph{CoRR}, abs/2008.06319, 2024.

\bibitem[Wang et~al.(2023{\natexlab{a}})Wang, Yang, Gao, Lin, Chen, Wu, Jia, Song, and Huang]{oncefamily}
Wang, S., Yang, Q., Gao, J., Lin, M.~G., Chen, H., Wu, L., Jia, N., Song, S., and Huang, G.
\newblock Train once, get a family: State-adaptive balances for offline-to-online reinforcement learning.
\newblock \emph{CoRR}, abs/2310.17966, 2023{\natexlab{a}}.

\bibitem[Wang et~al.(2018)Wang, Chen, Yang, Wu, Wu, and Xie]{rs1}
Wang, X., Chen, Y., Yang, J., Wu, L., Wu, Z., and Xie, X.
\newblock A reinforcement learning framework for explainable recommendation.
\newblock In \emph{Proceedings of the 18th International Conference on Data Mining (ICDM'18)}, Singapore, 2018.

\bibitem[Wang et~al.(2022)Wang, Wang, Zhou, Li, and Li]{Wang_Wang_Zhou_Li_Li_2022}
Wang, Z., Wang, J., Zhou, Q., Li, B., and Li, H.
\newblock Sample-efficient reinforcement learning via conservative model-based actor-critic.
\newblock \emph{Proceedings of the AAAI Conference on Artificial Intelligence (AAAI'22)}, 36\penalty0 (8):\penalty0 8612--8620, Jun. 2022.

\bibitem[Wang et~al.(2023{\natexlab{b}})Wang, Hunt, and Zhou]{DiffusionQL}
Wang, Z., Hunt, J.~J., and Zhou, M.
\newblock Diffusion policies as an expressive policy class for offline reinforcement learning.
\newblock In \emph{The Eleventh International Conference on Learning Representations (ICLR'23)}, Kigali, Rwanda, 2023{\natexlab{b}}.

\bibitem[Wang et~al.(2023{\natexlab{c}})Wang, Li, Wang, Kuang, Yuan, Zeng, Zhang, and Wu]{wang2023learning}
Wang, Z., Li, X., Wang, J., Kuang, Y., Yuan, M., Zeng, J., Zhang, Y., and Wu, F.
\newblock Learning cut selection for mixed-integer linear programming via hierarchical sequence model.
\newblock In \emph{The Eleventh International Conference on Learning Representations (ICLR'23)}, 2023{\natexlab{c}}.

\bibitem[Xu et~al.(2021)Xu, Li, and Yu]{nearly}
Xu, T., Li, Z., and Yu, Y.
\newblock Nearly minimax optimal adversarial imitation learning with known and unknown transitions.
\newblock \emph{CoRR abs/2106.10424}, 2021.

\bibitem[Zhang et~al.(2023)Zhang, Xu, and Yu]{pex}
Zhang, H., Xu, W., and Yu, H.
\newblock Policy expansion for bridging offline-to-online reinforcement learning.
\newblock In \emph{The Eleventh International Conference on Learning Representations(ICLR'23)}, Kigali, Rwanda, 2023.

\bibitem[Zhao et~al.(2018)Zhao, Zhang, Ding, Xia, Tang, and Yin]{rs2}
Zhao, X., Zhang, L., Ding, Z., Xia, L., Tang, J., and Yin, D.
\newblock Recommendations with negative feedback via pairwise deep reinforcement learning.
\newblock In \emph{Proceedings of the 24th International Conference on Knowledge Discovery {\&} Data Mining (KDD'18)}, London, UK, 2018.

\bibitem[Zhao et~al.(2022)Zhao, Boney, Ilin, Kannala, and Pajarinen]{adaptivebc}
Zhao, Y., Boney, R., Ilin, A., Kannala, J., and Pajarinen, J.
\newblock Adaptive behavior cloning regularization for stable offline-to-online reinforcement learning.
\newblock \emph{CoRR}, abs/2210.13846, 2022.

\bibitem[Zhu et~al.(2019)Zhu, Gupta, Rajeswaran, Levine, and Kumar]{rob3}
Zhu, H., Gupta, A., Rajeswaran, A., Levine, S., and Kumar, V.
\newblock Dexterous manipulation with deep reinforcement learning: Efficient, general, and low-cost.
\newblock In \emph{Proceedings of the 36th International Conference on Robotics and Automation (ICRA'19)}, Montreal, Canada, 2019.

\bibitem[Zhu et~al.(2018)Zhu, Wang, Merel, Rusu, Erez, Cabi, Tunyasuvunakool, Kram{\'{a}}r, Hadsell, de~Freitas, and Heess]{rob1}
Zhu, Y., Wang, Z., Merel, J., Rusu, A.~A., Erez, T., Cabi, S., Tunyasuvunakool, S., Kram{\'{a}}r, J., Hadsell, R., de~Freitas, N., and Heess, N.
\newblock Reinforcement and imitation learning for diverse visuomotor skills.
\newblock In \emph{Proceedings of the 12th Robotics: Science and Systems (RSS'18)}, PA, USA, 2018.

\end{thebibliography}
\bibliographystyle{icml2024}

\newpage
\appendix
\onecolumn
\section{Proofs and additional Theory}

\subsection{Proof of Thm.~\ref{thm_energy}}

Because $e^{\gE_1(s)}\propto \frac{p_\theta(s)}{d^\pi(s)}$, $e^{\gE_2(a|s)}\propto \frac{p_\theta(a|s)}{\pi(a|s))}$, $e^{\gE_3(s'|s,a)}\propto \frac{p_\theta(s'|s,a)}{T^\pi(s'|s,a))}$, we have $e^{\gE_1(s)}= k_1\frac{p_\theta(s)}{d^\pi(s)}$, $e^{\gE_2(a|s)}=k_2\frac{p_\theta(a|s)}{\pi(a|s)}$, $e^{\gE_3(s'|s,a)}=k_3\frac{p_\theta(s'|s,a)}{T(s'|s,a)}$, where $k_1$, $k_2$ and $k_3$ are arbitrary positive constants.
\begin{align*}
    &\quad \ e^{-(\gE_{1}(s)+\gE_{2}(a|s)+\gE_{3}(s'|s,a))}\\
    &=k_1k_2k_3\frac{d^\pi(s)\pi(a|s)T(s'|s,a)}{p_\theta(s)p_\theta(a|s)p_\theta(s'|s,a)}\\
    &=k_1k_2k_3\frac{q_\pi(s,a,s')}{p_\theta(s,a,s')},
\end{align*}
where the last equation comes from $q_\pi(s,a,s')=d^\pi(s)\pi(a|s)T(s'|s,a)$ and $p_\theta(s,a,s')=p_\theta(s)p_\theta(a|s)p_\theta(s'|s,a)$. Because $k_1k_2k_3>0$, we conclude $e^{-(\gE_{1}(s)+\gE_{2}(a|s)+\gE_{3}(s'|s,a))}\propto \frac{q_\pi(s,a,s')}{p_\theta(s,a,s')}$, then $p_\theta(s,a,s')e^{-\gE(s,a,s')}\propto q_\pi(s,a,s')$, and thus Eq.~(\ref{eq_modified_dist}) holds.

\subsection{Proof of Thm.~\ref{thm_no_model}}\label{appx_51}
Before proof this theorem, we introduce some new notations. 
$$\gT^\pi f(s,a):=r(s,a)+\gamma \E_{s'\sim P(s,a)}[V^\pi_f(s')],$$ where $V^\pi_f(s'):=\E_{a'\sim \pi}f(s',a')$. Recursively define $(\gT^\pi)^j=(\gT^\pi)(\gT^\pi)^{j-1}$.

Similarly, we define $$\gT f(s,a):=r(s,a)+\gamma \E_{s'\sim P(s,a)}[V_f(s')],$$
where $V_f(s'):=\max_{a'}f(s',a')$.

Given a function $f:X\mapsto \sR$, define its norm $\|\cdot\|_\mu$ with respect to distribution $\mu$ as:
$$\|f\|_\mu=\sqrt{\sum_xf^2(x)\mu(x)}.$$

Suppose $\nu(s)$ is a state state distribution $\pi(a|s)$ is an action distribution based on given state $s$, then $\nu\times \pi(s,a)=\nu(s)\pi(a|s)$.

\begin{lemma}[Theorem 1 of~\cite{lfiw}]\label{lemma_contraction}
    The Bellman operator $\gT$ is a $\gamma$-contraction with respect to the $\|\cdot\|_{d^\pi}$ norm, i.e., 
    $$\left\|\gT^\pi f-\gT^\pi f'\right\|_{d^\pi}\leq \gamma \left\|f-f'\right\|_{d^\pi}, \quad \forall f, f'\in \gF.$$
\end{lemma}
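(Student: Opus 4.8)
The plan is to prove the stated contraction through the squared norm, combining Jensen's inequality with an invariance property of the discounted stationary distribution $d^\pi$. First I would eliminate the reward term: setting $g := f - f'$, the linearity of $\gT^\pi$ in its value argument makes the rewards cancel, leaving
\[
(\gT^\pi f - \gT^\pi f')(s,a) = \gamma\,\E_{s'\sim P(s,a)}\,\E_{a'\sim\pi(\cdot|s')}\big[g(s',a')\big].
\]
Thus the entire difference is $\gamma$ times a one-step transition-and-policy average of $g$, and it suffices to control the $d^\pi$-weighted norm of this average.

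Next I would expand $\|\gT^\pi f - \gT^\pi f'\|_{d^\pi}^2 = \gamma^2 \sum_{s,a} d^\pi(s,a)\big(\E_{s',a'}[g(s',a')]\big)^2$ and apply Jensen's inequality to the convex map $t\mapsto t^2$, which is legitimate because drawing $s'\sim P(\cdot|s,a)$ and then $a'\sim\pi(\cdot|s')$ is a genuine probability measure over $(s',a')$. This pushes the square inside the expectation, $(\E[g])^2\le \E[g^2]$, and after swapping the order of summation the bound becomes
\[
\gamma^2 \sum_{s',a'} g(s',a')^2 \Big[\textstyle\sum_{s,a} d^\pi(s,a)\,P(s'|s,a)\,\pi(a'|s')\Big].
\]
The problem has now reduced to identifying the bracketed reweighting factor.

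The crux of the argument — and the step I expect to be the main obstacle — is an invariance (flow/balance) identity for $d^\pi$. Using $d^\pi(s')=\sum_t \gamma^t \Pr(s_t=s';\pi)$ together with $\Pr(s_{t+1}=s')=\sum_{s,a}\Pr(s_t=s,a_t=a;\pi)P(s'|s,a)$, I would show $\sum_{s,a} d^\pi(s,a)P(s'|s,a) = \tfrac1\gamma\big(d^\pi(s')-\rho_0(s')\big)\le \tfrac1\gamma d^\pi(s')$; multiplying by $\pi(a'|s')$ and using $d^\pi(s',a')=d^\pi(s')\pi(a'|s')$ then bounds the bracket by $\tfrac1\gamma d^\pi(s',a')$. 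This self-consistency of the weighting under the dynamics is exactly what makes $\gT^\pi$ contractive in the $d^\pi$-norm: in the sup-norm the $\gamma$-contraction is immediate, but here the weight must be (approximately) reproduced by one step of $\pi$, so the stationarity structure of $d^\pi$ is doing the real work.

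Combining the pieces, the $\gamma^2$ prefactor and the $1/\gamma$ from the balance identity leave $\|\gT^\pi f - \gT^\pi f'\|_{d^\pi}^2 \le \gamma \sum_{s',a'} d^\pi(s',a')\,g(s',a')^2 = \gamma\,\|f-f'\|_{d^\pi}^2$, i.e. a contraction of modulus $\sqrt\gamma<1$. I would note that the precise modulus hinges on which invariance one invokes: the discounted-occupancy identity above gives $\sqrt\gamma$, whereas invoking exact stationarity $d^\pi(s',a')=\sum_{s,a}d^\pi(s,a)P(s'|s,a)\pi(a'|s')$ collapses the extra $1/\gamma$ and yields the stated modulus $\gamma$ directly. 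In either case $\gT^\pi$ is a contraction in $\|\cdot\|_{d^\pi}$, which is all the downstream suboptimality bounds require.
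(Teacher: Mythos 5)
Your proposal is correct, and since the paper never proves this lemma itself---it is imported verbatim as Theorem~1 of~\cite{lfiw}---the relevant comparison is with that theorem's standard proof, which is exactly your template: cancel the rewards, apply Jensen's inequality to the one-step transition-and-policy average, then invoke an invariance property of $d^\pi$ to re-absorb the reweighting factor. The genuinely valuable part of your write-up is the closing observation, and it deserves to be stated more forcefully: the modulus you obtain depends on which invariance $d^\pi$ actually satisfies, and under this paper's own definition $d^\pi(s)=\sum_t\gamma^t\mathrm{Pr}(s_t=s;\pi)$ (discounted occupancy) the correct identity is the balance equation $\sum_{s,a}d^\pi(s,a)P(s'|s,a)=\frac{1}{\gamma}\left(d^\pi(s')-\rho_0(s')\right)$ that you derive, which yields only a $\sqrt{\gamma}$-contraction---and this cannot be improved. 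Concretely, for the deterministic two-state chain $A\to B\to B$ with a single action, $\rho_0$ concentrated on $A$, and $g=f-f'$ equal to $0$ on $A$ and $1$ on $B$, one computes $\|\gT^\pi f-\gT^\pi f'\|_{d^\pi}=\sqrt{\gamma}\,\|f-f'\|_{d^\pi}$ exactly, so the stated $\gamma$-modulus is false under the paper's definition of $d^\pi$. The $\gamma$-contraction requires the exact invariance $\sum_{s,a}d^\pi(s,a)P(s'|s,a)\pi(a'|s')=d^\pi(s',a')$, i.e., reading $d^\pi$ as the genuine stationary distribution of the chain induced by $\pi$; that is the setting of the cited theorem, but it is inconsistent with the preliminaries here. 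Your hedged conclusion is the right resolution: either reading gives a contraction with modulus bounded away from $1$, and the downstream results (Lem.~\ref{lemma_sample} and Thms.~\ref{thm_no_model}--\ref{thm_normal_error}) survive with $\sqrt{\gamma}$ in place of $\gamma$, with unchanged rates and only mildly worse constants (e.g., $\frac{1}{1-\gamma}$ replacing $\frac{1}{1-\gamma^2}$ in the recursion of Lem.~\ref{lemma_sample}, and $\frac{2\sqrt{\gamma}}{1-\sqrt{\gamma}}$ replacing $\frac{2\gamma}{1-\gamma}$ in front of $\epsilon_Q$).
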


\begin{lemma}\label{lemma_ratio}
    Let $\nu$ be any admissible state distribution, $\pi_1$ and $\pi_2$ be two policies, and $C_{\pi_1,\pi_2}=\max_{s,a}\frac{\pi_1(a|s)}{\pi_2(a|s)}$, $\left\|\cdot\right\|_{\nu \times\pi_1}\leq \sqrt{C_{\pi_1,\pi_2}}\left\|\cdot\right\|_{\nu\times\pi_2}$.
\end{lemma}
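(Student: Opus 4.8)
The plan is to expand both norms directly from the definition $\|f\|_\mu=\sqrt{\sum_x f^2(x)\mu(x)}$ and then exploit the pointwise domination of $\pi_1$ by $\pi_2$ that the coefficient $C_{\pi_1,\pi_2}$ encodes. First I would fix an arbitrary $f:\gS\times\gA\mapsto\sR$ and, recalling the convention $\nu\times\pi(s,a)=\nu(s)\pi(a|s)$, write $\|f\|_{\nu\times\pi_1}^2=\sum_{s,a}f^2(s,a)\,\nu(s)\,\pi_1(a|s)$ and likewise $\|f\|_{\nu\times\pi_2}^2=\sum_{s,a}f^2(s,a)\,\nu(s)\,\pi_2(a|s)$. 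Working with the squared norms avoids any square-root manipulation until the very end.

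The key step is the observation that, by the definition $C_{\pi_1,\pi_2}=\max_{s,a}\frac{\pi_1(a|s)}{\pi_2(a|s)}$, we have the uniform pointwise bound $\pi_1(a|s)\le C_{\pi_1,\pi_2}\,\pi_2(a|s)$ for every state-action pair $(s,a)$. Since each summand weight $f^2(s,a)\,\nu(s)$ is nonnegative, I can substitute this inequality termwise inside the sum without flipping its direction and then factor the constant $C_{\pi_1,\pi_2}$ out of the summation, giving $\|f\|_{\nu\times\pi_1}^2\le C_{\pi_1,\pi_2}\sum_{s,a}f^2(s,a)\,\nu(s)\,\pi_2(a|s)=C_{\pi_1,\pi_2}\,\|f\|_{\nu\times\pi_2}^2$. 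Taking square roots of both nonnegative sides yields $\|f\|_{\nu\times\pi_1}\le\sqrt{C_{\pi_1,\pi_2}}\,\|f\|_{\nu\times\pi_2}$, which is the claim; since $f$ was arbitrary this establishes the stated operator-norm inequality. The same argument transfers verbatim to continuous spaces by replacing the sums with integrals against $\nu(s)\pi_1(a|s)$ and $\nu(s)\pi_2(a|s)$.

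There is essentially no serious obstacle here: the result is an elementary consequence of pointwise domination plus nonnegativity of the integrand weights. The only points meriting a sentence of care are (i) that the nonnegativity of $f^2(s,a)\,\nu(s)$ is what preserves the inequality when the pointwise bound is inserted under the sum, and (ii) that $C_{\pi_1,\pi_2}$ is finite, i.e. the ratio $\pi_1(a|s)/\pi_2(a|s)$ is well defined, which requires $\pi_2$ to have support wherever $\pi_1$ does (otherwise the coefficient, and hence the bound, is trivially $+\infty$ and the statement holds vacuously).
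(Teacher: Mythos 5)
Your proof is correct and follows essentially the same route as the paper: expand the weighted norm, insert the pointwise bound $\pi_1(a|s)\le C_{\pi_1,\pi_2}\,\pi_2(a|s)$ under the sum (valid since the weights $f^2(s,a)\,\nu(s)$ are nonnegative), factor out the constant, and take square roots. Working with squared norms before taking the root is only a cosmetic reordering of the paper's argument, and your remarks on finiteness of $C_{\pi_1,\pi_2}$ and the continuous-space extension are sensible but not substantive departures.
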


\begin{proof}
    For any function $f:\gS\times \gA\to \sR$, we have
    \begin{align*}
        \left\|f\right\|_{\nu\times\pi_1}&=\left(\sum_{s,a}|f(s,a)|^2\nu(s)\pi_1(a|s)\right)^{1/2}\\
        &\leq \left(\sum_{s,a}|f(s,a)|^2 C_{\pi_1,\pi_2}\nu(s)\pi_2(a|s)\right)^{1/2}\\
        &=\sqrt{C_{\pi_1,\pi_2}}\left(\sum_{s,a}|f(s,a)|^2\nu(s)\pi_2(a|s)\right)^{1/2}\\
        &=\sqrt{C_{\pi_1,\pi_2}}\left\|f\right\|_{\nu\times\pi_2}.
    \end{align*}
\end{proof}

To simplify the notation, we use $\widehat{\gT}^\pi$ to denote the Bellman operator under limited data, i.e.,
$$\widehat{\gT}^\pi f=\argmin_g\sum_{i=1}^N(g-r-\gamma V^\pi_f)^2.$$
\begin{lemma}\label{lemma_sample}
    If $f\in \gF$, with probability at least $1-\delta$, we have
    $$\left\|(\widehat{\gT}^\pi)^j f - (\gT^\pi)^j f\right\|_{d^\pi} \leq \frac{1-\gamma^j}{1-\gamma}\frac{56V^2_{\max}\log\frac{|\gF|^2}{\delta}}{3n}.$$
\end{lemma}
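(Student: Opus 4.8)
The plan is to reduce the $j$-step claim to a single-step concentration bound for the empirical Bellman operator and then bootstrap it through a telescoping identity that leans on the $\gamma$-contraction of Lemma~\ref{lemma_contraction}. Throughout I write $A:=\widehat{\gT}^\pi$ and $T:=\gT^\pi$, and I treat the $n$ samples as drawn according to $d^\pi$ so that the empirical least-squares risk targets the $\|\cdot\|_{d^\pi}$-geometry.

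For the base case $j=1$, fix any input $g\in\gF$. The target $r+\gamma V^\pi_g(s')$ has conditional mean $\gT^\pi g(s,a)$, and by the closedness assumption $\gT^\pi g\in\gF$, so the population least-squares minimizer over $\gF$ is exactly $\gT^\pi g$ and the excess risk of the empirical minimizer $\widehat{\gT}^\pi g$ equals $\|\widehat{\gT}^\pi g-\gT^\pi g\|^2_{d^\pi}$. I would apply a Bernstein/Freedman-type inequality to the centered loss differences $(g'-r-\gamma V^\pi_g)^2-(\gT^\pi g-r-\gamma V^\pi_g)^2$ for candidates $g'\in\gF$; since all quantities are bounded by $V_{\max}$, the variance of each such difference is controlled by $V_{\max}^2$ times the squared distance being estimated, which is precisely what produces a fast, variance-localized rate with the constant $56/3$. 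A union bound over all pairs $(g,g')\in\gF\times\gF$ (the source of $|\gF|^2$ inside the logarithm) then gives, with probability at least $1-\delta$, the uniform estimate $\|\widehat{\gT}^\pi g-\gT^\pi g\|_{d^\pi}\le\epsilon_1$ for every $g\in\gF$, with $\epsilon_1=\frac{56V^2_{\max}\log(|\gF|^2/\delta)}{3n}$.

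To pass from one step to $j$ steps, I would telescope. Using only that $T$ is affine with linear part $\gamma P^\pi$ (the policy-averaged transition operator), the difference of the $j$-fold compositions satisfies
$$A^jf-T^jf=\sum_{k=0}^{j-1}(\gamma P^\pi)^k\bigl(A g_k-T g_k\bigr),\qquad g_k:=A^{\,j-1-k}f.$$
Because $\widehat{\gT}^\pi$ returns its minimizer within $\gF$, every iterate $g_k$ lies in $\gF$, so the uniform one-step estimate applies term by term. Taking the $\|\cdot\|_{d^\pi}$ norm, invoking the triangle inequality, and using that Lemma~\ref{lemma_contraction} makes $\gamma P^\pi$ a $\gamma$-contraction in this norm (hence $\|(\gamma P^\pi)^k h\|_{d^\pi}\le\gamma^k\|h\|_{d^\pi}$), I get
$$\bigl\|A^jf-T^jf\bigr\|_{d^\pi}\le\sum_{k=0}^{j-1}\gamma^k\,\epsilon_1=\frac{1-\gamma^j}{1-\gamma}\,\epsilon_1,$$
which is the asserted bound.

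The main obstacle is the base case: obtaining a variance-localized (Bernstein rather than Hoeffding) concentration with the exact constant, and correctly identifying the least-squares excess risk with the $d^\pi$-weighted squared error, so that the per-step quantity carries through the recursion. Once that uniform one-step estimate is secured, the remaining pieces — the affine telescoping identity and the contraction bound on $(\gamma P^\pi)^k$ — are routine, the only checks being that $T^k$ inherits the linear part $\gamma P^\pi$ and that each intermediate iterate $g_k$ remains in $\gF$ so the union bound covers it.
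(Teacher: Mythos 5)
Your proposal follows essentially the same route as the paper's proof: a one-step uniform bound obtained from Bernstein's inequality with variance localization and a union bound over pairs in $\gF\times\gF$ (the source of the $|\gF|^2$ and the $56/3$ constant, using that the empirical minimizer has nonpositive empirical excess risk), followed by propagation of that bound through the $\gamma$-contraction of Lemma~\ref{lemma_contraction}, with each iterate staying in $\gF$ so the uniform bound applies. Your explicit telescoping identity $\sum_{k=0}^{j-1}(\gamma P^\pi)^k\bigl(\widehat{\gT}^\pi g_k-\gT^\pi g_k\bigr)$ is just the unrolled form of the paper's step-by-step recursion (peel off the outermost application, invoke the one-step bound, contract the remainder), so the two arguments coincide in substance --- if anything, yours is tidier, since working with unsquared norms avoids the paper's loose application of the triangle inequality to squared norms and directly produces the factor $\frac{1-\gamma^j}{1-\gamma}$ appearing in the statement.
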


\begin{proof}
The proof is based on Lem.~16 of~\cite{fqi}.

First, we consider $\left\|\widehat{\gT}^\pi f - \gT^\pi f\right\|_{d^\pi}$. Define
\begin{align*}
&L_{d^\pi}(g, h)=\E_{s,a\sim d^{\pi}}(g-r-\gamma V^\pi_h)^2\\
&L_{D}(g, h)=\E_{s,a\sim D}(g-r-\gamma V_h^\pi )^2
\end{align*}
Then we have $\gT f=\argmin_g L_{d^\pi}(g,f)$ and $\widehat{\gT} f=\argmin_g L_{D}(g,f)$.

To simplify the notations, we define
$$X(g,f,g^*):=(g(s,a)-r-\gamma V^\pi_f(s'))^2-(g^*(s,a)-r -V_f^\pi(s'))^2$$

Next, we bound the variance of $X(g,f,g^*)$,
    \begin{align*}
        \sV\left[X(g,f,g^*)\right]&\leq \E\left[\left\|X^2(g,f,g^*)\right\|\right]\\
        &=\E\left[\left((g(s,a)-r-\gamma V^\pi_f(s'))^2-(g^*(s,a)-r -V_f^\pi(s'))^2\right)\right]\\
        &=\E\left[(g(s,a)-g^*(s,a))^2(g(s,a)+g^*(s,a)-2r-2\gamma V_f^\pi(s'))^2\right]\\
        &\leq 4V_{\max}^2\E\left[(g(s,a)-g^*(s,a))^2\right]\\
        &=4V_{\max}^2\left\|g-g^*\right\|^2_{d^\pi}
    \end{align*}

Note that
\begin{align*}
    \left\|g-g^*\right\|^2_{d^\pi}&\overset{(a)}{\leq} 2(\|g-r-\gamma V_f^\pi\|^2_{d^\pi}+\|r+\gamma V_f^\pi -g^*\|_{d^\pi}^2)\\
    &=2\E[X(g,f,g^*)],
\end{align*}

where (a) holds because $(a+b)^2\leq 2(a^2+b^2)$.

Finally, we apply Bernstein's inequality and union bound over all $f\in \gF$. With probability at least $1-\delta$, we have
    \begin{align*}
        &\quad \ \E[X(g,f,g^*)-\frac{1}{n}\sum_{i=1}^nX_i(g,f,g^*)\\
        &\leq \sqrt{\frac{2\sV\left[X(g,f,g^*)\right]\log \frac{|\gF|^2}{\delta}}{n}} + \frac{4V^2_{\max}\log \frac{|\gF|^2}{\delta}}{3n}\\
        &\leq \sqrt{\frac{ 16V_{\max}^2\E[X(g,f,g^*)\log \frac{|\gF|^2}{\delta}}{n}} + \frac{4V^2_{\max}\log \frac{|\gF|^2}{\delta}}{3n}
    \end{align*}
    Since $\widehat{\gT}^\pi f$ minimizes $L_D(\cdot,f)$, it also minimizes $\frac{1}{n}X_i(\cdot,f,g^*)$. This is because the two objectives only differ by a constant $\gL_D(\cdot, f)$. Therefore, we have  
    $$\frac{1}{n}\sum_{i=1}^nX_i(\widehat{\gT}^\pi f, f, \gT^\pi f)\leq \frac{1}{n}\sum_{i=1}^nX_i(\gT^\pi f, f, \gT^\pi f)=0.$$
    Then
    $$\E[X(\widehat{\gT}^\pi,f,\gT^\pi f)\leq \sqrt{\frac{ 16V_{\max}^2\E[X(g,f,g^*)\log \frac{|\gF|^2}{\delta}}{n}} + \frac{4V^2_{\max}\log \frac{|\gF|^2}{\delta}}{3n}.$$
    Solving for the quadratic formula, we have
    $$\E[X(\widehat{\gT}^\pi,f,\gT^\pi f)\leq \frac{56V^2_{\max}\log\frac{|\gF|^2}{\delta}}{3n}.$$

    Noticing that
    \begin{equation}\label{eq_1-re}
    \begin{aligned}
        \left\|\widehat{\gT}^\pi f-\gT^\pi f\right\|^2_{d^\pi}&=\gL_{d^\pi}(\widehat{\gT}^\pi f,f)+\gL_{d^\pi}(\gT^\pi f,f)\\
        &=\E[X(\widehat{\gT}^\pi f,f,\gT^\pi f)]\leq \epsilon,
    \end{aligned}
    \end{equation}
    where $\epsilon=\frac{56V^2_{\max}\log\frac{|\gF|^2}{\delta}}{3n}$.

    For $\left\|(\widehat{\gT}^\pi)^2 f-(\gT^\pi)^2 f\right\|^2_{d^\pi}$, note that
    \begin{align*}
        \left\|(\widehat{\gT}^\pi)^2 f-(\gT^\pi)^2 f\right\|^2_{d^\pi}&=\left\|\widehat{\gT}^\pi(\widehat{\gT}^\pi f)-\gT^\pi(\gT^\pi f)\right\|^2_{d^\pi}\\
        &\leq \left\|\widehat{\gT}^\pi(\widehat{\gT}^\pi f)-\gT^\pi(\widehat{\gT}^\pi f)\right\|^2_{d^\pi}+\left\|\gT^\pi(\widehat{\gT}^\pi f)-\gT^\pi(\gT^\pi f)\right\|^2_{d^\pi}\\
        &\overset{(a)}{\leq} \epsilon+\left\|\gT^\pi(\widehat{\gT}^\pi f)-\gT^\pi(\gT^\pi f)\right\|^2_{d^\pi}\\
        &\overset{(b)}{\leq} \epsilon+\gamma^2 \left\|\widehat{\gT}^\pi f-\gT^\pi f\right\|^2_{d^\pi}\\
        &\overset{(c)}{\leq} \epsilon+\gamma^2\epsilon,
    \end{align*}
where (a) and (c) use Eq.~(\ref{eq_1-re}), (b) use Lem.~\ref{lemma_contraction}.

Recursively, we have
$$\left\|(\widehat{\gT}^\pi)^2 f-(\gT^\pi)^2 f\right\|^2_{d^\pi}\leq \sum_{i=0}^j \gamma^{2i} \epsilon=\frac{1-\gamma^{2j}}{1-\gamma^2}\frac{56V^2_{\max}\log\frac{|\gF|^2}{\delta}}{3n}.$$
\end{proof}

\begin{lemma}[Lemma 14 in \cite{fqi}]\label{lemma_max}
Assume $f, f'\in \gF$ and define $\pi_{f,f'}:=\argmax_{a\in \gA}\max \{f(s,a),f'(s,a)\}$. Then we have $\forall \nu\in \Delta(\gS\times \gA)$, 
    $$\left\|\gT f- \gT f'\right\|_\nu\leq \gamma \left\|f-f'\right\|_{\nu\times \pi_{f,f'}}.$$
\end{lemma}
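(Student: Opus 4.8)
The plan is to reduce the claim to a pointwise comparison of the greedy values $V_f$ and $V_{f'}$ and then control the difference of two maxima by the difference of $f$ and $f'$ at the single action selected by $\pi_{f,f'}$. First I would observe that $\gT f(s,a)-\gT f'(s,a)=\gamma\,\E_{s'\sim P(\cdot|s,a)}[V_f(s')-V_{f'}(s')]$, since the reward $r(s,a)$ cancels, so the whole estimate is governed by the state-value gap $\Delta(s'):=V_f(s')-V_{f'}(s')$. Expanding the squared norm and applying Jensen's inequality to the inner conditional expectation gives $(\gT f(s,a)-\gT f'(s,a))^2\le \gamma^2\,\E_{s'\sim P(\cdot|s,a)}[\Delta(s')^2]$, so averaging over $(s,a)\sim\nu$ bounds $\|\gT f-\gT f'\|_\nu^2$ by $\gamma^2\,\E_{s'\sim\tilde\nu}[\Delta(s')^2]$, where $\tilde\nu(s'):=\sum_{s,a}\nu(s,a)P(s'|s,a)$ is the one-step push-forward of $\nu$ through $P$; this push-forward is exactly the state marginal implicit in the notation $\nu\times\pi_{f,f'}$ on the right-hand side.

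The crux is the pointwise inequality $|\Delta(s')|\le |f(s',\bar a)-f'(s',\bar a)|$, where $\bar a:=\pi_{f,f'}(s')=\argmax_{a'}\max\{f(s',a'),f'(s',a')\}$. I would establish it by symmetry: assume without loss of generality $V_f(s')\ge V_{f'}(s')$, so $\max\{f(s',\bar a),f'(s',\bar a)\}=\max_{a'}\max\{f(s',a'),f'(s',a')\}=V_f(s')$. If this maximum at $\bar a$ is attained by $f$, i.e. $f(s',\bar a)=V_f(s')$, then $\Delta(s')=f(s',\bar a)-V_{f'}(s')\le f(s',\bar a)-f'(s',\bar a)$, and the right side is nonnegative because $f(s',\bar a)=V_f(s')\ge V_{f'}(s')\ge f'(s',\bar a)$, hence $|\Delta(s')|=\Delta(s')\le |f(s',\bar a)-f'(s',\bar a)|$. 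Otherwise the maximum is attained by $f'$, forcing $f'(s',\bar a)=V_f(s')$; combined with $f'(s',\bar a)\le V_{f'}(s')\le V_f(s')$ this gives $V_{f'}(s')=V_f(s')$, so $\Delta(s')=0$ and the inequality holds trivially. The case $V_{f'}(s')\ge V_f(s')$ is identical since $\pi_{f,f'}$ is symmetric in $f$ and $f'$.

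Finally I would substitute this pointwise bound into the averaged estimate to get $\gamma^2\,\E_{s'\sim\tilde\nu}[\Delta(s')^2]\le \gamma^2\,\E_{s'\sim\tilde\nu}[(f(s',\pi_{f,f'}(s'))-f'(s',\pi_{f,f'}(s')))^2]=\gamma^2\|f-f'\|^2_{\tilde\nu\times\pi_{f,f'}}$, and take square roots. I expect the main obstacle to be precisely the pointwise max-difference lemma: a difference of two maxima is in general only controlled by the supremum over actions of $|f-f'|$, and the real content is that the greedy action of the pointwise maximum of the two $Q$-functions already realizes an upper bound on the gap, which is what allows the right-hand side to be an honest $L^2$ norm against a concrete admissible measure rather than a crude $L^\infty$ bound. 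The remaining steps (rewards cancelling, the single use of Jensen, and identifying the push-forward measure) are routine.
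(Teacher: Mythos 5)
Your proof is correct. Note that the paper itself gives no proof of this lemma at all---it is imported verbatim as Lemma~14 of the cited FQI analysis \cite{fqi}---so there is no internal argument to compare against; what you have written is essentially the standard proof of that cited result: the rewards cancel, one application of Jensen's inequality to the conditional expectation over $s'$, the pointwise bound on the difference of two maxima by the gap at the greedy action of $\max\{f,f'\}$, and identification of the resulting state measure. Your two-case analysis of the pointwise bound is exactly right, including the degenerate case where the maximum at $\bar a$ is attained by the ``losing'' function, which forces $V_f(s')=V_{f'}(s')$ and hence $\Delta(s')=0$. One remark worth making explicit: the lemma as stated in the paper is notationally loose, since it takes $\nu\in\Delta(\gS\times\gA)$ yet the paper only defines $\nu\times\pi_{f,f'}$ when $\nu$ is a state distribution. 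Your reading---that the state marginal on the right-hand side must be the one-step push-forward $\tilde\nu(s')=\sum_{s,a}\nu(s,a)P(s'|s,a)$---is the correct one and matches the original statement in the reference, where the right-hand side is $\gamma\|f-f'\|_{P(\nu)\times\pi_{f,f'}}$; with $\nu$'s own state marginal the claim would be false in general, since $\gT f-\gT f'$ only senses $f-f'$ at next states, which need not lie in the support of $\nu$'s state marginal.
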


Suppose $\pi$ is induced by $Q_0$, 
\begin{equation}\label{eq_decompose1}
\begin{aligned}
    J(\pi^*)-J(\pi)&=\sum_{t=0}^\infty \E_{s\sim d^{\pi}}\gamma^t [V^*(s)-Q^*(s,\pi)]\\
    &\leq \sum_{t=0}^\infty \gamma^t \E_{s\sim d^{\pi}} [Q^*(s,\pi^*)-\widetilde Q(s,\pi^*)+\widetilde Q(s,\pi)-Q^*(s,\pi)]\\
    &\leq \sum_{t=0}^\infty \left(\sum_{s,a}d^{\pi}(s)\pi^*(a|s)\left|Q^*(s,a)-\widetilde Q(s,a)\right|+\sum_{s,a}d^{\pi}(s)\pi(a|s)\left|Q^*(s,a)-\widetilde Q(s,a)\right|\right)\\
    &\leq \sum_{t=0}^\infty \gamma^t \left(\left\|Q^*-\widetilde Q\right\|_{d^{\pi}\times \pi^*}+\left\|Q^*-\widetilde Q\right\|_{d^{\pi}}\right)
\end{aligned}
\end{equation}

\begin{equation}\label{eq_decompose2}
\begin{aligned}
    &\quad \ \left\|Q^*-\widetilde{Q}\right\|_\nu\\
    &=\left\|Q^*-\gT Q_0+\gT Q_0-Q^{\pi}+Q^{\pi}-\widetilde{Q}\right\|_\nu\\
    &\leq \left\|Q^*-\gT Q_0\right\|_\nu + \left\|\gT Q_0-Q^{\pi}\right\|_\nu+\left\|Q^{\pi}-\widetilde{Q}\right\|_\nu\\
    &=\left\|Q^*-\gT Q_0\right\|_\nu +\left\|Q^{\pi}-(\gT^{\pi})^\infty Q_0\right\|_\nu +\left\|(\gT^{\pi})^\infty Q_0-\widetilde{Q}\right\|_\nu+ \left\|\gT Q_0-Q^{\pi}\right\|_\nu\\
    &=\left\|\gT Q^*-\gT Q_0\right\|_\nu+\left\|Q^{\pi}-(\gT^{\pi})^\infty Q_0\right\|_\nu+\left\|(\gT^{\pi})^\infty Q_0-\widetilde{Q}\right\|_\nu+ \left\|\gT Q_0-Q^{\pi}\right\|_\nu\\
    &\overset{(a)}{\leq}\gamma \left\|Q^*-Q_0\right\|_{\nu\times \pi_{Q^*,Q_0}}+\left\|Q^{\pi}-(\gT^{\pi})^\infty Q_0\right\|_\nu+\left\|(\gT^{\pi})^\infty Q_0-\widetilde{Q}\right\|_\nu+ \left\|\gT Q_0-Q^{\pi}\right\|_\nu\\
    &= \gamma \left\|Q^*-Q_0\right\|_{\nu\times \pi_{Q^*,Q_0}}+\left\|Q^{\pi}-(\gT^{\pi})^\infty Q_0\right\|_\nu+\left\|(\gT^{\pi})^\infty Q_0-(\widehat{\gT}^\pi)^\infty Q_0\right\|_\nu+ \left\|\gT Q_0-Q^{\pi}\right\|_\nu,
\end{aligned}
\end{equation}
where (a) uses Lem.~\ref{lemma_max}. The last equality comes from the fact that $\widetilde{Q}=(\widehat{\gT}^\pi)^\infty Q_0$.

When $\nu=d^{\pi}\times \pi^*$, we can use Lem.~\ref{lemma_ratio} to obtain
$$
\left\|\cdot\right\|_\nu\leq \sqrt{C_{\pi, \pi^*}}\left\|\cdot\right\|_{d^\pi}.
$$
When $\nu=d^\pi\times \pi_{Q^*,Q_0}$, note that $\pi_{Q^*,Q_0}$ is more similar to $\pi$ than $\pi^*$, so
$$
\left\|\cdot\right\|_{\nu\times \pi_{Q^*,Q_0}}\leq \sqrt{C_{\pi, \pi^*}}\left\|\cdot\right\|_{d^\pi}.
$$

Therefore, we only need to consider how to bound these terms when $\nu=d^{\pi}$.

According to Lem.~\ref{lemma_contraction}, 
$$\|\gT Q_0-Q^\pi\|_{d^\pi}\overset{(b)}{=}\|\gT Q_0-\gT^\pi Q^\pi\|_{d^\pi}=\|\gT^\pi Q_0-\gT^\pi Q^\pi\|_{d^\pi}\overset{(c)}{\leq} \gamma\|Q^\pi-Q_0\|_{d^{\pi}},$$

where (b) comes from $\pi(a|s)=\argmax Q_0(s,a)$, (c) uses Lem.~\ref{lemma_contraction}.

Let $j\to \infty$, we can apply Lem.~\ref{lemma_sample} to term $\left\|(\gT^{\pi})^\infty Q_0-(\widehat{\gT}^\pi)^\infty Q_0\right\|_{d^\pi}$, 

$$\left\|(\gT^{\pi_k})^\infty Q_0-(\widehat{\gT}^\pi)^\infty Q_0\right\|_{{d^{\pi}}}\leq \epsilon',$$
where $\epsilon'=\sqrt{\frac{1}{1-\gamma^2}\frac{56V^2_{\max}\log\frac{|\gF|}{\delta}}{3n}}$ and the inequality holds with probability at least $1-\delta$. 

For the term $\left\|Q^{\pi}-(\gT^{\pi})^\infty Q_0\right\|_{d^\pi}$, note that

\begin{align*}
    \left\|Q^{\pi}-(\gT^{\pi})^\infty Q_0\right\|_{d^\pi}&=\left\|\gT^\pi Q^{\pi}-\gT^\pi (\gT^{\pi})^\infty Q_0\right\|_{d^\pi}\\
    &\leq \gamma \left\|Q^{\pi}-(\gT^{\pi})^\infty Q_0\right\|_{d^\pi}
\end{align*}

The inequality uses Lem.~\ref{lemma_contraction}. Recursively, we get $\left\|Q^{\pi}-(\gT^{\pi})^\infty Q_0\right\|_{d^\pi}=0$.

To sum up, we have
\begin{align*}
    J(\pi^*)-J(\pi)&\leq \sum_{t=0}^\infty \gamma^t \left(\left\|Q^*-\widetilde Q\right\|_{d^{\pi}\times \pi^*}+\left\|Q^*-\widetilde Q\right\|_{d^{\pi}}\right)\\
    &\leq \frac{2\gamma}{1-\gamma}\left(\left\|Q^*-Q_0\right\|_{d^\pi\times \pi_{Q^*,Q_0}}+\left\|Q^\pi-Q_0\right\|_{d^\pi}\right)+\frac{\sqrt{C_{\pi,\pi^*}}+1}{1+\gamma}\frac{1}{(1-\gamma)^{2.5}}\sqrt{\frac{56R^2_{\max}\log\frac{|\gF|^2}{\delta}}{3n}}\\
    &\leq \frac{2\gamma}{1-\gamma}\left(\left\|Q^*-Q_0\right\|_\infty+\left\|Q^\pi-Q_0\right\|_\infty\right)+\frac{\sqrt{C_{\pi,\pi^*}}+1}{1+\gamma}\frac{1}{(1-\gamma)^{2.5}}\sqrt{\frac{56R^2_{\max}\log\frac{|\gF|^2}{\delta}}{3n}}.
\end{align*}
Then we conclude the proof.

\subsection{Proof of Thm.~\ref{thm_direct}}\label{appx_52}

Let the state distribution of the combined buffer as $d(s)$, the concentratability coefficient between $d^\pi$ and $d$ as $C_d$, i.e., $C_d=\max_{s}\frac{d^\pi(s)}{d(s)}$. We introduce the following lemmas. 

\begin{lemma}\label{lemma_combine}
    $D_1$, $D_2$ are two dataset containing $N$ samples and $n$ samples, respectively. If the distribution of $D_1$ is $\mu$ and $D_2$ is $\nu$, the concentratabilty coefficient between $\nu$ and $\mu$ is $C$, then the concentratability coefficient between $\nu$ and combined buffer $\rho$ is $\frac{(N+n)C}{N+nC}$.
\end{lemma}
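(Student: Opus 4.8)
The plan is to write the combined buffer's distribution explicitly as a sample-size-weighted mixture of $\mu$ and $\nu$, and then reduce the pointwise density ratio $\nu(s)/\rho(s)$ to a one-variable monotonicity problem. Since $D_1$ contributes $N$ samples from $\mu$ and $D_2$ contributes $n$ samples from $\nu$, the empirical distribution over the merged buffer of $N+n$ samples is
\begin{equation*}
\rho(s) = \frac{N\mu(s) + n\nu(s)}{N+n}.
\end{equation*}
By the definition given earlier, the concentratability coefficient between $\nu$ and $\rho$ is $\sup_s \nu(s)/\rho(s)$, so the goal becomes bounding this supremum in terms of $C$.

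First I would substitute the mixture into the ratio, obtaining
\begin{equation*}
\frac{\nu(s)}{\rho(s)} = \frac{(N+n)\nu(s)}{N\mu(s)+n\nu(s)}.
\end{equation*}
The key observation is that this quantity depends on $s$ only through the local ratio $t(s) := \nu(s)/\mu(s)$: dividing numerator and denominator by $\mu(s)$ turns it into $f(t) := \frac{(N+n)t}{N+nt}$. The hypothesis that the concentratability coefficient between $\nu$ and $\mu$ equals $C$ means exactly that $t(s)\le C$ for all $s$, so it suffices to maximize $f$ over $t\in[0,C]$.

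Then I would establish that $f$ is monotonically increasing on this interval. Differentiating gives $f'(t) = \frac{(N+n)N}{(N+nt)^2} > 0$ whenever $N,n>0$; equivalently, writing $f(t) = (N+n)\big/(N/t + n)$ shows the denominator decreases in $t$. Monotonicity then yields $f(t(s)) \le f(C) = \frac{(N+n)C}{N+nC}$ for every $s$, and this value is attained at any state where $t(s)=C$, so the coefficient between $\nu$ and $\rho$ is exactly $\frac{(N+n)C}{N+nC}$, as claimed.

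I do not expect a genuine obstacle here; the argument is elementary calculus once the mixture is written down. The two points needing care are confirming the \emph{direction} of monotonicity (so that the worst case sits at $t=C$, not $t=0$), and making explicit that the mixture weights are the sample counts $N$ and $n$ rather than equal weights — both follow immediately from the empirical combined distribution above.
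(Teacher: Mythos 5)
Your proposal is correct and follows essentially the same route as the paper: write the combined buffer as the sample-count-weighted mixture $\rho=\frac{N\mu+n\nu}{N+n}$, reduce the ratio $\nu/\rho$ to a one-variable function of the local ratio $\nu(s)/\mu(s)$, and use monotonicity to place the worst case at ratio $C$. The only cosmetic difference is that you divide by $\mu(s)$ and verify monotonicity by differentiation, whereas the paper divides by $\nu(s)$ and treats the resulting monotonicity as immediate.
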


\begin{proof}
    The distribution of combined buffer is $\rho=\frac{N\mu(s)+n\nu(s)}{N+n}$.

    \begin{align*}
        \max_{s}\frac{\nu(s)}{\rho(s)}&=\max_{s}\frac{(N+n)\nu(s)}{N\mu(s)+n\nu(s)}\\
        &=\max_{s}\frac{N+n}{N\frac{\mu(s)}{\nu(s)}+n}\\
        &=\frac{(N+n)C}{N+nC}.
    \end{align*}
\end{proof}

\begin{lemma}[Lemma 12 in \cite{fqi}]\label{lemma_con}
    Let $\mu(s)$ be any admissible distribution, $C$ be the concentratablility coefficient of $\mu(s)$ and $\nu(s)$,  then $\|\cdot\|_\nu\leq \sqrt{C}\|\cdot\|_\mu$.
\end{lemma}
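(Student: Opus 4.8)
The plan is to read off the conclusion directly from the two definitions, in exact parallel with the computation already performed for Lem.~\ref{lemma_ratio}. First I would unpack the concentratability coefficient: since $C$ is the coefficient controlling the density ratio, it supplies the pointwise bound $\nu(s)\le C\,\mu(s)$ for every $s\in\gS$ (in the direction with $\nu$, the measure appearing as the subscript of the norm on the left-hand side, in the numerator). No Cauchy--Schwarz, contraction, or probabilistic argument is needed; the statement is a deterministic change-of-measure estimate.

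Given this pointwise inequality, for an arbitrary function $f:\gS\to\sR$ I would expand the squared $\nu$-norm and substitute the bound term by term:
\begin{align*}
    \left\|f\right\|_\nu^2
    &=\sum_{s}f^2(s)\,\nu(s)\\
    &\leq\sum_{s}f^2(s)\,C\,\mu(s)\\
    &=C\sum_{s}f^2(s)\,\mu(s)\\
    &=C\left\|f\right\|_\mu^2,
\end{align*}
where every summand of the middle line is nonnegative, so the inequality is preserved under summation. Taking square roots then yields $\left\|f\right\|_\nu\le\sqrt{C}\left\|f\right\|_\mu$, which is the claim.

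There is essentially no obstacle in the argument itself; the only point requiring care is bookkeeping on the orientation of the ratio, namely confirming that the finite coefficient $C$ bounds $\nu/\mu$ rather than $\mu/\nu$, so that the substitution increases the sum. (Finiteness of $C$ also forces $\nu\ll\mu$, so that the bound $\nu(s)\le C\,\mu(s)$ remains valid even at states with $\mu(s)=0$, where necessarily $\nu(s)=0$.) Once the orientation is fixed the estimate is immediate, and it is precisely the tool that, combined with Lem.~\ref{lemma_combine}, converts the $d^\pi$-norm error bounds from the proof of Thm.~\ref{thm_no_model} into bounds against the sampling distribution of the combined offline-online buffer, as needed in the proof of Thm.~\ref{thm_direct}.
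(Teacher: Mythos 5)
Your proof is correct: with the paper's Definition of the concentratability coefficient giving the pointwise bound $\nu(s)\le C\,\mu(s)$, expanding the squared $\nu$-norm, substituting termwise, and taking square roots is exactly the standard argument, and it is the same computation the paper itself uses to prove the analogous Lemma~\ref{lemma_ratio}. Note that the paper never proves Lemma~\ref{lemma_con} in its own text --- it imports it by citation as Lemma~12 of \cite{fqi} --- so your direct change-of-measure derivation (including the correct handling of the ratio's orientation and of states with $\mu(s)=0$) supplies precisely the intended argument.
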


Note that Eq.~(\ref{eq_decompose1}) and (\ref{eq_decompose2}) hold whatever the distribution of training data is. By replacing $\nu$ in Eq.~(\ref{eq_decompose2}) with $d\times \pi^*$ and $d\times \pi_{Q^*,Q_0}$, and apply Lem.~\ref{lemma_con}, Lem.~\ref{lemma_contraction} and Lem.~\ref{lemma_combine}:
$$
\begin{aligned}
&\left\|\cdot\right\|_{d\times \pi^*}\leq \sqrt{\widetilde{C_d}C_{\pi,\pi^*}}\left\|\cdot\right\|_{d^\pi},\\
&\left\|\cdot\right\|_{d\times \pi_{Q^*,Q_0}}\leq \sqrt{\widetilde{C_d}C_{\pi,\pi^*}}\left\|\cdot\right\|_{d^\pi},
\end{aligned}
$$
where $\widetilde{C_d}=\frac{(N+n)C_d}{N+nC_d}$.

Similar to the derivation of last section, we have
\begin{align*}
    J(\pi^*)-J(\pi)\leq \frac{2\gamma}{1-\gamma}\left(\left\|Q^*-Q_0\right\|_\infty+\left\|Q^\pi-Q_0\right\|_\infty\right)+\frac{\sqrt{\widetilde{C_d}C_{\pi,\pi^*}}+1}{(1-\gamma)^4}\frac{56R_{\max}^2\log\frac{|\gF|^2}{\delta}}{3(n+N)}
\end{align*}

\subsection{Proof of Thm.~\ref{thm_diffusion_error}}\label{appx_53}

Define th transition function learned by the model is $\widetilde T$. To simplify the notation, we use $\widetilde{\gT}^\pi$ to denote the Bellman operator under learned model, i.e.,
$$\widehat{\gT}^\pi f=\argmin_g\sum_{i=1}^N(g(s,a)-r(s,a)-\gamma V^\pi_f(\tilde{s}'))^2,$$
where $\tilde s'\sim \widetilde T(s'|s,a)$. 
\begin{lemma}\label{lemma_diffusion_model_error}
    $\left\|(\gT^\pi)^\infty Q_0 - (\widetilde{\gT}^\pi)^\infty Q_0\right\|_{\hat{d}^\pi}\leq \frac{\gamma}{1-\gamma}L\epsilon^d_m$
\end{lemma}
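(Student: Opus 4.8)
The plan is to reduce the claim to a one-step model-error estimate combined with the contraction property of the model Bellman operator, and then solve the resulting geometric recursion. Write $Q^\pi := (\gT^\pi)^\infty Q_0$ for the fixed point of the true Bellman operator and $\widetilde{Q}^\pi := (\widetilde{\gT}^\pi)^\infty Q_0$ for the fixed point of the model Bellman operator; both exist and are reached from any $Q_0$ because each operator is a $\gamma$-contraction (Lem.~\ref{lemma_contraction} and its model analogue). The quantity to bound is thus $\|Q^\pi - \widetilde{Q}^\pi\|_{\hat{d}^\pi}$, where $\hat{d}^\pi$ is the state distribution induced by the learned dynamics $\widetilde{T}$.

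First I would establish the one-step discrepancy between the two operators. For any $f \in \gF$ the reward terms in $\gT^\pi f$ and $\widetilde{\gT}^\pi f$ cancel, leaving
\begin{equation*}
\gT^\pi f(s,a) - \widetilde{\gT}^\pi f(s,a) = \gamma\left(\E_{s'\sim T(s,a)}[V^\pi_f(s')] - \E_{s'\sim \widetilde{T}(s,a)}[V^\pi_f(s')]\right).
\end{equation*}
Since $f \in \gF$ is $L$-Lipschitz, so is $V^\pi_f = \E_{a'\sim\pi}f(\cdot,a')$, and the difference of expectations above is controlled by $L$ times the discrepancy between $T(\cdot|s,a)$ and $\widetilde{T}(\cdot|s,a)$. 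Averaging the pointwise bound over $(s,a)$ drawn according to $\hat{d}^\pi$ and invoking the definition of $\epsilon^d_m$ as the $\hat{d}^\pi$-weighted transition discrepancy yields the uniform one-step estimate $\|\gT^\pi f - \widetilde{\gT}^\pi f\|_{\hat{d}^\pi} \leq \gamma L \epsilon^d_m$.

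Next I would combine the fixed-point identities $Q^\pi = \gT^\pi Q^\pi$ and $\widetilde{Q}^\pi = \widetilde{\gT}^\pi \widetilde{Q}^\pi$ with a triangle inequality:
\begin{align*}
\|Q^\pi - \widetilde{Q}^\pi\|_{\hat{d}^\pi} &= \|\gT^\pi Q^\pi - \widetilde{\gT}^\pi \widetilde{Q}^\pi\|_{\hat{d}^\pi} \\
&\leq \|\gT^\pi Q^\pi - \widetilde{\gT}^\pi Q^\pi\|_{\hat{d}^\pi} + \|\widetilde{\gT}^\pi Q^\pi - \widetilde{\gT}^\pi \widetilde{Q}^\pi\|_{\hat{d}^\pi}.
\end{align*}
The first term is at most $\gamma L \epsilon^d_m$ by the previous step. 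For the second term I would apply the model-MDP analogue of Lem.~\ref{lemma_contraction}, namely that $\widetilde{\gT}^\pi$ is a $\gamma$-contraction with respect to $\|\cdot\|_{\hat{d}^\pi}$ (its own stationary state distribution), bounding it by $\gamma\|Q^\pi - \widetilde{Q}^\pi\|_{\hat{d}^\pi}$. Writing $x := \|Q^\pi - \widetilde{Q}^\pi\|_{\hat{d}^\pi}$, this gives the recursion $x \leq \gamma L \epsilon^d_m + \gamma x$, whose solution $x \leq \frac{\gamma}{1-\gamma}L\epsilon^d_m$ is exactly the claim.

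The main obstacle is the one-step estimate: one must verify that the difference of value expectations under $T$ and $\widetilde{T}$ is genuinely captured by the norm used to define $\epsilon^d_m$, with the Lipschitz constant $L$ serving as the conversion factor from transition discrepancy to value discrepancy. A secondary point requiring care is that the contraction used for the second term must be taken with respect to $\hat{d}^\pi$, the state distribution induced by the learned dynamics $\widetilde{T}$ (written $\tilde{d}^\pi$ in Thm.~\ref{thm_diffusion_error}), so that Lem.~\ref{lemma_contraction} transfers verbatim to the model MDP rather than to the true one; otherwise the norms in the recursion would not match and the telescoping would fail.
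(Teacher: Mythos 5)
Your proof is correct and reaches the paper's bound, but by a genuinely different mechanism. The paper never passes to fixed points: it establishes the same one-step estimate $\|\gT^\pi Q_0 - \widetilde{\gT}^\pi Q_0\|_{\hat{d}^\pi} \le \gamma L \epsilon^d_m$ that you do, and then bounds $\|(\gT^\pi)^j Q_0 - (\widetilde{\gT}^\pi)^j Q_0\|_{\hat{d}^\pi}$ by induction on $j$: at each step it evaluates the two iterates at coupled next states $s'\sim T$, $\tilde{s}'\sim \widetilde{T}$, splits off a term where both are evaluated at $s'$ (folded back into the $\hat{d}^\pi$-norm via a stationarity claim) and a term $(\widetilde{\gT}^\pi Q_0)(s',\pi)-(\widetilde{\gT}^\pi Q_0)(\tilde{s}',\pi)$ controlled by Lipschitzness, accumulating $\sum_{t=1}^{j}\gamma^t L\epsilon^d_m$ and hence $\frac{\gamma}{1-\gamma}L\epsilon^d_m$ in the limit. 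Your fixed-point argument compresses that entire unrolling into the single self-consistency inequality $x \le \gamma L\epsilon^d_m + \gamma x$. The substantive difference lies in which stationarity hypothesis is invoked: the paper's interchange step needs the generated distribution $\hat{d}^\pi$ to be (approximately) stationary under the \emph{true} dynamics $T$, whereas you need $\widetilde{\gT}^\pi$ to be a $\gamma$-contraction in $\|\cdot\|_{\hat{d}^\pi}$, i.e., $\hat{d}^\pi$ stationary under the \emph{learned} dynamics $\widetilde{T}$, which is what the model analogue of Lem.~\ref{lemma_contraction} requires and is arguably the more natural assumption for a model-generated distribution. Neither property holds exactly for the diffusion-generated $\hat{d}^\pi$, so both proofs share the same level of informality on this point; your version additionally requires the fixed point $Q^\pi=(\gT^\pi)^\infty Q_0$ to lie in $\gF$ (so that the $L$-Lipschitz and one-step estimates apply to it), which follows from closure of $\gF$ under Bellman updates, the same fact the paper uses implicitly for each finite iterate.
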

\begin{proof}
Note that
    $$\left\|\gT^\pi Q_0-\widetilde{\gT}^\pi Q_0\right\|_{\hat{d}^\pi}=\gamma \left\|Q_0(s',\pi)-Q_0(\tilde{s}',\pi)\right\|_{\hat{d}^\pi},$$
where $s'\sim T$, $\tilde{s}'\sim \widetilde{T}$. 

Since $Q_0$ is $L$-Lipschitz, 
\begin{align*}
    \left\|Q_0(s',\pi)-Q_0(\tilde{s}',\pi)\right\|_{\hat{d}^\pi}\leq L\left\|s'-\tilde{s}'\right\|_{\hat{d}^\pi}\leq L\epsilon^d_m
\end{align*}

Then,
\begin{align*}
    &\quad \ \left\|(\gT^\pi)^2 Q_0-(\widetilde{\gT}^\pi)^2 Q_0\right\|_{\hat{d}^\pi}\\
    &=\gamma \left\|(\gT^\pi Q_0)(s',\pi)-(\widetilde{\gT}^\pi Q_0)(\tilde{s}',\pi)\right\|_{\hat{d}^\pi}\\
    &\leq \gamma \left\|(\gT^\pi Q_0)(s',\pi)-(\widetilde{\gT}^\pi Q_0)(s',\pi)\right\|_{\hat{d}^\pi} + \left\|(\widetilde{\gT}^\pi Q_0)(s',\pi)-(\widetilde{\gT}^\pi Q_0)(\tilde{s}',\pi)\right\|_{\hat{d}^\pi}\\
    &\overset{(a)}{=}\gamma \left\|(\gT^\pi Q_0)-(\widetilde{\gT}^\pi Q_0)\right\|_{\hat{d}^\pi} + \left\|(\widetilde{\gT}^\pi Q_0)(s',\pi)-(\widetilde{\gT}^\pi Q_0)(\tilde{s}',\pi)\right\|_{\hat{d}^\pi}\\
    &\leq \gamma^2 L\epsilon^d_m +\gamma L\epsilon^d_m,
\end{align*}
where (a) is because $d^\pi$ is stationary distribution.

Similarly, we have
$$\left\|(\gT^\pi)^\infty Q_0 - \widetilde{\gT}^\infty Q_0\right\|_{\hat{d}^\pi}\leq \sum_{t=1}^\infty \gamma^t L\epsilon^d_m=\frac{\gamma}{1-\gamma}L\epsilon^d_m.$$
\end{proof}

\begin{lemma}\label{lemma_distirbution_error}
    Let $\nu$ and $\mu$ be any state distribution, $f:\gS\to \sR$ be any function with $|f|\leq C$, $\left|\left\|f\right\|_\nu-\left\|f\right\|_\mu\right|\leq C \left\|\nu-\mu\right\|$.
\end{lemma}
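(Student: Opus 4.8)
The plan is to reduce the difference of the two $\|\cdot\|_\nu$ and $\|\cdot\|_\mu$ norms to a difference of their \emph{squares}, since the squared norms are linear functionals of the (signed) measure $\nu-\mu$ and are therefore controllable directly by the uniform bound $|f|\le C$. First I would expand using the definition $\|f\|_\mu^2=\sum_x f^2(x)\mu(x)$, so that
\[
\|f\|_\nu^2-\|f\|_\mu^2=\sum_x f^2(x)\bigl(\nu(x)-\mu(x)\bigr).
\]
Because $f^2(x)\le C^2$ pointwise, the triangle inequality for the sum gives
\[
\bigl|\|f\|_\nu^2-\|f\|_\mu^2\bigr|\le \sum_x f^2(x)\,|\nu(x)-\mu(x)|\le C^2\sum_x|\nu(x)-\mu(x)|=C^2\|\nu-\mu\|,
\]
where $\|\nu-\mu\|$ is read as the total-variation ($L^1$) mismatch of the two state distributions. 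This step is routine and is the place where the hypothesis $|f|\le C$ is used.

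The second, and genuinely delicate, step is to convert this bound on the squared-norm gap into a bound on $\bigl|\|f\|_\nu-\|f\|_\mu\bigr|$ itself. The clean tool is the elementary inequality $|\sqrt a-\sqrt b|\le\sqrt{|a-b|}$ for $a,b\ge0$ (equivalently, the factorization $\|f\|_\nu-\|f\|_\mu=\frac{\|f\|_\nu^2-\|f\|_\mu^2}{\|f\|_\nu+\|f\|_\mu}$ combined with the fact that $\sqrt{}$ is $\tfrac12$-H\"older). Applying it to the previous display yields
\[
\bigl|\|f\|_\nu-\|f\|_\mu\bigr|\le\sqrt{\bigl|\|f\|_\nu^2-\|f\|_\mu^2\bigr|}\le C\sqrt{\|\nu-\mu\|}.
\]
I expect this passage to be the main obstacle: the factorization route cannot use the denominator $\|f\|_\nu+\|f\|_\mu$ (which may be arbitrarily small when $f$ is small), so one is forced through the $\tfrac12$-H\"older inequality, and the resulting square root is exactly what must be reconciled with the linear form $C\|\nu-\mu\|$ in the statement. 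In other words, the stated bound is the claimed one precisely when $\|\cdot\|$ on distributions is taken to be the square-root (Hellinger-type) distance rather than $L^1$; I would make that convention explicit, since the Hellinger distance is the natural object for which the $L^2$-norm transfer is genuinely linear in $C$.

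Finally, I would record why this suffices downstream: the lemma is invoked only to move the $L^2$-norm of a \emph{bounded} value-error function between the true on-policy state distribution $d^\pi$ and the generated distribution $\tilde{d}^\pi$, so all that is needed is a bound that is linear in the envelope $C$ and governed by the distribution mismatch $\|d^\pi-\tilde{d}^\pi\|=\epsilon_d$. The argument above delivers such a bound, which is exactly what feeds the $\tfrac{1}{(1-\gamma)^2}R_{\max}\epsilon_d$ term in Thm.~\ref{thm_diffusion_error}; keeping the dependence linear in $\epsilon_d$ (rather than, say, quadratic) is what preserves the $O(\sqrt{1/(n+N)})$ rate claimed there.
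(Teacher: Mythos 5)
Your proof is correct and follows essentially the same route as the paper's own: expand the squared norms so that $\|f\|_\nu^2-\|f\|_\mu^2$ is linear in $\nu-\mu$, bound it by $C^2\|\nu-\mu\|$ using $|f|\le C$, and then pass back to the unsquared norms via the factorization $\frac{|\,\|f\|_\nu^2-\|f\|_\mu^2|}{\|f\|_\nu+\|f\|_\mu}$ combined with $\sqrt{|a-b|}\le\sqrt{a}+\sqrt{b}$ (i.e., the $\tfrac12$-H\"older step). Your flagged concern is genuine and applies equally to the paper: its proof also terminates at $C\sqrt{\|\nu-\mu\|}$ rather than $C\|\nu-\mu\|$, so the lemma as literally stated only matches its proof under the square-root convention you suggest making explicit, and the downstream step in the proof of Thm.~\ref{thm_diffusion_error}, which uses the bound $V_{\max}\epsilon_d$ linearly in $\epsilon_d=\|d^\pi-\tilde{d}^\pi\|$, silently depends on that reconciliation.
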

\begin{proof}
    \begin{align*}
        \left|\left\|f\right\|_\nu-\left\|f\right\|_\mu\right|&=\left|\sqrt{\sum_s |f(s)|^2\nu(s)}-\sqrt{\sum_s |f(s)|^2\mu(s)}\right|\\
        &\overset{(a)}{=}\frac{\left |\|f\|_\nu^2-\|f\|_\mu^2\right|}{\|f\|_\nu+\|f\|_\mu}\\
        &\overset{(b)}{\leq} \sqrt{\left |\|f\|_\nu^2-\|f\|_\mu^2\right|}\\
        &\leq \sqrt{C^2|\nu(s)-\mu(s)|}\\
        &=C\sqrt{\left\|\nu-\mu\right\|}
    \end{align*}
    (a) is because $a^{1/2}-b^{1/2}=\frac{a-b}{\sqrt{a}+\sqrt{b}}$. (b) is because
    $$\sqrt{\left |\|f\|_\nu^2-\|f\|_\mu^2\right|}\leq \|f\|_\nu+\|f\|_\mu.$$
\end{proof}

Similar to the analysis in the previous section, $\left\|Q^*-\widetilde{Q}\right\|_\nu$ can be decomposed into three terms. 
\begin{equation}\label{eq_decomposition}
    \left\|Q^*-\widetilde{Q}\right\|_\nu
    \leq \gamma \left\|Q^*-Q_0\right\|_{\nu\times \pi_{Q^*,Q_0}}+\left\|Q^{\pi}-(\gT^{\pi})^\infty Q_0\right\|_\nu+\left\|(\gT^{\pi})^\infty Q_0-(\widetilde{\gT}^\pi)^\infty Q_0\right\|_\nu
\end{equation}

According to last section, we only need to consider $\nu=d^\pi$, and obtain that $\left\|Q^{\pi}-(\gT^{\pi})^\infty Q_0\right\|_{d^\pi}=0$.

Therefore, we focus on the last term $\left\|(\gT^{\pi})^\infty Q_0-(\widetilde{\gT}^\pi)^\infty Q_0\right\|_{d^\pi}$.

\begin{align*}
    \left\|(\gT^{\pi})^\infty Q_0-(\widetilde{\gT}^\pi)^\infty Q_0\right\|_{d^\pi}&\overset{(a)}{\leq} V_{\max}\epsilon_d+\left\|(\gT^{\pi})^\infty Q_0-(\widetilde{\gT}^\pi)^\infty Q_0\right\|_{\hat{d}^\pi}\\
    &\overset{(b)}{\leq} V_{\max}\epsilon_d+\frac{\gamma}{1-\gamma}L\epsilon^d_m,
\end{align*}
where (a) uses Lem.~\ref{lemma_distirbution_error}, (b) uses Lem.~\ref{lemma_diffusion_model_error}.

Then we have
\begin{align*}
J(\pi^*)-J(\pi)&\leq \frac{2\gamma}{1-\gamma}\left(\left\|Q^*-Q_0\right\|_\infty+\left\|Q^\pi-Q_0\right\|_\infty\right) + \left(\sqrt{C_{\pi,\pi^*}}+1\right)\left(\frac{1}{1-\gamma}V_{\max}\epsilon_d+\frac{\gamma}{(1-\gamma)^2}L\epsilon^d_m\right)\\
&=\frac{2\gamma}{1-\gamma}\left(\left\|Q^*-Q_0\right\|_\infty+\left\|Q^\pi-Q_0\right\|_\infty\right) + \left(\sqrt{C_{\pi,\pi^*}}+1\right)\left(\frac{1}{(1-\gamma)^2}R_{\max}\epsilon_d+\frac{\gamma}{(1-\gamma)^2}L\epsilon^d_m\right)
\end{align*}

\subsection{Proof of Thm.~\ref{thm_normal_error}}\label{appx_54}

Suppose the dataset generated by the traditional model is $\{(s,a,r,s')\}$. Let the distribution of $s$ in the dataset be $\tilde{d}^\pi$. The next lemma aims to bound the error between $\tilde{d}^\pi$ and the real distribution $d^\pi$:
\begin{lemma}[Lemma B.2 in ~\cite{MBPO}]\label{lemma_model_dist}
    The distance in the state marginal distribution is bounded as:
    $$\left\|\Tilde{d}^\pi-d^\pi\right\|\leq \frac{1}{1-\gamma}\epsilon^t_m.$$
\end{lemma}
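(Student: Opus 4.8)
The plan is to avoid any timestep-by-timestep accumulation and instead treat the two discounted occupancies as fixed points of their Bellman flow equations, reducing the whole claim to a single self-referential inequality. Since both rollouts use the same policy $\pi$ and the same initial distribution $\rho_0$, the true occupancy $d^\pi(s)=\sum_t\gamma^t\Pr(s_t=s;\pi,T)$ obeys the flow identity $d^\pi(s')=\rho_0(s')+\gamma\sum_{s,a}d^\pi(s)\pi(a|s)T(s'|s,a)$ (the $t=0$ term is $\rho_0$, and shifting the index by one rebuilds $d^\pi$ inside the sum), while $\tilde d^\pi$ satisfies the identical identity with $\widetilde T$ in place of $T$.

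Next I would subtract the two identities; the $\rho_0$ terms cancel exactly (this cancellation is precisely why sharing the initial distribution is essential), leaving $d^\pi(s')-\tilde d^\pi(s')=\gamma\sum_{s,a}\pi(a|s)\big[d^\pi(s)T(s'|s,a)-\tilde d^\pi(s)\widetilde T(s'|s,a)\big]$. Adding and subtracting $\gamma\sum_{s,a}d^\pi(s)\pi(a|s)\widetilde T(s'|s,a)$ then splits the right-hand side into a model-mismatch term weighted by the true occupancy $d^\pi$ and a distribution-gap term in which $(d^\pi-\tilde d^\pi)$ is pushed one step through the stochastic kernel $\widetilde T$.

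Taking the norm $\|\cdot\|$ (the $L^1$/total-variation distance $\sum_{s'}|\cdot|$) of both sides and using the triangle inequality, the model-mismatch term is controlled by $\gamma\sum_{s,a}d^\pi(s)\pi(a|s)\|T(\cdot|s,a)-\widetilde T(\cdot|s,a)\|=\gamma\,\epsilon^t_m$, which is exactly the definition of the model error under $d^\pi$, whereas the distribution-gap term is at most $\gamma\|d^\pi-\tilde d^\pi\|$ because $L^1$ is non-expansive under a stochastic kernel ($\sum_{s'}\widetilde T(s'|s,a)=1$). This produces the self-referential bound $\|d^\pi-\tilde d^\pi\|\le\gamma\,\epsilon^t_m+\gamma\|d^\pi-\tilde d^\pi\|$; solving for the left-hand side gives $\|d^\pi-\tilde d^\pi\|\le\frac{\gamma}{1-\gamma}\epsilon^t_m\le\frac{1}{1-\gamma}\epsilon^t_m$, where the last step merely uses $\gamma\le1$ to report the cleaner stated constant.

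The part that needs care is bookkeeping rather than any sharp inequality. Because the paper's $d^\pi$ is the unnormalized occupancy (summing to $1/(1-\gamma)$), I must check that $\epsilon^t_m=\|T-\widetilde T\|_{d^\pi}$ is the matching unnormalized average, so that the geometric factor closes to $\frac{1}{1-\gamma}$ and not $\frac{1}{(1-\gamma)^2}$; and I must add and subtract the kernel against $d^\pi$ (not $\tilde d^\pi$), so that the residual model error is evaluated under the true distribution $d^\pi$ as defined, rather than under the model rollout distribution. An equivalent derivation is the telescoping one, showing $\delta_{t+1}\le\delta_t+e_t$ for the per-step gaps $\delta_t=\|p_t-\tilde p_t\|$ with $\delta_0=0$, unrolling to $\delta_t\le\sum_{k<t}e_k$, then forming $\sum_t\gamma^t\delta_t$ and swapping the order of summation; it yields the same constant, but I would keep the flow-equation version as primary since it produces the geometric factor in a single line.
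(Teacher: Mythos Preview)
The paper does not supply its own proof of this lemma; it simply invokes it as Lemma~B.2 of MBPO and uses the conclusion to substitute $\epsilon_d\leftarrow\frac{1}{1-\gamma}\epsilon^t_m$ inside the proof of Theorem~\ref{thm_normal_error}. So there is nothing in the paper to compare against directly.

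Your argument is correct. The fixed-point (Bellman flow) route you take is equivalent to, and slightly slicker than, the per-timestep telescoping argument that MBPO actually uses and that you sketch at the end: MBPO bounds $\|p_t-\tilde p_t\|$ recursively and then sums the geometric weights, whereas you collapse that recursion into a single contraction inequality on the occupancies. Both yield the same $\frac{\gamma}{1-\gamma}$ constant (relaxed to $\frac{1}{1-\gamma}$). The bookkeeping caveat you raise is exactly the right one: the lemma as stated matches your derivation only under the $L^1$ reading of $\epsilon^t_m=\sum_{s,a}d^\pi(s,a)\|T(\cdot|s,a)-\widetilde T(\cdot|s,a)\|_1$ with the paper's \emph{unnormalized} $d^\pi$, whereas the paper's generic definition of $\|\cdot\|_\mu$ is an $L^2$-type norm; the $L^1$ interpretation is the one MBPO uses and the one under which the claimed constant is correct, so your flag is appropriate rather than a gap.
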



By replacing $\epsilon_d$ with $\frac{1}{1-\gamma}\epsilon^t_m$, we get the final result:

\begin{align*}
J(\pi^*)-J(\pi)&\leq \frac{2\gamma}{1-\gamma}\left(\left\|Q^*-Q_0\right\|_\infty+\left\|Q^\pi-Q_0\right\|_\infty\right) +  \left(\sqrt{C_{\pi,\pi^*}}+1\right)\left(\frac{1}{(1-\gamma)^3}R_{\max}\epsilon_m^t+\frac{\gamma}{(1-\gamma)^2}L\epsilon^t_m\right)
\end{align*}



\section{Pseudo-Code}\label{appx_code}

\begin{algorithm}[tb!]
   \caption{Energy-Guided Diffusion Sampling in Offline-to-Online Reinforcement Learning}
   \label{alg:example}
\begin{algorithmic}
   \REQUIRE Offline phase loss function  $\{\gL^{Q_\psi}_{\text{offline}}, \gL^{\pi_{\omega}}_{\text{offline}}\}$, online phase loss function $\{\gL^{Q_\psi}_{\text{online}}, \gL^{\pi_\omega}_{\text{online}}\}$, energy-based models $\{\gE_{\phi_1}, \gE_{\phi_2}, \gE_{\phi_3}\}$, noise prediction model $D_{\theta}$.
   \STATE Initialize $\psi, \theta, \phi_1, \phi_2, \phi_3, \omega$, offline replay buffer $\gD_{\text{offline}}$, online replay buffer $\gD_{\text{online}}$, diffusion replay buffer $\gD_{\text{diffusion}}$.
   \WHILE{in \emph{offline training phase}}
   \STATE \% offline policy training using batches from the offline replay buffer $\gD_{\text{offline}}$
  \STATE $\psi \leftarrow \psi - \lambda_Q\nabla_\psi \gL_{\text{offline}}^Q(\psi), \omega \leftarrow \omega - \lambda_\pi \nabla_\omega \gL_{\text{offline}}^{\pi}(\omega)$.
   \ENDWHILE
   \WHILE{in \emph{online training phase}}
   \FOR{each environment step}
   \STATE $\gD_{\text{online}} \leftarrow \gD_{\text{online}} \cup \{(s, a, s', r)\}$
   \ENDFOR
   \IF{step meets $D_{\theta}$ update frequency}
   \STATE Sample data from $\gD_{\text{offline}}\cup\gD_{\text{online}}.$
   \STATE Update $D_\theta$ by minimizing Eq.~(\ref{diffusionloss}).
   \STATE Sample positive samples from $\gD_{\text{online}}$ and generate negative samples with $D_\theta$ based on the positive samples.
   \STATE Update $\gE_{\phi_1}$ by minimizing Eq.~(\ref{eq_energy_1}).
   \STATE Sample positive samples from $\gD_{\text{online}}\cup \gD_{\text{offline}}$ and generate negative samples.
   \STATE Update $\gE_{\phi_2}$ by minimizing Eq.~(\ref{eq_energy_2}).
   \STATE Sample positive samples from $\gD_{\text{online}}\cup \gD_{\text{offline}}$ and generate negative samples.
   \STATE Update $\gE_{\phi_3}$ by minimizing Eq.~(\ref{eq_energy_3}).

   \STATE Score-based sampling with energy guidance and store them as $\gD^{\epsilon}, \gD_{\text{diffusion}} \leftarrow \gD\cup\gD^{\epsilon}$.
   \ENDIF
   \FOR{each gradient step}
   \STATE Sample data from $\gD_{\text{online}}\cup \gD_{\text{diffusion}}$.
   \STATE $\psi \leftarrow \psi - \lambda_Q\nabla_\psi \gL_{\text{online}}^Q(\psi)$
   \STATE $\omega\leftarrow \omega - \lambda_\pi \nabla_\omega \gL_{\text{online}}^{\pi}(\omega)$
   \ENDFOR
   \ENDWHILE
\end{algorithmic}
\end{algorithm}

\section{Experimental Details}
\label{appendix:experiment}
\subsection{Task Description}
\textbf{Adroit Manipulation.} Our empirical evaluation on Adroit manipulation contains 3 domains: pen, door, relocate, where the RL agent is required to solve dexterous manipulation tasks including rotating a pen in specific directions, opening a door, and moving a ball, respectively. The offline datasets are human-v1 datasets in D4RL \cite{d4rl} benchmark, which only contain a few successful non-markovian human demonstrations and thus is pretty difficult for most offline RL approaches to acquire reasonable pre-training performances.

\textbf{AntMaze Navigation.} Our tests on Antmaze navigation benchmark consists of four datasets, namely umaze-v2, medium-diverse-v2, medium-play-v2 and large-play-v2 from the D4RL \cite{d4rl}. The objective is for an ant to learn how to walk and navigate from the starting point to the destination in a maze environment, with only sparse rewards provided. This task poses a challenge for online RL algorithms to explore high-quality data effectively without the support of offline datasets or additional domain knowledge.

\textbf{MuJoCo Locomotion.} MuJoCo locomotion encompasses several standard locomotion tasks commonly utilized in RL research, such as Hopper, Halfcheetah, Walker2d. In each task, the RL agent is tasked with controlling a robot to achieve forward movement. The D4RL \cite{d4rl} benchmark provides four types of datasets with varying quality for each task: random-v2, medium-v2, medium-replay-v2, medium-expert-v2.

\subsection{Details on Comparison with Model-based Methods}\label{appx_visualize}
\begin{wrapfigure}[15]{R}{0.43 \textwidth}
\vspace{-2mm}
\centering
\includegraphics[width=0.37\textwidth]{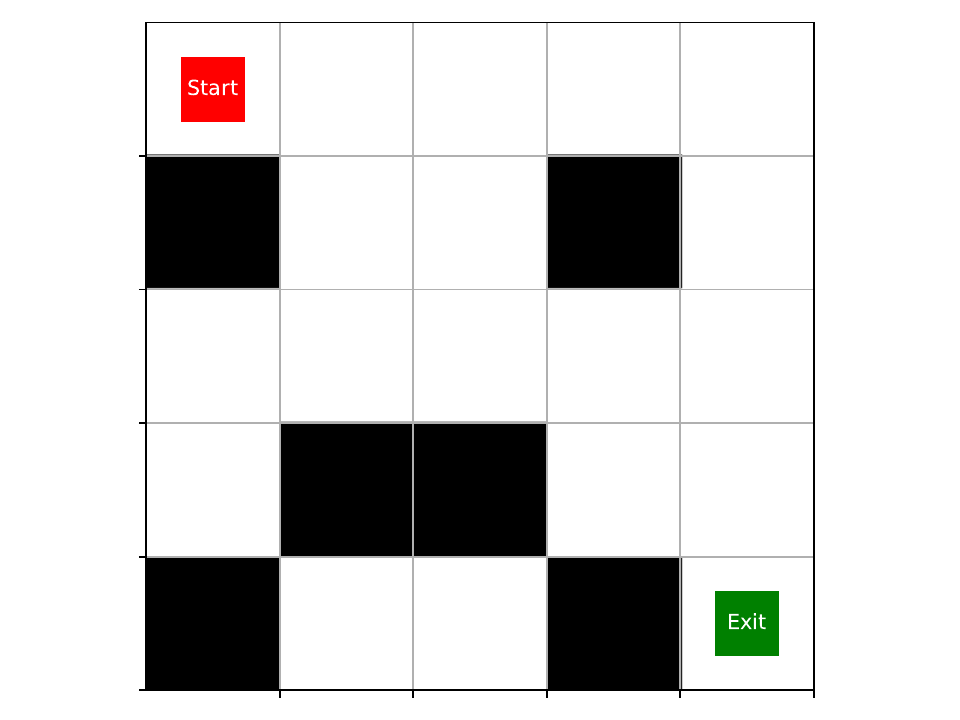}
\caption{Visualization of the Maze MDP.}
\label{fig:maze}
\end{wrapfigure}

We conduct the experiment on comparison of distribution differences between the real online distribution and the distribution generated by models a Maze environment, which is visualized in \cref{fig:maze}. With going up, down, left, and right as selectable actions, the agent starts at the upper left corner and the exit is at the lower right corner. The black blocks are occupied and inaccessible. The agent's goal is to reach the exit as quickly as possible, with every step the agent incurs a penalty or, when finally reaching the exit, a reward. 

We visualize the distribution difference in Fig.~\ref{fig:tabularhot}. To quantify this difference, we compute the distribution difference as the deviation in the number of times the agent visited each state, normalized by a factor of 1000. Table~\ref{tab:divergencevalue} explicitly illustrates the divergence values, underscoring that our EDIS effectively generates the intended distribution. In contrast, model-based approaches, particularly those utilizing MLP, struggle to accurately emulate the actual online distribution. While the distribution generated by the transition modeled by diffusion model shows better result, verifying diffusion model has better capability of modeling distributions.
It is noteworthy that the offline dataset distribution exhibits a substantial divergence from the online distribution, rendering direct replay of the offline dataset impractical. Despite the online buffer having a distribution comparable to the real one, its limited dataset size poses a challenge to achieving optimal sample efficiency.

\subsection{Details and Hyperparameters for EDIS}

We use the PyTorch implementation of Cal-QL and IQL from \href{https://github.com/tinkoff-ai/CORL}{https://github.com/tinkoff-ai/CORL}, and primarily followed the author’s recommended parameters~\cite{corl}.
The hyperparameters used in our EDIS module are detailed in the Tab.~\ref{tab:hyperpara}:

\begin{table}[h]
\centering
\caption{Hyperparameters and their values in EDIS}
\begin{tabular}{p{8cm}|p{3cm}}
\toprule
Hyperparameter & Value \\ \midrule
Network Type (Denoising) & Residual MLP \\
Denoising Network Depth & $6$ layers \\
Denoising Steps & $128$ steps \\
Denoising Network Learning Rate & $3 \times 10^{-4}$ \\
Denoising Network Hidden Dimension & $1024$ units \\
Denoising Network Batch Size & $256$ samples \\
Denoising Network Activation Function & ReLU \\
Denoising Network Optimizer & Adam \\
Learning Rate Schedule (Denoising Network) & Cosine Annealing \\
Training Epochs (Denoising Network) & $50,000$ epochs \\
Training Interval Environment Step (Denoising Network) & Every $10,000$ steps \\
\midrule
Energy Network Hidden Dimension & 256 units \\
Negative Samples (Energy Network Training) & $10$  \\
Energy Network Learning Rate & $1 \times 10^{-3}$ \\
Energy Network Activation Function & ReLU \\
Energy Network Optimizer & Adam \\\bottomrule
\end{tabular}
\label{tab:hyperpara}
\end{table}

\begin{figure}[th!]
    \centering
    
    \subfigure[Online (Real) Distribution]{
        \label{fig:online-real-time}
        \includegraphics[width=0.3\textwidth]{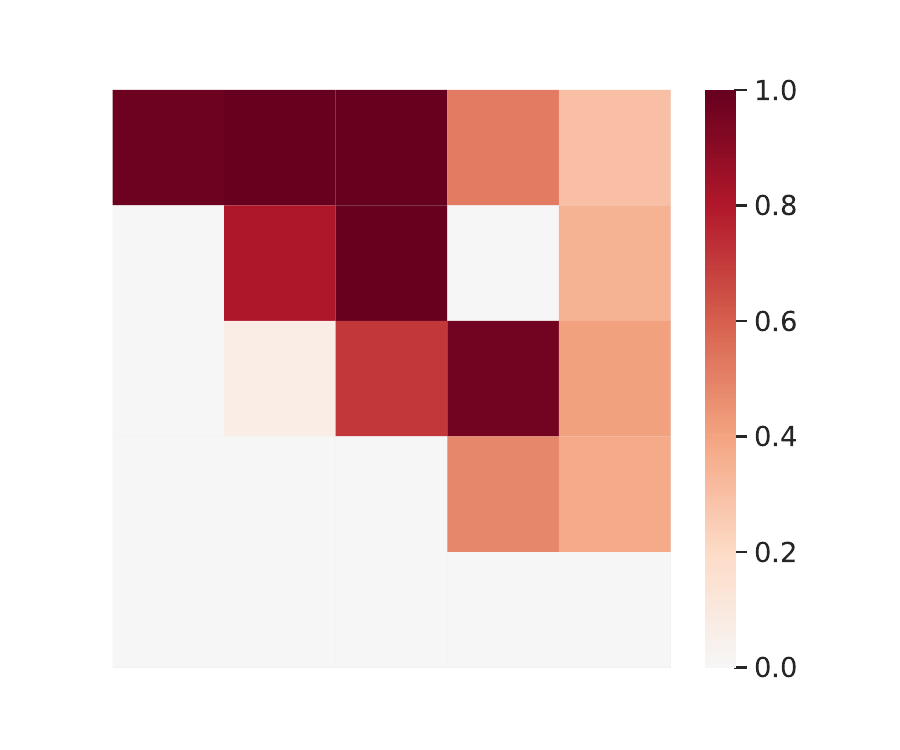}
    }
    \hfill
    \subfigure[MLP Transition Distribution]{
        \label{fig:mlp-conditional}
        \includegraphics[width=0.3\textwidth]{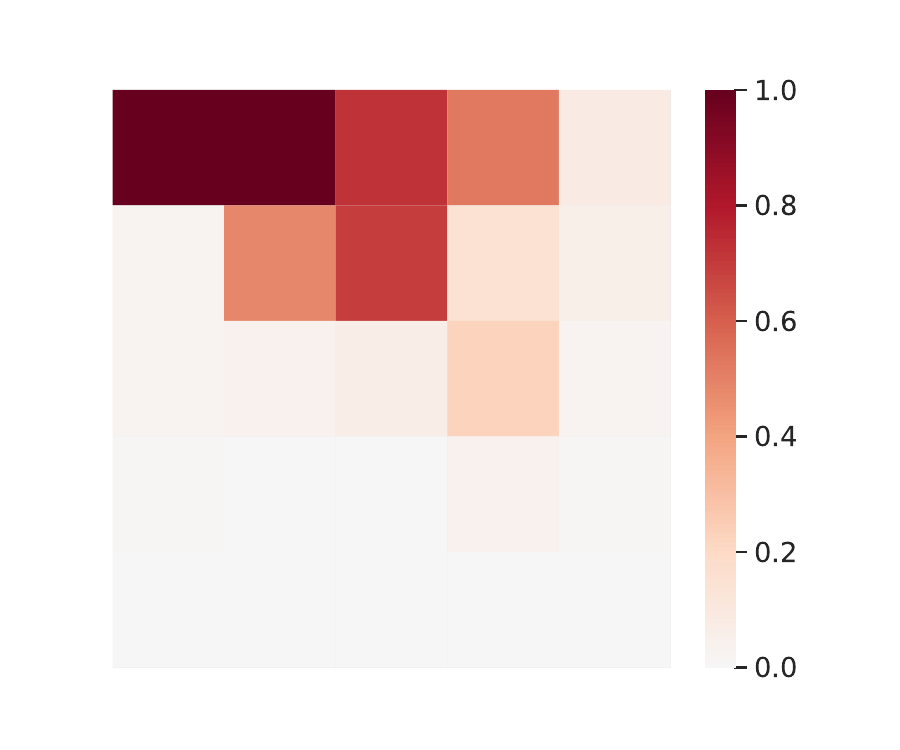}
    }
    \hfill
    \subfigure[Diffusion Distribution]{
        \label{fig:diffusion}
        \includegraphics[width=0.3\textwidth]{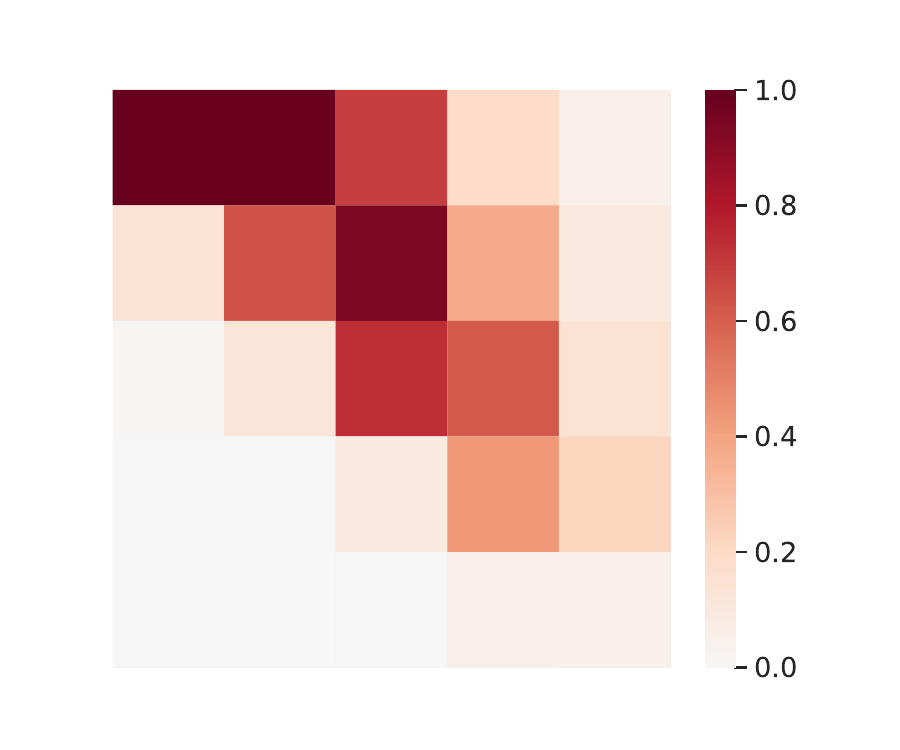}
    }
    \hfill
    \subfigure[Diffusion Transition Distribution]{ 
        \label{fig:diffusion-conditional}
        \includegraphics[width=0.3\textwidth]{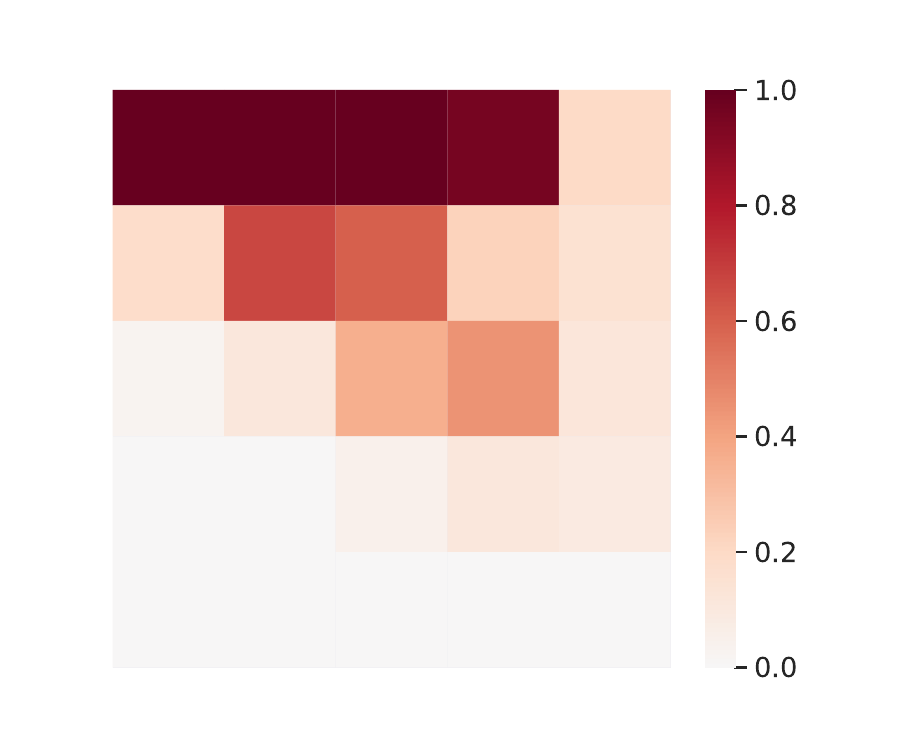}
    }
    \hfill
    \subfigure[Offline Distribution]{ 
        \label{fig:diffusion-offline}
        \includegraphics[width=0.3\textwidth]{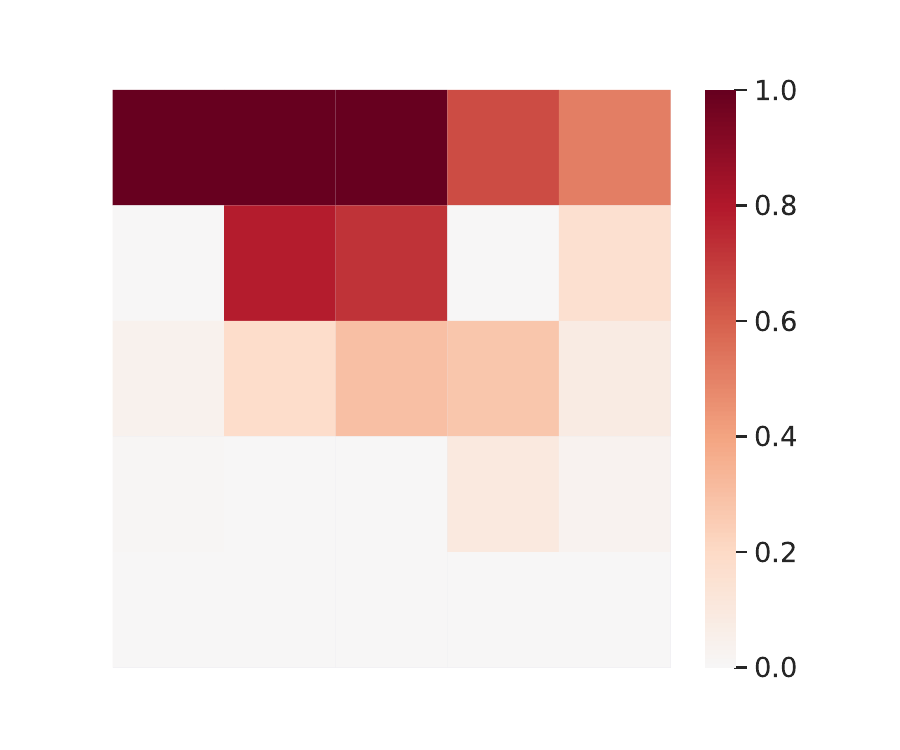}
    }
    \hfill
    \subfigure[Online Buffer Distribution]{ 
        \label{fig:diffusion-buffer}
        \includegraphics[width=0.3\textwidth]{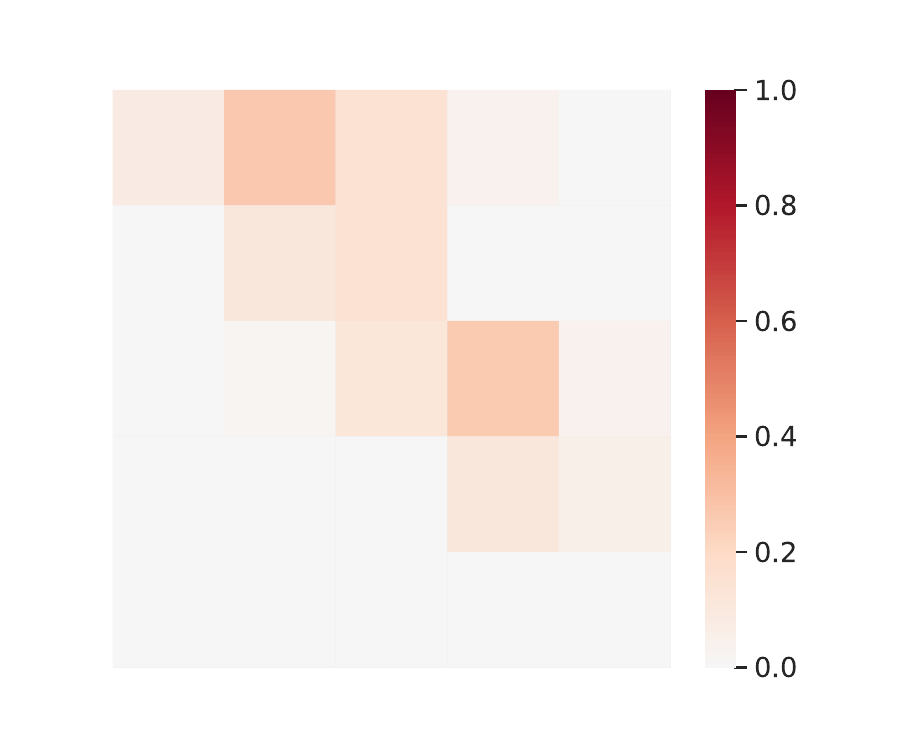}
    }

    \caption{Illustration of the distribution differences between the actual online distribution and the three generated distributions in a Maze environment.}
    \label{fig:tabularhot}
\end{figure}

\begin{table}
  \centering
  \label{tab:divergencevalue}
  \caption{Comparison of distribution divergence between the actual online distribution and generated data distributions in different methods}
  \vspace{2mm}
  \footnotesize
  \begin{tabular}{*{8}{c}}
    \hline
     & Offline & Interaction & MLP Transition & Diffusion Transition  & Diffusion  \\
    \hline
    Divergence & 0.29 & 0.18 &  0.44 & 0.24 & 0.30 \\
    \hline
  \end{tabular}
\end{table}

\subsection{Additional Ablation Study}\label{appx_ablation}

We conducted additional ablation studies on each energy function by omitting their guidance during the reverse-time \emph{SDE} in a total of six environments, including hopper-random-v2, halfcheetah-medium-replay-v2, antmaze-medium-play-v2, antmaze-medium-diverse-v2, and door-human-v1. It can be observed that each energy function plays a functional role and reduces the corresponding divergence value.

The state divergence here is the JS divergence between the two state distributions, which is defined as:
$$D_{\textnormal{JS}}(p,q)=\frac{1}{2}\left(\KL(p,(p+q)/2)+\KL(q,(p+q)/2)\right),$$
where $p$ and $q$ are two distributions and $\KL$ is
$$\KL(p,q)=\int\log(p(x)/q(x))dx.$$

If the two distributions are similar, JS divergence will approach zero, or it will approach one. To calculate it, we apply the techniques of GAN~\cite{gan}. We learn a discriminator by minimizing the following loss function

$$
V(D) = \E_{x\sim p}[\log D(x)]+\E_{z\sim q}[\log(1-D(z)],
$$

and JS divergence can be derived according to the following formula:
$$\max_DV(D)=-\log4 + 2D_{\textnormal{JS}}(p,q).$$

\begin{table*}[t]
\caption{Divergence comparisons for energy function ablation study}
\label{tab:additionalablationenergy}
\vskip 0.15in
\begin{center}
\footnotesize
\begin{tabular}{@{}lcc|cc|cccc@{}}
\toprule
\multirow{2}{*}[-0.5ex]{\textbf{Dataset}} & \multicolumn{2}{c}{State Divergence} & \multicolumn{2}{c}{Action Divergence} & \multicolumn{2}{c}{Transition Divergence} \\
\cmidrule(r){2-3} \cmidrule(l){4-5} \cmidrule(l){6-7}
& w/o energy & w/ energy & w/o energy & w/ energy & w/o energy & w/ energy \\
\midrule
hopper-radnom-v2 & 0.85$\pm$0.02 & \textbf{0.73$\pm$0.03} & 0.51$\pm$0.04 & \textbf{0.39$\pm$0.02} & 0.69$\pm$0.03 & \textbf{0.66$\pm$0.04} \\

halfcheetah-medium-replay-v2 & 0.98$\pm$0.01  & \textbf{0.50$\pm$0.05} & 0.43$\pm$0.15 & \textbf{0.31$\pm$0.03} & 1.01$\pm$0.08 & \textbf{0.88$\pm$0.07} \\

walker2d-medium-expert-v2 & 0.91$\pm$0.02  & \textbf{0.65$\pm$0.12} & 1.62$\pm$0.06 & \textbf{1.32$\pm$0.03} & 0.51$\pm$0.05 & \textbf{0.33$\pm$0.08} \\

antmaze-medium-play-v2 & 0.99$\pm$0.01 & \textbf{0.95$\pm$0.02} & 0.68$\pm$0.09 & \textbf{0.54$\pm$0.05} & 2.05$\pm$0.13 & \textbf{1.85$\pm$0.14} \\ 

antmaze-medium-diverse-v2 & 0.98$\pm$0.00 & \textbf{0.91$\pm$0.02} & 0.38$\pm$0.08 & \textbf{0.27$\pm$0.08} & 0.75$\pm$0.14 & \textbf{0.64$\pm$0.14} \\

door-human-v1 & 0.40$\pm$0.04 & \textbf{0.24$\pm$0.02} & 0.54$\pm$0.07 & \textbf{0.41$\pm$0.08} & 2.57$\pm$0.06 & \textbf{2.52$\pm$0.05} \\ 
\bottomrule
\end{tabular}
\end{center}
\vskip -0.1in
\end{table*}

\begin{figure*}
	\centering
	\makebox[\textwidth][c]{
	\includegraphics[width=0.8\paperwidth]{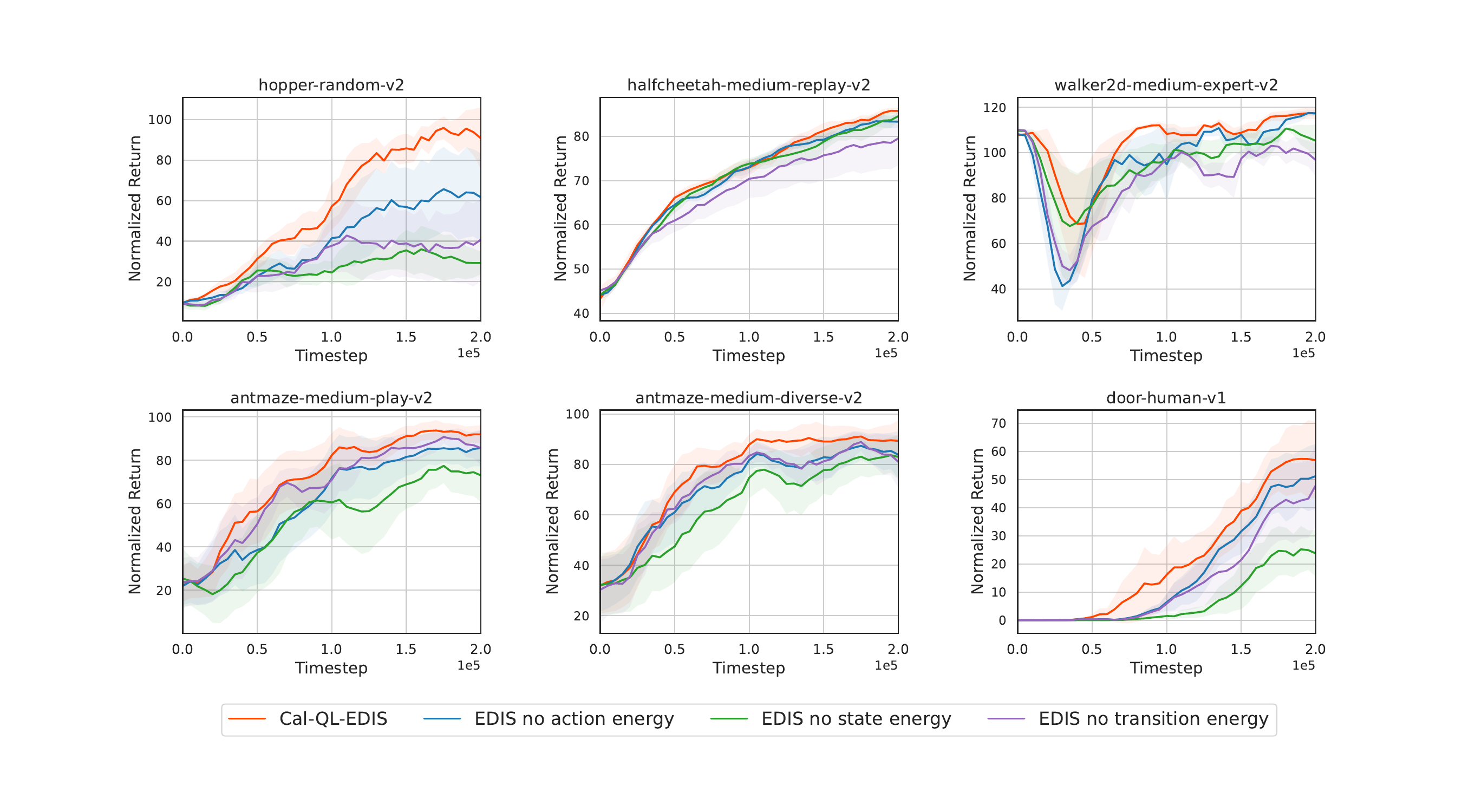}
	}
     \vspace{-2em}
	\caption{Energy Module Abaltion Study of EDIS}
	\label{fig:ablationfull}
\end{figure*}

The aggregated learning curves, summarizing outcomes across all environments from Tab.\ref{tab:performancemain}, are displayed in Fig~\ref{fig:calqlfull} and Fig.~\ref{fig:iqlfull}. They compare the performance of Cal-QL and IQL, both augmented with EDIS, against their respective base algorithms. 
For the evaluation of the normalized score in sparse reward domains, we computed a metric that represents the goal achievement rate for each method. For example, in the adroit environment door-human, we assessed the success rate of opening the door.
\begin{figure*}[htbp]
	\centering
	\makebox[\textwidth][c]{
	\includegraphics[width=0.83\paperwidth]{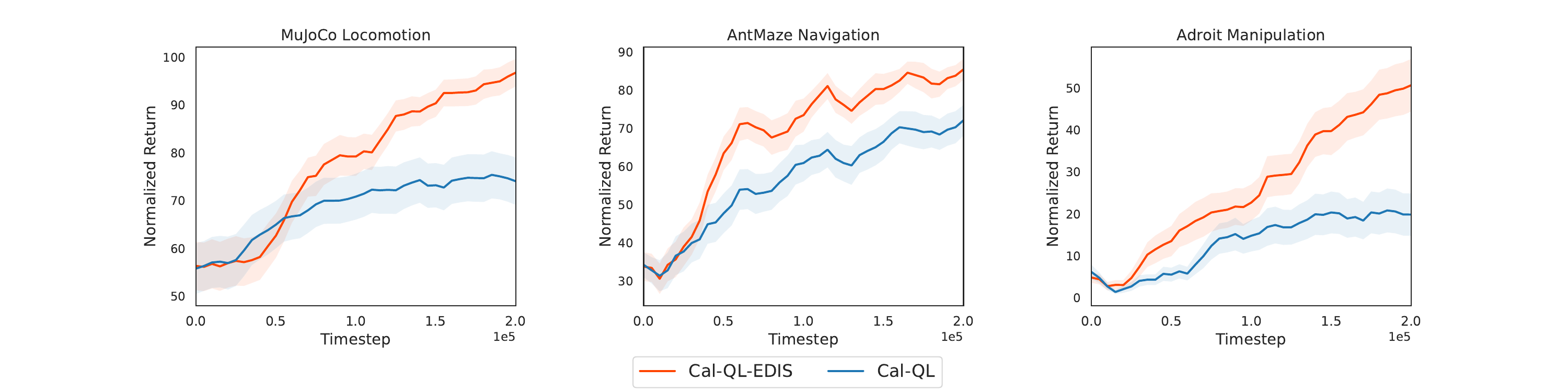}
	}
	\caption{Aggregated learning curves of Cal-QL and Cal-QL-EDIS on MuJoCo Locomotion, AntMaze Navigation and Adroit Manipulation tasks}
	\label{fig:calqlfull}
    \vspace{3mm}
\end{figure*}

\begin{figure*}[htbp]
	\centering
	\makebox[\textwidth][c]{
	\includegraphics[width=0.83\paperwidth]{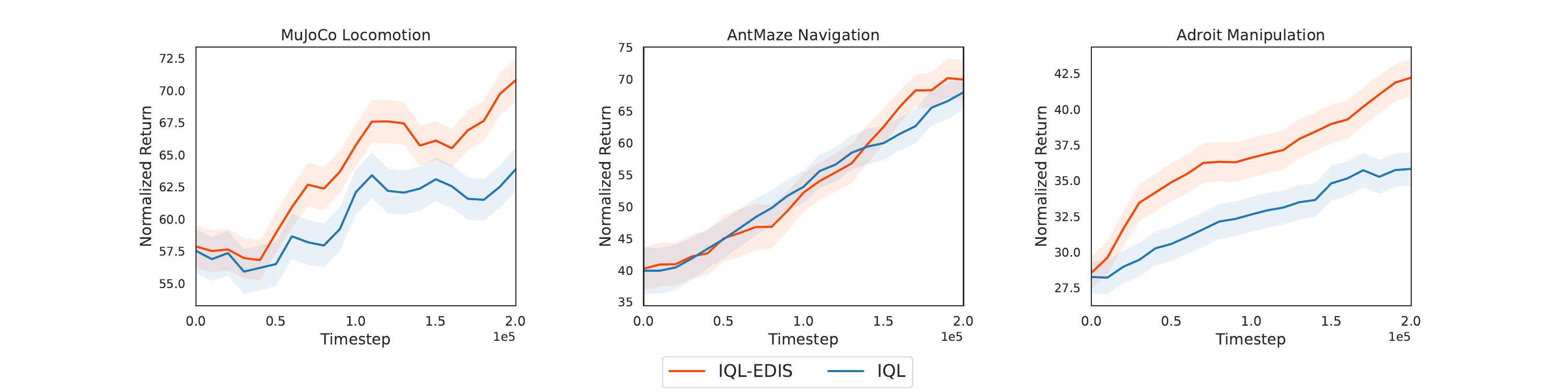}
	}
	\caption{Aggregated learning curves of IQL and IQL-EDIS on MuJoCo Locomotion, AntMaze Navigation and Adroit Manipulation tasks}
	\label{fig:iqlfull}
\end{figure*}

\subsection{Experiments on Visual Environment}
We conduct extra experiments on DMC, which is pixel-based state observations. Our EDIS operates on encoded representations, which are relatively low-dimensional. By integrating EDIS with CQL+SAC, a commonly used offline to online baseline, we observed significant improvements on walker-walk and cheetah-run tasks. In future work, we will explore even more complex environments. This expansion will help further refine our approach and validate its generalization.
\begin{table*}[ht!]
\centering
\small
\vspace{-4mm}
\caption{Experiments on the remaining antmaze environments.}
\label{table:computational_consumption}
    \begin{tabular}{lcccccc}
    \toprule
    \textbf{Environment} & Cal-QL & Cal-QL-EDIS & IQL & IQL-EDIS \\
    \midrule
    antmaze-umaze-diverse-v2 & 93.4$\pm$4.6 & \textbf{95.9$\pm$2.8} &  51.3$\pm$4.5& \textbf{66.7$\pm$5.0}\\
    antmaze-large-diverse-v2 & 42.3$\pm$2.2  & \textbf{57.1$\pm$ 2.8} & 45.0$\pm$8.7 & \textbf{52.1$\pm$2.6}\\
    \bottomrule
    \end{tabular}
\end{table*}




\subsection{Experiments on Sensitivity Analysis}

In our sensitivity analysis, we evaluated the effect of varying the diffusion model's denoising steps and the number of negative samples. We tested denoising steps within the range of 64 to 256 and found comparable performance across this spectrum, indicating that EDIS is robust to changes in this hyperparameter. On the other hand, reducing the number of negative samples from 10 to 5 resulted in a noticeable decline in performance. However, maintaining a minimum of 10 negative samples ensures that performance remains consistent and unaffected.

\begin{figure*}
    \centering
    \includegraphics[scale=0.3]{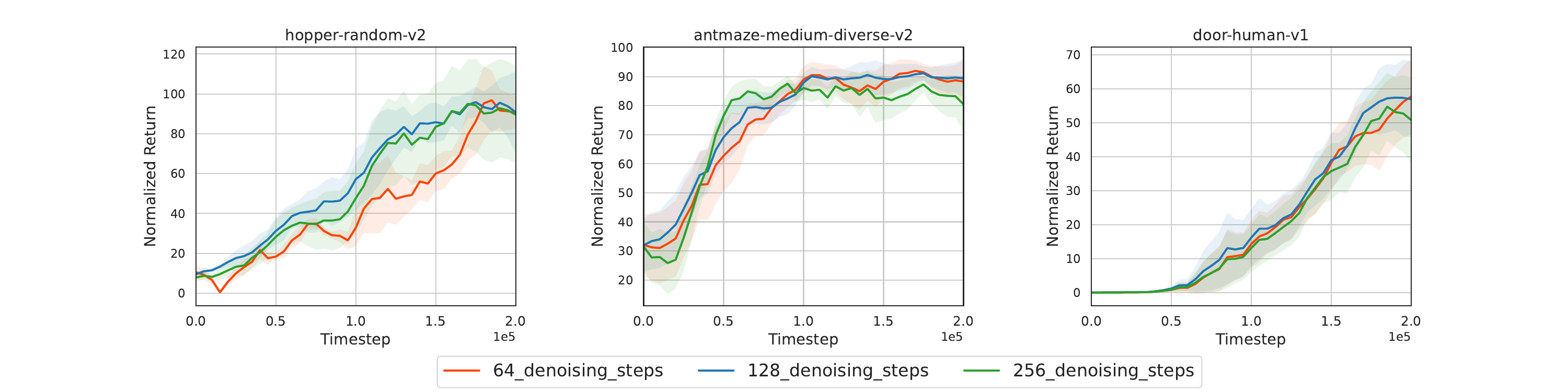}
    \caption{Sensitivity analysis on the training steps for the diffusion model.}\label{fig:analysis_denoising_steps}
\end{figure*}
\begin{figure*}
    \centering
    \includegraphics[scale=0.3]{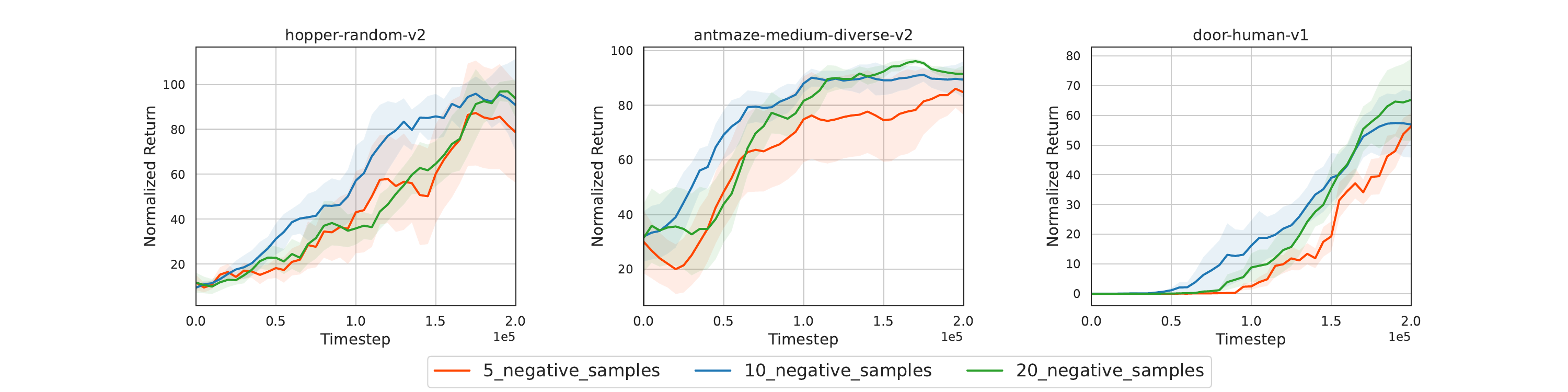}
    \caption{Sensitivity analysis on the number of negative samples.}\label{fig:analysis_negative_number}
\end{figure*}

\subsection{Computational Resources}
We train EDIS integrated with base algorithms on an NVIDIA RTX 4090, with approximately 4 hours required for 0.2M fine-tuning on MuJoCo Locomotion and Adroit Manipulation, while 6 hours for AntMaze Navigation. Specifically, the time cost for each time training our EDIS every time for $50000$ epochs is about $10$ minutes. The detailed computational consumption is shown in Tab.~\ref{table:computational_consumption}. As pointed out in~\cite{elucidate}, the sampling time is faster than prior diffusion designs, which is much shorter compared with training.
The introduction of the diffusion model does indeed entail an inevitable increase in computational and time costs. However, this tradeoff between improved performance and higher computational cost is a common consideration in diffusion model research.  As the field progresses, we anticipate that better solutions will emerge. In our future work, we aim to further refine and optimize the extra costs.

\begin{table*}[ht!]
\centering
\small
\vspace{-4mm}
\caption{Computational consumption of different algorithms.}
\label{table:additional_computational_consumption}
    \begin{tabular}{lcccc}
    \toprule
    \textbf{Algorithm} & \textbf{Online phase training time} & \textbf{Maximal GPU memory} \\
    \midrule
    Cal-QL (MuJoCo) & 3h  & 2GB \\
    Cal-QL-EDIS (MuJoCo) & 5h & 2GB \\
    Cal-QL (AntMaze) & 3h & 2GB \\
    Cal-QL-EDIS (AntMaze) & 5h & 3GB \\
    Cal-QL (Adroit) & 3h & 2GB \\
    Cal-QL-EDIS (Adroit) & 6h & 3GB \\
    IQL (MuJoCo) & 2h & 2GB \\
    IQL-EDIS (MuJoCo)& 3h & 3GB \\
    IQL (AntMaze) & 3h & 2GB \\
    IQL-EDIS (AntMaze) & 7h & 3GB \\
    IQL (Adroit) & 2h & 2GB \\
    IQL-EDIS (Adroit) & 5h & 3GB \\
    \bottomrule
    \end{tabular}
\end{table*}

\section{Additional Related Work}\label{appx_related_work}
\subsection{Offline-to-online Reinforcement Learning.}
Offline-to-online reinforcement learning methods are developed to mitigate the dichotomy between the costly exploration in online RL and the typically suboptimal performance of offline RL.
Previous existing methods do this in a variety of ways: incorporating the offline data into the replay buffer of online RL~\cite{learnfromdemo,leveragingdemo,hybridrl,deepqdemo}, imitating offline policy as auxiliary losses~\cite{rob1, rob2, rob3}, or extracting a high-level skill space for downstream online RL~\cite{relay, opal}. Although these methods improve the sample efficiency of online RL from scratch, they cannot eliminate the need to actively rollout poor policies for data collection~\cite{calql}. 

Another line of work, typically divide the learning process into two phases. Warming up the policy and value functions in the offline phase and use them as initialization in the online phase~\cite{adaptivebc, awac, oncefamily, calql, balanced, iql}. These approaches often employ offline RL methods based on policy constraints or pessimism~\cite{bcq, td3bc, CQL} in the offline phase. 
However, the conservatism conflicts with the online phase and may induce performance degradation.
Various strategies have been implemented to tackle this issue,
such as policy expansion \cite{pex}, value function calibration \cite{calql}, Q-ensemble techniques \cite{balanced}, and constraint methods \cite{awac, iql, proto}. 

Despite the advancements, there has been less focus on the crucial aspect of integrating useful data during the fine-tuning phase to enhance training efficiency. Standard practices includes enriching the replay buffer with offline data \cite{calql, pex}, adopting balanced sampling methods for managing both online and offline data sources \cite{balanced, efficientonline}, or building models to performance branch rollout~\cite{moto}. However, directly replaying the offline data causes a distribution shift, and adopting balanced sampling methods introduces large variance~\cite{dualdice} while rolling in the built model suffers from compounding error~\cite{MBPO}. 
In contrast, our work breaks new ground by proposing a diffusion-based generator specifically designed to generate useful samples by thoroughly leveraging prior knowledge in offline data.

\subsection{Diffusion Model in Reinforcement Learning}
Diffusion models have demonstrated exceptional capabilities in modeling distribution~\citep{SahariaCSLWDGLA22, NicholDRSMMSC22, NicholD21}. Within the reinforcement learning research, Diffuser~\citep{diffuser} uses a diffusion model as a trajectory generator and learns a separate return model to guide the reverse diffusion process toward samples of high-return trajectories.
The consequent work, Decision Diffuser~\citep{DecisionDiffuser} introduces conditional diffusion with reward or constraint guidance for decision-making tasks, further boosting Diffuser’s performance. Diffusion-QL~\cite{DiffusionQL} tracks the gradients of the actions sampled from the behavior diffusion policy to guide generated actions to high Q-value area. SfBC~\cite{sfbc} and Diffusion-QL both employ the technique of resampling actions from multiple behavior action candidates using predicted Q-values as sampling weights.
Expanding the application of diffusion models, SYNTHER~\cite{synther} focuses on leveraging the diffusion model for upsampling data in both online and offline reinforcement learning scenarios. DVF~\cite{conditiondiffusioncontrol} introduces a diffusion model to learn the transition dynamics, which can then be used to estimate the value function. Recently, several concurrent work also investigates generating samples with certain data distribution by diffusion models.  PolyGRAD~\cite{policyguidedtrajdiff} and PGD~\cite{policyguideddiffusion}  embed the policy for classifier-guided trajectory generation, aiming at on-policy world modeling. However, these studies do not focus on the offline-to-online setting, and they model the transition function rather than the distribution directly, which does not eliminate the issue of compounding error.
In the context of offline-to-online reinforcement learning, our research pioneers the utilization of diffusion-based models to actively generate valuable samples. This departure from passive reuse of offline data marks a novel approach, emphasizing the active role of diffusion models in sample generation for enhanced learning outcomes.

\end{document}